\theoremstyle{plain}
\newtheorem{thm}{Theorem}[section]
\newtheorem*{nnthm}{Theorem}
\newtheorem{proposition}[thm]{Proposition}
\newtheorem{lemma}[thm]{Lemma}
\theoremstyle{definition}
\newtheorem{definition}[thm]{Definition}
\theoremstyle{remark}
\newtheorem{example}[thm]{Example}
\newtheorem{assumption}[thm]{Assumption}
\newtheorem{step}{Step} 
\crefname{step}{Step}{Steps}
\theoremstyle{plain}
\def\Ber{{\rm Ber}}
\def\Beta{{\rm Beta}}
\def\Binomial{{\rm Binomial}}
\def\loss{{\rm loss}}
\def\opt{{\rm opt}}
\def\Pr{{\rm Pr}}
\def\wu{{\rm warm-up}}
\def\tilgP{{\widetilde{\mathcal{P}}}}
\def\tilp{{\tilde{p}}}
\def\tils{{\tilde{s}}}
\def\tiltheta{{\tilde{\theta}}}
\def\tilsigma{{\tilde{\sigma}}}
\def\bargA{{\bar{\mathcal{A}}}}
\def\barN{{\overline{N}}}
\def\barn{{\bar{n}}}
\def\hatt{{\hat{t}}}
\def\umu{{\underline{\mu}}}
\def\usigma{{\underline{\sigma}}}
\newcommand{\One}{\mathbbm{1}}
\newcommand{\dif}{\mathop{}\!\mathrm{d}}
\newcommand{\spx}[1]{%
  \if\relax\detokenize{#1}\relax
    \expandafter\@gobble
  \else
    \expandafter\@firstofone
  \fi
  {^{#1}}%
}
\newcommand\pd[3][]{\frac{\partial\spx{#1}#2}{\partial#3\spx{#1}}}
\newcommand{\od}[3][]{\frac{\dif\spx{#1}#2}{\dif#3\spx{#1}}}
\def\eqref#1{equation~\ref{#1}}
\def\1{\bm{1}}
\DeclareMathAlphabet{\mathsfit}{\encodingdefault}{\sfdefault}{m}{sl}
\SetMathAlphabet{\mathsfit}{bold}{\encodingdefault}{\sfdefault}{bx}{n}
\def\gA{{\mathcal{A}}}
\def\gF{{\mathcal{F}}}
\def\gI{{\mathcal{I}}}
\def\gN{{\mathcal{N}}}
\def\gP{{\mathcal{P}}}
\def\gQ{{\mathcal{Q}}}
\def\gX{{\mathcal{X}}}
\newcommand{\E}{\mathbb{E}}
\newcommand{\R}{\mathbb{R}}
\newcommand{\Var}{\mathrm{Var}}
\DeclareMathOperator*{\argmin}{arg\,min}
\DeclareMathOperator{\sign}{sign}
\algnewcommand{\IIf}[1]{\State\algorithmicif\ #1\ \algorithmicthen}
\algnewcommand{\ElseIIf}[1]{\algorithmicelse\ #1}
\algnewcommand{\EndIIf}{\unskip\ \algorithmicend\ \algorithmicif}
\algnewcommand{\IFor}[2]{\State#2\algorithmicfor\ #1\ \algorithmicdo}
\algnewcommand{\EndIFor}{\unskip\ \algorithmicend\ \algorithmicfor}
\setlist[itemize]{left=5pt,topsep=0pt}
\setlist[enumerate]{left=5pt,topsep=0pt}
\title{The Hidden Cost of Waiting for Accurate \\Predictions}
\author[ \ ,1]{Ali Shirali\thanks{This work was conducted during a visit at Harvard University.}}
\author[ \ ,2]{Ariel Procaccia\thanks{Alphabetical order.}}
\author[3]{Rediet Abebe$^{\dag,}$} 
\affil[1]{University of California, Berkeley} 
\affil[2]{Harvard University}
\affil[3]{ELLIS Institute, Max Planck Institute for Intelligent Systems, \& T\"ubingen AI Center}
\date{}
\begin{document}

\maketitle

\begin{abstract}
Algorithmic predictions are increasingly informing societal resource allocations by identifying individuals for targeting. Policymakers often build these systems with the assumption that by gathering more observations on individuals, they can improve predictive accuracy and, consequently, allocation efficiency. An overlooked yet consequential aspect of prediction-driven allocations is that of timing. The planner has to trade off relying on earlier and potentially noisier predictions to intervene before individuals experience undesirable outcomes, or they may wait to gather more observations to make more precise allocations. We examine this tension using a simple mathematical model, where the planner collects observations on individuals to improve predictions over time. We analyze both the \emph{ranking} induced by these predictions and optimal \emph{resource allocation}. We show that though individual prediction accuracy improves over time, counter-intuitively, the average ranking loss can worsen. As a result, the planner's ability to improve social welfare can decline. We identify inequality as a driving factor behind this phenomenon. Our findings provide a nuanced perspective and challenge the conventional wisdom that it is preferable to wait for more accurate predictions to ensure the most efficient allocations.

\end{abstract}

\section{Introduction}
\label{sec:intro}


Algorithmic predictions are playing a central role in societal resource allocation. Policymakers and organizations are increasingly turning to algorithmically-driven systems in contexts where resources are scarce in order to target resources with greater precision~\citep{eubanks2018automating,kleinberg2015prediction,kube2023fair,mashiat2024beyond,dews,toros2018prioritizing}. Underpinning this growing reliance on predictions is the assumption that by gathering more observations about individuals over time, we can improve prediction accuracy and, consequently, allocation efficiency. 

In practice, decisions around the \emph{timing} of predictions and how they inform allocations reveal consequential trade-offs that the planner must navigate. 
On the one hand, the planner may wait to collect extensive data to refine their predictions before intervening. On the other hand, they can intervene early by relying on coarser data and noisier predictions. The potential advantage of the latter is that, in a fixed-horizon setting where the planner wants to prevent individuals from experiencing undesirable outcomes, the ``window of opportunity'' for this undesirable
outcome to be realized closes. Furthermore, the underlying population changes with time, as those at greatest risk of experiencing such outcomes are more likely to ``fail out'' of the population early if they
do not receive resources~\citep{bierman2002implementation,abebe2020subsidy,salganik2020measuring}. These factors pull in different directions, and it is not immediately apparent which factor dominates. 

We examine this tension using a simple, versatile model where the planner predicts and intervenes on a population over time. Modeling a generic resource allocation problem, we assume the planner has a fixed budget of resources to prevent individuals from experiencing undesirable outcomes, such as eviction, job loss, poor health, or dropping out of school~\citep{mashiat2024beyond,dews,zezulka2024fair,chan2012optimizing,subbhuraam2021predictive,faria2017getting,mac2019efficacy}. At each time step, the planner collects observations about individuals to improve their estimate of their underlying failure probability. The planner then uses the rankings induced by these estimates to allocate resources. Specifically, we ask:
\begin{enumerate}
\item \textit{Ranking:} How does the ranking loss change as the planner collects more data, but some individuals fail out of the population?
\item \textit{Allocation:} For a given instance of this problem, what is the optimal time to allocate resources? When is early intervention justified?
\end{enumerate}
We present our results for two allocation problems: First, in a stylized setting, the planner is tasked with allocating all resources at once but can choose when to do so. We then use this as a building block to study the case where the planner can allocate resources over time. For both the ranking and allocation problems, we examine the role of inequality---as measured by the variance in the underlying failure probabilities---and surface it as a driving factor behind the optimal solutions. 

We show that although individual prediction accuracy improves with more observations, counter-intuitively, this does not translate into improvements in the average ranking loss. To observe this, we decompose ranking loss into two counteracting effects: one due to improvements in prediction from additional observations and the second due to the change in population as individuals fail out of the active pool. We identify fundamental statistics that drive these two effects. We show that the change-in-population effect negatively impacts ranking performance and that this effect grows at least proportionally to the variance in the failure probabilities. 

We then address both instantiations of the resource allocation problem. For the setting where the planner must allocate all resources simultaneously, we derive an upper bound on the optimal allocation time. We explicitly identify the roles of inequality and budget in expediting or deferring the optimal allocation time. We show that with high inequality or a large budget, allocating resources earlier results in greater social welfare. For the setting where the planner can allocate the budget over time, we design a provably optimal algorithm with a running time independent of the number of individuals. Using this algorithm, we then demonstrate that the optimal solution can concentrate the allocation around any time-point~$t$, and it behaves consistently with our findings on one-time allocation.

Our results provide a nuanced perspective on the role of timing in prediction-driven allocations. In settings where the planner observes and intervenes on a population over time, they must balance the desire for more accurate predictions with the necessity for timely interventions. In the presence of significant inequality within the population, more accurate predictions do not necessarily lead to better ranking or improved allocations, providing a potential justification for early resource allocation.

For an extensive discussion on the related work and adjacent problem settings, refer to \cref{sec:related}.

\section{Model and preliminaries}
\label{sec:model}

In this section, we first introduce the notations necessary to present the basics of our model. We provide further notation, as needed, throughout the paper and summarize the key notations in \cref{tab:glossary}. 

We model the population over which the planner acts. We assume there is an initial population of $N$~individuals and consider a finite horizon setting where $t \in [1, T]$.\footnote{Though we primarily consider the finite horizon setting, the key insights hold in the infinite horizon setting.} Each individual~$i$ has some failure probability $p_i \in [0, 1]$, which captures their likelihood of failing out of the population between time steps. In the absence of intervention, this failure probability remains the same across time, and failure events of different individuals are independent.\footnote{If the problem has further structure, such as when students share a teacher, this assumption may not hold.} Once an individual fails, they are no longer in the active pool of the population. We denote this active pool at time $t$ by~$\gA^t$.

\paragraph{Prediction and ranking.}
At each time step~$t$, the planner observes a signal $o_i^t$ from each active individual~$i \in \gA^t$. These signals are analogous to observing loan or rental payments in housing and credit scoring, exam scores in education, and medical check-ups and tests in clinical settings. In our working model, these signals are drawn independently from a Bernoulli process
\begin{equation}
\label{eq:obs_model}
    o_i^t \sim \Ber(\tilp_i)
    \,,
\end{equation}
where~$\tilp_i$ is a function of~$p_i$. We drop the explicit dependence on~$p_i$ from~$\tilp(p_i)$ for ease of notation. 

We assume that $\tilp$~is an increasing function: The more likely an individual is to fail, the more likely we are to observe signals indicating this possibility. An individual~$i$ will leave $\tilp_i / p_i$~positive observations in expectation. Thus, a larger~$\tilp$ results in more observations from individuals before they fail out. 

The planner is interested in the predictions as a means to rank and prioritize individuals. Given observations drawn from \cref{eq:obs_model} and a prior over the failure probability, we will examine how the ranking risk of the Bayes' optimal ranking, measured on the active population, changes over time. This is the subject of \cref{sec:ranking}. 

\paragraph{Targeting and allocation.}

The planner has a budget~$B$ of resources, such as housing vouchers, unemployment insurance, and preventive health screenings, to allocate to individuals in the active pool. We assume that assigning a resource to an individual has a fixed unit cost. We consider two common 
instantiations of allocation problems: We first study the \emph{one-time allocation problem}, where the planner is tasked with finding the optimal time~$t$ to allocate $B$. We then consider an \emph{over-time allocation problem}, where the planner aims to find the optimal distribution of the budget across time.

To measure the efficiency of allocations, we define by~$u^t(p)$ the expected utility the planner gets from intervening on an individual with failure probability~$p$. The planner's primary objective is to maximize the utility over all treated individuals when there is no spillover effect. We assume that $u^t(p)$ is non-increasing in~$t$ and non-decreasing in~$p$, which captures the idea that the planner does not get more utility from intervening on the same individual later or from intervening on a better-off individual. We further assume that the utility function is concave in~$p$, reflecting diminishing returns. Without loss of generality, we let~$u^t(0) = 0$. 

The utility framework we define above allows for significant generality. In particular, we note that for any individual~$i$, the utility the planner gets from intervening on~$i$ may depend on~$i$'s unobserved characteristics, but this does not affect our results. Further, we make basic assumptions on~$u^t$, allowing us to prove results in a general setting. To build intuition, at various points, we use a specific example of utilities, defined below. 

\begin{example}[Fully effective treatment]
\label{ex:fully_effective_treatment}
Suppose that once an individual receives a resource, they do not fail out of the population at subsequent time steps. 
The planner's utility then equals 
\begin{equation}
\label{eq:fully_effective_treatment_utility}
    u^t(p) = 1 - (1 - p)^{T - t}
    \,,
\end{equation}
i.e., allocating to individual has the same expected utility as the probability that this individual would have failed by time~$T$ without this intervention. 
\end{example}

For the allocation problems we consider, the planner uses the ranking induced by the predicted failure rates. We find that these rankings, in and of themselves, are interesting objects of study as they reveal complex trade-offs over time.  
\section{Ranking over time}
\label{sec:ranking}

In this section, we exclusively focus on ranking, which will form the basis for our main results on allocation presented in subsequent sections. We present our main ranking-related result separately both because it helps build intuition for the delicate tradeoff a planner needs to consider for the allocation problems, but also because it shows that these tradeoffs are present in other interventions that leverage risk-based rankings.   

Informally, we show that even though individual predictions may improve as more observations become available over time, ranking quality can in fact decline. We demonstrate that inequality---as defined by the variance in the failure probabilities---characterizes such settings. The intuition is as follows: Although individual predictions improve over time, in high inequality settings, individuals with high~$p$ (which were easier to distinguish from low~$p$ individuals based on coarse information) are more likely to drop out earlier, leaving behind an active population that is harder to rank. 

\paragraph{Ranking risk.}
We define ranking quality using ranking risk~$R^t$ at time~$t$. We consider a common notion of ranking risk based on pairwise ranking loss~\citep{mohri2018foundations}. Given two individuals at time~$t$, the pairwise ranking problem predicts which individual has the higher failure probability based on observations up to~$t$. We assign a loss of zero if the prediction is correct and one otherwise. 

We denote the number of positive observations from an active individual $i$ up to time~$t$ by 
\begin{equation}
\label{eq:def_y}
    y_i^t \coloneqq \sum_{t' \in [t]} o_i^{t'}
    \,.
\end{equation}
Then ranking individuals based on their~$y^t$ is Bayes optimal in terms of the zero-one pairwise ranking loss.\footnote{Refer to \cref{prop:bayes_opt_ranking_of_ber} for a proof.} Formally, the zero-one risk of optimal (pairwise) ranking at time~$t$ is
\begin{equation}
\label{eq:ideal_ranking_risk}
    R^t = \Pr^t_{i,j}\big(y_j^t < y_i^t \mid p_j \ge p_i\big)
    \,.
\end{equation}
Here, $\Pr_{i,j}^t(\cdot)$ is the probability when we choose two active individuals from~$\gA^t$ independently. 

\paragraph{Main result.}
We now present our main result on the dynamics of the optimal ranking risk. This result relies on certain approximations of the ranking risk, which we will detail later in this section.

\begin{theoremEnd}{thm}
\label{thm:ranking_risk}
The ranking risk of the optimal ranking at $t$ can only improve in the next time step if 
\begin{equation}
\label{eq:lb_deriv_ranking_risk} 
    \underbrace{\frac{\Var^t[p]}{(1-\E^t[p])^2} 
    \;-\; O\big(\frac{1}{\sqrt{t}}\big)}_{\text{\rm change-in-population effect}} 
    \;\;\; < \underbrace{\frac{C_{\rm approx.}}{t}}_{\text{\rm gain in observations}}
    \,,
\end{equation}
where we assume that the inverse of $\tilp(\cdot)$ is $O(1)$-Lipschitz. The proof presents the exact form of $O(1)$ and $C_{\rm approx.}$ which we skip here for clarity of exposition.
\end{theoremEnd}
\begin{proofEnd}
    Using the step function approximation, the exponential multiplier in \cref{eq:deriv_ranking_risk} is only non-zero if 
    \begin{equation*}
        \frac{|\tilp_j - \tilp_i|}{\tilsigma_{ij}} \le \sqrt{\frac{2 \ln(1/\alpha)}{t}} 
        \,.
    \end{equation*}
    This implies the following lower bound on~$\Delta R^t$:
    \begin{equation*}
        \Delta R^t \ge \sqrt{\frac{1}{4\pi \ln(1/\alpha)}} \cdot \E_{i,j}^t\Big[\frac{(1-p_i)(1-p_j)}{(1-\mu^t)^2} - 1 \Big| \frac{|\tilp_j - \tilp_i|}{\tilsigma_{ij}} \le \sqrt{\frac{2 \ln(1/\alpha)}{t}} \Big] 
        - \sqrt{\frac{\ln(1/\alpha)}{4 \pi t^2}}
        \,.
    \end{equation*}
    Suppose $\tilp^{-1}(\cdot)$ is $L^{-1}$-Lipschitz continuous. 
    Using this and $\tilsigma_{ij}^2 \le 1/2$, we can further bound $\Delta R^t$ by
    \begin{align*}
        \Delta R^t &\ge \sqrt{\frac{1}{4 \pi \ln(1/\alpha)}} \cdot \E^t\Big[\frac{(1-p)(1-p-\frac{L^{-1}}{1 - 2\epsilon} \sqrt{\frac{2 \ln(1/\alpha)}{t}} )}{(1-\mu^t)^2} - 1\Big] 
        - \sqrt{\frac{\ln(1/\alpha)}{4 \pi t^2}} \\
        &= \sqrt{\frac{1}{4 \pi \ln(1/\alpha)}} \cdot \frac{\Var^t[p] - (1-\mu^t) \frac{L^{-1}}{1 - 2\epsilon} \sqrt{\frac{2 \ln(1/\alpha)}{t}}}{(1-\mu^t)^2} - \sqrt{\frac{\ln(1/\alpha)}{4 \pi t^2}}
        \,.
    \end{align*}
    The ranking improves over time when $\Delta R^t < 0$. In this case, it is necessary for the lower bound presented above to be negative as well. This completes the proof with $C_{\rm approx.} = \ln(1/\alpha)$. 
\end{proofEnd}
The necessary condition stated in \cref{eq:lb_deriv_ranking_risk} highlights two key insights into when ranking quality decreases over time. First, as $t$ increases, the left-hand side---which measures the change-in-population effect---increases, whereas the right-hand side---which measures the gain in observations---decreases. Second, as either the mean or variance in failure probability increases, it is again harder to satisfy \cref{eq:lb_deriv_ranking_risk}. Combined, these insights highlight that later rankings are only preferred under conditions where inequality and average failure probabilities are low. 

We next discuss the main steps towards proving \cref{thm:ranking_risk}.

\paragraph{Approximating ranking risk.} 
To approximate ranking risk, we recall observations are drawn from~$\Ber(\tilp)$. \cref{eq:def_y} then implies $y^t \sim \Binomial(t, \tilp)$.
For analytic tractability, we approximate this with the normal distribution~$\gN$, with mean $t\cdot \tilp$ and variance $t\cdot \tilsigma^2$. Here, $\tilsigma^2 \coloneqq \tilp \cdot (1 - \tilp)$.
The independence of the draws in \cref{eq:ideal_ranking_risk} implies
\begin{equation*}
    \frac{y_j^t - y_i^t}{t} \,\big|\, (\tilp_i, \tilp_j) \,\sim\, \gN\big(\tilp_j - \tilp_i, \frac{\tilsigma_{ij}^2}{t}\big)
    \,,
\end{equation*}
where $\tilsigma_{ij}^2 \coloneqq \tilsigma_i^2 + \tilsigma_j^2$. 
Denoting the cumulative distribution function of the standard normal distribution with~$\Phi(\cdot)$, it is then straightforward to simplify~$R^t$ as
\begin{equation}
\label{eq:clt_ranking_risk}
    R^t \approx \E^t_{i,j}\Big[\Phi\big(-\frac{|\tilp_j - \tilp_i|}{\tilsigma_{ij}}\sqrt{t}\big)\Big]
    \,.
\end{equation}
Note that the dependence of~$R^t$ on~$t$ appears in two places: inside $\Phi(\cdot)$, which captures the effect of gathering more observations over time, and~$\E^t_{i,j}$, which models a change-in-population effect. 

\paragraph{Decomposing step-change in ranking risk.}
We denote the change in~$R^t$ in one time step by $\Delta R^t \coloneqq R^{t+1} - R^t$. Using \cref{eq:clt_ranking_risk}, we can decompose~$\Delta R^t$ into two parts: 
\begin{itemize}
    \item The change in~$\Phi$ after one step, which we approximate by taking the derivative with respect to~$t$. 

    \item The change of~$\E_{i,j}^t$ after one step. To compute this, denote the distribution in failure probability of the population~$\gA^t$ by~$\gP^t$. Since an active individual at~$t$ with a failure probability of~$p$ survives until~$t+1$ with a probability of~$1-p$, we can write
    \begin{equation}
    \label{eq:update_p}
        \gP^{t+1}(p) = \big(\frac{1 - p}{1 - \mu^t}\big) \, \gP^t(p)
        \,,
    \end{equation}
    where $\mu^t \coloneqq \E^t[p]$. Using this, we can compute the expected change in population as follows: 
    \begin{equation*}
        \E_{i,j}^{t+1}[\cdot] = \frac{\E_{i,j}^t\big[(1 - p_i) \, (1 - p_j) \, (\cdot)\big]}{(1-\mu^t)^2}
        \,.
    \end{equation*}
\end{itemize}
Putting these two parts together, and approximating~$\Phi(-x)$ with $\frac{1}{\sqrt{2\pi} x} \exp(-x^2/2)$, the change in ranking risk after one time step is 
\begin{equation}
\label{eq:deriv_ranking_risk}
    \Delta R^t = \E^t_{i,j} \left[ { \frac{1}{\sqrt{2 \pi t}}
    \exp \Big( {\frac{-(\tilp_j - \tilp_i)^2}{2 \tilsigma_{ij}^2} t } \Big) \Big\{
    \frac{\tilsigma_{ij}}{|\tilp_j - \tilp_i|} \big[\frac{(1-p_i)(1-p_j)}{(1-\mu^t)^2} - 1\big]
    -\frac{|\tilp_j - \tilp_i|}{2 \tilsigma_{ij}}
    \Big\}
    } \right]
    .
\end{equation}
Intuitively, we can think of the exponential multiplier as a kernel enforcing similarity of $\tilp_i$ and $\tilp_j$. Such a filter pushes~$\Delta R^t$ towards positive values, which means that as the filter becomes stricter for larger~$t$, the effect of losing vulnerable individuals dominates the gain from collecting more observations on the remaining population. We make a mild assumption here that there exists a constant~$\alpha \in (0, 1)$ bounded away from zero such that plugging $\exp(-x^2) \approx \One\{\exp(-x^2) \ge \alpha\}$ into \cref{eq:deriv_ranking_risk} does not change the value of~$\Delta R^t$. Then $C_{\rm approx.} \coloneqq \ln(1/\alpha)$. The rest of the proof of \cref{thm:ranking_risk} is straightforward, which we leave in the appendix.

\section{One-time allocation}
\label{sec:one-time-allocation}

In the previous section, we examined how ranking metrics evolve over time. In this section, we explore how these dynamics impact the allocation problem driven by the ranking. 

The allocation policy varies depending on when resources become available and any spending restrictions in place. We assume a fixed budget for a specific population and explore two variations: In the general case, the budget can be allocated flexibly over time. For example, funding for a student cohort may be distributed across multiple years before graduation. In contrast, in the special case, the budget must be spent all at once, often due to administrative constraints, and the question becomes: when is the optimal time to allocate it? For example, cancer screenings may need to occur at a certain age, or school empowerment programs may be restricted to a specific grade due to the high cost of developing materials and training staff for multiple grades. 

We first focus on the one-time allocation problem. We do so not only because it may represent a real-world constrained allocation problem but also because it provides valuable intuition for the more general variation of the problem. Importantly, strong theoretical results from this analysis offer key insights into the timing, budget, and inequality dynamics that form the core of our contribution. Informally, our main result states: 

\begin{nnthm}[Informal]
\label{thm:informal_one_time_allocation}
For the one-time allocation problem, there exists a $t^*$ that is a fraction of the horizon~$T$, such that the planner can best optimize utility by allocating~$B$ before~$t^*$. This~$t^*$ decreases, favoring earlier allocations, when inequality in the failure probabilities is high or the budget is large. 
\end{nnthm}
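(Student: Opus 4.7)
The plan is to define $U(t)$ as the expected utility of the Bayes-optimal one-time allocation at time $t$, exhibit an explicit decay-in-$t$ upper bound $\bar U(t)$ driven by the same change-in-population effect identified in \cref{sec:ranking}, and couple it to a simple lower bound $\underline U(1)$ for early allocation. The target $t^*$ is then the smallest time at which $\bar U(t) \le \underline U(1)$, so that every $t > t^*$ is strictly suboptimal.

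First, I would establish an oracle upper bound. Since Bayes-optimal ranking from noisy observations cannot outperform a ranking that knows the true failure probabilities, $U(t) \le \E\bigl[\sum_{i \in S^*_t} u^t(p_i)\bigr]$, where $S^*_t$ selects the true top-$\min(B, |\gA^t|)$ subset of $\gA^t$. Concavity of $u^t$ in $p$ together with the non-increasing-in-$t$ assumption then gives $U(t) \le \min(B, \E|\gA^t|) \cdot u^t(\bar p^t_{\mathrm{top}})$, where $\bar p^t_{\mathrm{top}}$ denotes the mean true $p$ inside this oracle set.

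Second, I would track population statistics via \cref{eq:update_p}. This immediately yields $\E|\gA^{t+1}| = (1-\mu^t)\,\E|\gA^t|$, and a short computation gives $\mu^{t+1} = \mu^t - \Var^t[p]/(1-\mu^t)$, so that the variance directly governs how fast the mean of the active pool drops. An analogous argument shows $\bar p^t_{\mathrm{top}}$ also decreases, since the individuals in $S^*_t$ are by construction those most likely to leave the pool. Chaining these recursions produces a closed-form upper bound on $u^t(\bar p^t_{\mathrm{top}})$ in terms of $t$, $\mu^1$, and $\Var^1[p]$. A matching lower bound $U(1) \ge B \cdot \E^1[u^1(p)]$ comes essentially for free, since even uniformly random targeting already attains this value and Bayes-optimal ranking can only improve upon it.

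Finally, equating the two bounds produces $t^*$ as an explicit fraction of $T$. From the closed form one reads off that (i) increasing $B$ forces $\min(B, \E|\gA^t|)$ to bind at an earlier time, pushing $t^*$ down; and (ii) increasing $\Var^1[p]$ accelerates both the drop in $\mu^t$ and the drop in $\bar p^t_{\mathrm{top}}$, again pushing $t^*$ down. The main obstacle is Step~2: tracking $\bar p^t_{\mathrm{top}}$ (rather than just the moment $\mu^t$) through the nonlinear reweighting $\gP^{t+1}(p) \propto (1-p)\,\gP^t(p)$ does not admit a clean moment recursion in general. My plan is either to control it by a coupling argument against an order-statistic proxy, or to bound $u^t(\bar p^t_{\mathrm{top}})$ directly using only $\mu^t$ and $\Var^t[p]$ via Jensen---trading tightness for an interpretable final expression in which the roles of $B$ and $\Var^1[p]$ are manifest.
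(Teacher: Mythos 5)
Your overall strategy---sandwiching the problem between an oracle upper bound on $U(t)$ and a crude lower bound on $U(1)$, and defining $t^*$ as the crossing time---is genuinely different from the paper's. The paper (\cref{thm:one_tim_alloc_general,thm:one_tim_alloc_special}) never compares globally against $t=1$; it bounds the one-step change $\Delta W^t = W^{t+1}-W^t$ of the \emph{actual} Bayes-optimal allocation by tracking how the treatment cutoff $l^t$ in the $y^t$-ranking can move between consecutive steps, with the budget entering through \cref{lem:upper_bound_k} as $(\gamma+1)\ln(N/B)$ and inequality through the $G$-decaying prior. Your individual bounding steps are sound as far as they go: the oracle dominates any observation-based rule, the recursion $\mu^{t+1}=\mu^t-\Var^t[p]/(1-\mu^t)$ follows from \cref{eq:update_p}, and random targeting is dominated by the $y^1$-ranking by \cref{lem:higher_k_higher_util_of_ber}.

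The genuine gap is that your two bounds push the crossing time in the \emph{wrong} direction with respect to inequality, so the second half of the statement---that $t^*$ decreases as inequality grows---would not follow. Three of the four places where the spread of the prior enters work against you: (i) the lower bound $B\,\E^1[u^1(p)]$ is, by concavity of $u^1$ and Jensen, \emph{decreased} by a mean-preserving spread, precisely because random targeting discards the fact that high inequality makes even a single noisy observation informative; (ii) the oracle's top-quantile mean $\bar p^{\,t}_{\mathrm{top}}$ \emph{increases} with the spread of $\gP^t$, inflating $\bar U(t)$; (iii) bounding $u^t(\bar p^{\,t}_{\mathrm{top}})$ through $\mu^t$ and $\Var^t[p]$ alone cannot work, since a conditional top-quantile mean requires tail or shape control (such as $G$-decay), and the moment recursion does not close without tracking higher moments. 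Only the accelerated decay of $\mu^t$ helps, and nothing in the proposal shows it dominates the other three effects. Relatedly, because you compare a generous bound at $t$ with a stingy bound at $t=1$, the resulting crossing time is an overestimate, and there is no argument that it lands at a constant fraction of $T$; the paper obtains $T/2 + O(\ln(N/B))$ only by analyzing the actual allocation at both ends of each step. Rescuing your route would require an early-time lower bound that scales with the informativeness of observations under high variance---which reintroduces exactly the analysis of the Bayes-optimal $y^t$-ranking that the oracle/random-targeting sandwich was designed to avoid.
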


To explicitly state this theorem, we need to introduce additional terminology, concepts, and definitions. Formally, given a budget~$B$, at a chosen time~$t$, the planner ranks active individuals~$\gA^t$ in descending order of their number of positive observations~$y^t$, and allocates resources to the first~$B$ of them.\footnote{ \cref{prop:bayes_opt_ranking_of_ber} implies this is the Bayes optimal ranking in our setting.}

Denote the predicted ranking by injection~$r^t: \gA^t \rightarrow [N^t]$, where individual~$i$ with $r^t(i) = 1$ is the most eligible one. The total utility of allocating a budget~$B$ at time~$t$ is given by 
\begin{equation*} 
    W^t = \sum_{i \in \gA^t} u^t(p_i) \cdot \One\{r^t(i) \leq B\}
    \,. 
\end{equation*}

Proving our results requires establishing an upper bound~$t^*$ such that~$W^t$ can increase over the next steps only if~$t$ is less than~$t^*$. In other words, deferring allocation is only justifiable if we have not reached this critical time. The results in this section show an intricate relationship between the timing of prediction/allocation with the inequality of the initial population and scarcity of resources.

Before presenting our results in full generality, we first consider the fully effective treatment setting, where the utility function follows \cref{eq:fully_effective_treatment_utility}.  Although many real-world treatments may not be fully effective, many are, in fact, sufficiently effective for this special case to serve as a useful approximation for understanding their dynamics. For instance, consider housing vouchers or dropout prevention programs, which have been found to be very effective.\footnote{For instance, \citet{gubits2016family} found ``significant positive impacts ... in families offered a voucher, and these impacts extended beyond housing stability.''} Due to its significance and analytical tractability, we next present specific results for this case. 

\subsection{The fully effective treatment setting}

Recall from \cref{ex:fully_effective_treatment} that the utility function in the case of fully effective treatments follows $u^t(p) = 1 - (1-p)^{T-t}$. To state our results, we also need to define a measure of inequality: 

\begin{definition}[$G$-decaying distribution]
\label{def:G_decaying}
We say distribution~$\gP$ over~$p$ is $G$-decaying for $G \ge 0$ if
\begin{equation*}
    - \, G \cdot \frac{\gP(p)}{1-p} \le \od{\gP}{p} \le 0
    \,.
\end{equation*}
\end{definition}

Intuitively, $G$ bounds the rate at which the distribution decreases. Smaller values of~$G$ correspond to \emph{greater inequality} (see \cref{prop:G_and_var} for the relationship between $G$ and $\Var[p]$). For example, if $\gP(p) \propto (1-p)^{\beta - 1}$---the probability density function of a $\Beta(1, \beta)$ distribution---then $\gP$ is $(\beta-1)$-decaying. Under this definition, the highest level of inequality in the population corresponds to the uniform distribution, which is $0$-decaying.

Recall that at each time step~$t$, the planner observes $o^t \sim \Ber(\tilp)$ from each active individual. To simplify notation, we assume the following concave observation model:\footnote{We make the concavity assumption for ease of presentation, but as we discuss in the appendix, a weaker version of our result holds for any Lipschitz concave~$\tilp$ with bounded curvature.}

\begin{assumption}[Observation model]
\label{assump:obs_model}
We assume observations from an individual with failure probability~$p$ are independently drawn from $\Ber(\tilp)$, where $\tilp = 1 - (1-p)^\gamma$, and $\gamma > 1$. 
\end{assumption}

Under these assumptions, we present a necessary condition for when waiting for additional observations can improve the allocation efficiency of fully effective treatments:

\begin{theoremEnd}{thm}[Conditions for fully effective treatment]
\label{thm:one_tim_alloc_special}
Suppose treatments are fully effective and the initial distribution over failure probabilities is $G$-decaying. The overall utility can improve after time~$t$ only if $t < t^*$, where
\begin{equation*}
    t^* \coloneqq \frac{T}{2} + \left( { \frac{G}{4} + \frac{\gamma + \gamma^{-1}}{4} } + 1 \right) \left((\gamma + 1) \, \ln(\frac{N}{B}) + 1\right)
    \,.
\end{equation*} 
\end{theoremEnd}
\begin{proofEnd}
    We build on the more general results of \cref{thm:one_tim_alloc_general}. First, observe from \cref{prop:examples_of_decaying_util} that the fully effective treatment corresponds to a $(1, 1)$-decaying utility function.
    Define $t_\wu \coloneqq (\gamma + 1) \, \ln(\frac{N}{B})$. 
    Plugging $\lambda_1 = \lambda_2 = 1$ and $(u^{t+1})'(0) = T - t - 1$ into \cref{thm:one_tim_alloc_general}, we have
    \begin{equation}
    \label{eq:_proof_from_general_thm}
        T - (t + 1) \ge \frac{t - t_\wu}{1 + C \frac{t_\wu + 1}{2t - t_\wu + 1}}
        \,.
    \end{equation}
    Here, we summarized $2 + \gamma + \gamma^{-1} + G$ into~$C$. Consider two cases depending on whether $t$~is less than or larger than $t_c \coloneqq (T + t_\wu - 1)/2$. When $t \ge t_c$, we can relax the right-hand side of \cref{eq:_proof_from_general_thm} by
    \begin{equation*}
        T - (t + 1) \ge \frac{t - t_\wu}{1 + C \frac{k+1}{T}}
        \,.
    \end{equation*}
    Simplifying this bound through a straightforward calculation, we get
    \begin{align*}
        t + 1 &\le \frac{T + (t_\wu + 1)(C + 1)}{2 + (t_\wu + 1) \frac{C}{T}} \\
        &= \frac{T}{2} + \frac{(t_\wu + 1)(\frac{C}{2} + 1)}{2 + (t_\wu + 1) \frac{C}{T}} \\
        &\le \frac{T}{2} + (t_\wu + 1)(\frac{1}{2} + \frac{C}{4})
        \,.
    \end{align*}
    Therefore, we can conclude, either $t < t_c$ or the above bound should hold. Since the above bound is larger than~$t_c$, this is the looser bound. Setting this to~$t^*$ completes the proof.
\end{proofEnd}

This theorem implies that we rarely need to go beyond the halfway point of the time horizon to optimally allocate resources, and that this $t^*$ can be even smaller when we have high levels of inequality (corresponding to smaller~$G$) or a larger budget. We further illustrate the intuition behind this theorem with an example in a simple setting in \cref{sec:one_time_alloc_illustration}. Combined, these results indicate that high levels of inequality and scarcity of resources play an important and consistent role in determining optimal allocation time---an insight we find carries over to more general settings.

\subsection{The general class of utility setting}

We define the class of $(\lambda_1, \lambda_2)$-decaying utility functions below. The fully effective treatment setting corresponds to a $(1, 1)$-decaying utility. Refer to \cref{prop:examples_of_decaying_util} for the proof and other examples.

\begin{definition}[$(\lambda_1, \lambda_2)$-decaying utility]
\label{def:decaying_util}
We say a utility function~$u^t(p)$ is $(\lambda_1, \lambda_2)$-decaying for positive~$\lambda_1$ and~$\lambda_2$ if at every~$t$ and~$p$,
\begin{align*}
    u^{t+1}(p)  &\le \frac{u^t(p) - \lambda_1 \cdot p}{1-p} \,, &&\text{(bounded decrease over time)} \\
    (u^t)'(p) &\le \Big( \frac{\lambda_2 - u^t(p)}{1 - p} \Big) \cdot (u^t)'(0)  \,. &&\text{(bounded increase with~$p$)}
\end{align*}
When there is no finite~$\lambda_2$ such that the second bound holds, we say that the utility is $\lambda_1$-decaying. 
\end{definition}

Our definition of $(\lambda_1, \lambda_2)$-decaying utility functions contains the essential elements to describe the utility function's behavior: a higher~$\lambda_1$ indicates a stronger decrease over time, while a higher~$\lambda_2$ signifies a faster increase with $p$. 
Our main result of this section states: 

\begin{theoremEnd}{thm}[Conditions for a general class of utilities]
\label{thm:one_tim_alloc_general}
For the general class of utilities, the overall utility can improve after time~$t$ only if $t < t^* \coloneqq (\gamma + 1) \, \ln(\frac{N}{B})$, or the following conditions hold: 
\begin{enumerate}
    \item When the utility function is $(\lambda_1, \lambda_2)$-decaying with $\lambda_1 \geq \lambda_2$,
    \begin{equation*}
        (u^{t+1})'(0) \ge \left( {\frac{\lambda_1}{\lambda_2} } \right)  \frac{t-t^*}{1 + (2 + \gamma + \gamma^{-1} + G) \frac{t^*+1}{2t - t^* + 1}}
        \,.
    \end{equation*}

    \item When the utility function is $(\lambda_1)$-decaying,
    \begin{equation*}
        (u^{t+1})'(0) \ge \lambda_1 \cdot \left( { \frac{t-t^*}{t^*+1} } \right) 
        \,.
    \end{equation*}
\end{enumerate}
\end{theoremEnd}
\begin{proofEnd}
    We use the following shorthands and notation throughout the proof. Let $N_k^t \coloneqq |\gA_k^t|$ where $\gA_k^t$ is the set of active individuals at~$t$ with~$y^t=k$. In the limit of many individuals, $N_k^t = N^t \cdot \E^t[{t \choose k} \tilp^k (1-\tilp)^{t-k}]$. Denote the posterior over~$p$ given~$y^t=k$ by~$\gP_k^t$. The Bayes' rule implies $\gP_k^t(p) \propto \gP^t(p) \, \tilp^k (1-\tilp)^{t-k}$. Denote the expectation over individuals in~$\gA_k^t$ by~$\E_k^t[\cdot]$. In the limit of many individuals, $\E_k^t[\cdot] = \E_{p\sim\gP_k^t, o\sim\Ber(\tilp)}[\cdot]$. We use the shorthand~$U_k^t$ to denote~$\E_k^t[u^t(p)]$ and~$\mu_k^t$ to denote~$\E_k^t[p]$ and $\tils_k^t$ to denote~$\E_k^t[(1-p)\tilp]$.

    Assuming the prior over~$p$ has no point mass,  \cref{prop:bayes_opt_ranking_of_ber} implies ranking individuals in descending order of~$y^t$ is optimal. Therefore, given a budget of~$B$, the optimal planner sorts individuals in descending order of~$y^t$ and allocates to the top~$B$. Let $l^t$ be the \emph{lowest}~$y^t$ of an individual that has received the allocation. 
    
    Suppose at time~$t$, we have $l^t=k$. Then $B > N_t^t$ implies $k < t$. We argue that postponing the allocation from time~$t$ to time~$t+1$ can only increase overall utility if $l^{t+1}$ is either~$k$ or~$k+1$:
    \begin{itemize}
        \item We first rule out $l^{t+1} > k+1$. If $l^{t+1} > k+1$, only individuals with $y^{t+1} > k+1$ can be treated. This will be a subset of the individuals that could be treated at~$t$. In particular, individuals with~$y^t=k$ will not be eligible at~$t+1$. Therefore, the whole budget has not been spent which violates the optimality of allocation at~$t+1$.

        \item We next rule out $l^{t+1} < k$. The key observation is \cref{lem:failing_implies_higher_utility}, which states that the expected utility of an individual with~$y^t = k$ who fails at the next step is higher than that of an individual with~$y^{t+1} = k+1$. In other words, failing provides a stronger signal than~$o$ regarding the individual's likelihood of failure. Formally, when $l^{t+1} < k$, everyone who was eligible for treatment at time~$t$ and survived to~$t+1$ remains eligible. The individuals who failed during this transition are replaced by new individuals with $y^{t+1} \leq k+1$. In the best case, the expected utility from each new individual is~$U_{k+1}^{t+1}$. However, \cref{lem:failing_implies_higher_utility} suggests that the expected utility of those who failed and were replaced is at least~$U_{k+1}^{t+1}$. Therefore, postponing allocation cannot be justified.
    \end{itemize}
    In the remainder of the proof, we derive an upper bound on $\Delta W^t \coloneqq W^{t+1} - W^t$ when $l^{t+1}$ is either~$k$ or~$k+1$. We will then further simplify this bound to obtain sufficient conditions for $\Delta W^t \leq 0$.
    \begin{itemize}
        \item We first examine the case where $l^{t+1} = k$. Suppose at time~$t$, we are able to treat $\Delta^t$ individuals from~$\gA_k^t$. This number changes to~$\Delta^{t+1}$ at time~$t+1$. The change in total utility after postponing allocation for one step can be written as
        \begin{equation*}
            \Delta W^t = W^{t+1} - W^t = \Big( \Delta^{t+1} U_k^{t+1} + \sum_{k'=k+1}^{t+1} N_{k'}^{t+1} U_{k'}^{t+1} \Big)
            - \Big( \Delta^t U_k^t + \sum_{k'=k+1}^t N_{k'}^t U_{k'}^t \Big)
            \,.
        \end{equation*}
        From~$\gA_k^t$, a ratio of~$\mu_k^t$ fail and a ratio of~$\tils_k^t$ proceed to the next step while revealing one more positive observation. This allows us to relate~$N_{k'}^{t+1}$ with~$N_{k'}^t$ and~$N_{k'-1}^{t}$ for $k' \ge 1$:
        \begin{equation}
        \label{eq:_proof_N_k_update}
            N_{k'}^{t+1} = N_{k'}^t(1 - \mu_{k'}^t - \tils_{k'}^t) + N_{k'-1}^t \tils_{k'-1}^t
            \,.
        \end{equation}
        Plugging this into~$W^{t+1}$, we obtain
        \begin{align*}
            W^{t+1} &= \Delta^{t+1} U_k^{t+1} + \sum_{k'=k+1}^t N_{k'}^t(1 - \mu_{k'}^t - \tils_{k'}^t) U_{k'}^{t+1} + \sum_{k'=k+1}^{t+1} N_{k'-1}^t \tils_{k'-1}^t U_{k'}^{t+1} \\
            &= \Delta^{t+1} U_k^{t+1} + N_k^t \tils_k^t U_{k+1}^{t+1} + \sum_{k'=k+1}^t N_{k'}^t \big[(1 - \mu_{k'}^t - \tils_{k'}^t) U_{k'}^{t+1} + \tils_{k'}^t U_{k'+1}^{t+1} \big]
            \,.
        \end{align*}
        The summand above can be significantly simplified. Using \cref{lem:general_k_t_identities} with $f = u^{t+1}$, we have 
        \begin{equation*}
            (1 - \mu_{k'}^t - \tils_{k'}^t) U_{k'}^{t+1} + \tils_{k'}^t U_{k'+1}^{t+1} = \E_{k'}^t[(1-p) \cdot u^{t+1}(p)]
            \,.
        \end{equation*}
        Since the utility is $(\lambda_1, \lambda_2)$-decaying, we can upper bound the above by
        \begin{equation}
        \label{eq:_proof_util_of_failed_ones}
            \E_{k'}^t[(1-p) \cdot u^{t+1}(p)] \le \E_{k'}^t[u^t(p) - \lambda_1 p] = U_{k'}^t - \lambda_1 \mu_{k'}^t
            \,.
        \end{equation}
        Plugging this into~$W^{t+1}$ allows us to upper bound~$\Delta W^t$:
        \begin{equation}
        \label{eq:_proof_upper_bound_W_t_step_1}
            \Delta W^t \le \Delta^{t+1} U_k^{t+1} - \Delta^t U_k^t + N_k^t \tils_k^t U_{k+1}^{t+1} - \lambda_1 \sum_{k'=k+1}^t N_{k'}^t \mu_{k'}^t 
            \,.
        \end{equation}
        Spending a fixed budget requires 
        \begin{equation*}
            \Delta^{t+1} = \Delta^t + \sum_{k'=k+1}^t N_{k'}^t - \sum_{k'=k+1}^{t+1} N_{k'}^{t+1} 
            \,.
        \end{equation*}
        Using \cref{eq:_proof_N_k_update} and performing a straightforward calculation, we can simplify~$\Delta^{t+1}$ as
        \begin{equation}
        \label{eq:_proof_delta_tp1}
            \Delta^{t+1} = \Delta^t - N_k^t \tils_k^t + \sum_{k'=k+1}^t N_{k'}^t \mu_{k'}^t
            \,.
        \end{equation}
        Plugging this into \cref{eq:_proof_upper_bound_W_t_step_1}, we obtain
        \begin{equation}
        \label{eq:_proof_upper_bound_W_t_step_2}
            \Delta W^t \le -\Delta^t (U_k^t - U_k^{t+1}) + N_k^t \tils_k^t (U_{k+1}^{t+1} - U_k^{t+1}) - \big(\sum_{k=k+1}^t N_{k'}^t\mu_{k'}^t\big) (\lambda_1 - U_k^{t+1}) 
            \,.
        \end{equation}
        For a utility function non-increasing in time, we have $U_k^{t+1} \le \E_k^{t+1}[u^t(p)]$. Since $\frac{\gP_k^{t+1}(p)}{\gP_k^t(p)} \propto (1-p)(1-\tilp)$ is continuous and decreasing in~$p$, for a utility function non-decreasing in~$p$, \cref{lem:increasing_density_ratio_higher_expectation_of_increasing_function} implies $\E_k^{t+1}[u^t(p)] \le \E_k^t[u^t(p)] = U_k^t$. Therefore, we can conclude the upper bound of \cref{eq:_proof_upper_bound_W_t_step_2} is decreasing in~$\Delta^t$. Thus, we can further upper bound~$\Delta W^t$ by finding an upper bound on~$\Delta^t$. Since $l^{t+1} = k$, it is necessary to have~$\Delta^{t+1} \ge 0$. Using \cref{eq:_proof_delta_tp1}, this will impose an upper bound on~$\Delta^t$:
        \begin{equation*}
            \Delta^t \ge N_k^t \tils_k^t - \sum_{k'=k+1}^t N_{k'}^t \mu_{k'}^t
            \,.
        \end{equation*}
        Plugging this into \cref{eq:_proof_upper_bound_W_t_step_2}, we have
        \begin{equation}
        \label{eq:_proof_upper_bound_W_t_step_3}
            \Delta W^t \le N_k^t \tils_k^t(U_{k+1}^{t+1} - U_k^t) - \big(\sum_{k=k+1}^t N_{k'}^t\mu_{k'}^t\big) (\lambda_1 - U_k^t)
            \,.
        \end{equation}

        \item We next consider the case of~$l^{t+1}=k+1$. We follow steps similar to those in the previous case. The change in total utility after postponing allocation for one step is
        \begin{align*}
            \Delta W^t = W^{t+1} - W^t &= \Big( (\Delta^{t+1} - N_{k+1}^{t+1}) \, U_{k+1}^{t+1} + \sum_{k'=k+1}^{t+1} N_{k'}^{t+1} U_{k'}^{t+1} \Big) \\
            &- \Big( \Delta^t U_k^t + \sum_{k'=k+1}^t N_{k'}^t U_{k'}^t \Big)
            \,.
        \end{align*}
        Plugging $N_{k'}^{t+1}$~expansion from \cref{eq:_proof_N_k_update} into~$W^{t+1}$, we obtain
        \begin{align*}
            W^{t+1} &= (\Delta^{t+1} - N_{k+1}^{t+1}) \, U_{k+1}^{t+1} + \sum_{k'=k+1}^t N_{k'}^t(1 - \mu_{k'}^t - \tils_{k'}^t) U_{k'}^{t+1} + \sum_{k'=k+1}^{t+1} N_{k'-1}^t \tils_{k'-1}^t U_{k'}^{t+1} \\
            &= (\Delta^{t+1} - N_{k+1}^{t+1}) \, U_{k+1}^{t+1} + N_k^t \tils_k^t U_{k+1}^{t+1} + \sum_{k'=k+1}^t N_{k'}^t \big[(1 - \mu_{k'}^t - \tils_{k'}^t) U_{k'}^{t+1} + \tils_{k'}^t U_{k'+1}^{t+1} \big]
            \,.
        \end{align*}
        As we showed in the previous case, the summand above can be bounded by $U_{k'}^t - \lambda_1 \mu_{k'}^t$. Plugging this into~$W^{t+1}$ allows us to upper bound~$\Delta W^t$:
        \begin{equation}
        \label{eq:_proof_upper_bound_W_t_step_1_case_2}
            \Delta W^t \le (\Delta^{t+1} - N_{k+1}^{t+1}) \, U_{k+1}^{t+1} - \Delta^t U_k^t + N_k^t \tils_k^t U_{k+1}^{t+1} - \lambda_1 \sum_{k'=k+1}^t N_{k'}^t \mu_{k'}^t 
            \,.
        \end{equation}
        Spending a fixed budget requires 
        \begin{equation*}
            \Delta^t = (\Delta^{t+1} - N_{k+1}^{t+1}) + \sum_{k'=k+1}^{t+1} N_{k'}^{t+1} 
            - \sum_{k'=k+1}^t N_{k'}^t  
            \,.
        \end{equation*}
        Using \cref{eq:_proof_N_k_update} and performing a straightforward calculation, we can simplify~$\Delta^{t+1}$ as
        \begin{equation*}
            \Delta^t = (\Delta^{t+1} - N_{k+1}^{t+1}) + N_k^t \tils_k^t - \sum_{k'=k+1}^t N_{k'}^t \mu_{k'}^t
            \,.
        \end{equation*}
        Plugging this into \cref{eq:_proof_upper_bound_W_t_step_1_case_2}, we obtain
        \begin{equation*}
            \Delta W^t \le (N_{k+1}^{t+1} - \Delta^{t+1}) (U_k^t - U_{k+1}^{t+1}) + N_k^t \tils_k^t (U_{k+1}^{t+1} - U_k^t) - \big(\sum_{k=k+1}^t N_{k'}^t\mu_{k'}^t\big) (\lambda_1 - U_k^t) 
            \,.
        \end{equation*}
        We argue postponing allocation is only justified if $U_{k+1}^{t+1} > U_{k}^t$. Otherwise, the eligible individuals with the lowest~$y^t$ at the previous step already have higher expected utility than the newly eligible individuals at~$t+1$. This implies that the above upper bound is increasing in~$\Delta^{t+1}$. The largest~$\Delta^{t+1}$ happens when~$\Delta^{t+1}$ approaches~$N_{k+1}^{t+1}$. The reader can already verify this will lead to the same bound as the previous case (\cref{eq:_proof_upper_bound_W_t_step_3}). 
    \end{itemize}

    Before proceeding to analyze the upper bound of \cref{eq:_proof_upper_bound_W_t_step_3}, we make one more observation. Recall that in \cref{eq:_proof_util_of_failed_ones} we used $(\lambda_1, \lambda_2)$-decaying property of the utility function. However, \cref{lem:failing_implies_higher_utility} implies another complementary bound of this quantity:
    \begin{equation*}
        \E_{k'}^t[(1-p) \cdot u^{t+1}(p)] \le U_{k'}^t - \E_{k'}^t[p\cdot u^{t+1}(p)] \le U_{k'}^t - U_{k'+1}^{t+1} \mu_{k'}^t
        \,.
    \end{equation*}
    Comparing this with the bound in \cref{eq:_proof_util_of_failed_ones}, we observe that any result involving~$\lambda_1$ can always be updated by substituting~$\lambda_1$ with~$U_{k+1}^{t+1}$. Such an update to \cref{eq:_proof_upper_bound_W_t_step_3} results in
    \begin{equation*}
        \Delta W^t \le (N_k^t \tils_k^t - \sum_{k=k+1}^t N_{k'}^t\mu_{k'}^t)(U_{k+1}^{t+1} - U_k^t)
        \,.
    \end{equation*}
    Recall that deferring allocation requires $U_{k+1}^{t+1} > U_k^t$. The above equation further implies that $N_k^t \tils_k^t \ge  \sum_{k=k+1}^t N_{k'}^t\mu_{k'}^t$ is also necessary. Now, looking back at \cref{eq:_proof_upper_bound_W_t_step_3}, we can conclude that under these necessary conditions, the upper bound is decreasing in~$U_k^t$. Hence, decreasing~$U_k^t$ to~$U_k^{t+1}$ can only increase the bound:
    \begin{equation}
    \label{eq:_proof_upper_bound_W_t_step_4}
        \Delta W^t \le N_k^t \tils_k^t(U_{k+1}^{t+1} - U_k^{t+1}) - \big(\sum_{k=k+1}^t N_{k'}^t\mu_{k'}^t\big) (\lambda_1 - U_k^{t+1})
        \,.
    \end{equation}
    We will use this inequality as a basis for the remainder of the proof. 
    
    The following identity will be useful:
    \begin{align}
    \label{eq:_proof_N_s_identity}
        N_k^t \tils_k^t &= {t \choose k} N^t \cdot \E^t[(1-p) \tilp \cdot \tilp^k (1-\tilp)^{t-k}] \nonumber \\
        &= {t \choose k} N^{t+1} \cdot \E^{t+1}[\tilp^{k+1} (1-\tilp)^{t-k}] = \big(\frac{k+1}{t+1}\big) N_{k+1}^{t+1}
        \,.
    \end{align}
    Here, we used the updating rule $\gP^{t+1}(p) = \gP^t(p) \big(\frac{1-p}{1-\mu^t}\big)$ from \cref{eq:update_p}. \cref{eq:_proof_N_s_identity} implies that individuals with~$y^t=k$ who survive to the next step while showing one more positive observation, form $\big(\frac{k+1}{t+1}\big)$ of~$N_{k+1}^{t+1}$. So we can conclude that the rest of~$N_{k+1}^{t+1}$ comes from~$N_{k+1}^t$. Therefore, we have
    \begin{equation}
    \label{eq:_proof_N_kp1_t_lb}
        N_{k+1}^t \ge \big(\frac{t-k}{t+1}\big) N_{k+1}^{t+1}
        \,.
    \end{equation}
    Plugging this and \cref{eq:_proof_N_s_identity} into \cref{eq:_proof_upper_bound_W_t_step_4}, we obtain
    \begin{equation}
    \label{eq:_proof_upper_bound_W_t_step_5}
        \Delta W^t \le N_{k+1}^{t+1} \big(\frac{k+1}{t+1}\big) \Big[(U_{k+1}^{t+1} - U_k^{t+1}) - \big(\frac{t-k}{k+1}\big) \mu_{k+1}^t \, (\lambda_1 - U_k^{t+1}) \Big]
        \,.
    \end{equation}
    In the rest of the proof, we consider two cases where the utility function is either $(\lambda_1, \lambda_2)$-decaying or just $\lambda_1$-decaying.

    \paragraph{The case of $(\lambda_1, \lambda_2)$-decaying utility.}
    The difference $(U_{k+1}^{t+1} - U_k^{t+1})$ in \cref{eq:_proof_upper_bound_W_t_step_5} captures the effect of one more observation on the expected utility. For a $(\lambda_1, \lambda_2)$-decaying utility, \cref{lem:bound_effect_of_one_obs_on_util} upper bounds this effect by
    \begin{equation*}
        U_{k+1}^{t+1} - U_k^{t+1} \le (u^{t+1})'(0) \cdot (\lambda_2 - U_k^{t+1}) \cdot \frac{\mu_{k+1}^{t+1} - \mu_k^{t+1}}{1 - \mu_k^{t+1}}
        \,.
    \end{equation*}
    Here, we used $(u^{t+1})'(0)$ in place of the Lipschitz constant of~$u^{t+1}$ in the lemma because the utility function is concave. Plugging this into \cref{eq:_proof_upper_bound_W_t_step_5} and using the assumption that $\lambda_1 \ge \lambda_2$, we obtain
    \begin{equation*}
        \Delta W^t \le C \Big[ (u^{t+1})'(0) 
        - \frac{\lambda_1}{\lambda_2} \big(\frac{t-k}{k+1}\big) \frac{\mu_{k+1}^t \, (1 - \mu_k^{t+1})}{\mu_{k+1}^{t+1} - \mu_k^{t+1}} \Big]
        \,.
    \end{equation*}
    Here, we summarized all the terms multiplying before the bracket as a constant~$C$, since these terms do not affect the sign of~$\Delta W^t$. It is straightforward to verify $\mu_{k+1}^t \ge \mu_{k+1}^{t+1}$ and $\mu_k^{t+1} \le \mu_{k+1}^{t+1}$. Then, for $\gamma > 1$,  \cref{lem:bound_effect_of_one_obs_on_mu} yields
    \begin{equation}
    \label{eq:_proof_upper_bound_W_t_step_7}
        \Delta W^t \le C \Big[ (u^{t+1})'(0) 
        - \big(\frac{\lambda_1}{\lambda_2}\big) \frac{t-k}{1 + (2 + \gamma + \gamma^{-1} + G) \frac{k+1}{2t - k + 1}} \Big]
        \,.
    \end{equation}
    The upper bound of \cref{eq:_proof_upper_bound_W_t_step_7} is increasing in~$k$. Therefore, the last missing piece of the proof is an upper bound on~$k$. \cref{lem:upper_bound_k} provides such an upper bound and completes this part of the proof.

    \paragraph{The case of $\lambda_1$-decaying utility.}
    As we discussed above, decreasing~$U_k^t$ in \cref{eq:_proof_upper_bound_W_t_step_3} can only increase the bound. Setting $U_k^t$ then gives
    \begin{equation*}
        \Delta W^t \le N_k^t \tils_k^t U_{k+1}^{t+1} - \big(\sum_{k=k+1}^t N_{k'}^t\mu_{k'}^t\big) \lambda_1
        \,.
    \end{equation*}
    Plugging \cref{eq:_proof_N_kp1_t_lb} into this, we obtain
    \begin{equation}
    \label{eq:_proof_upper_bound_W_t_step_4_2}
        \Delta W^t \le N_{k+1}^{t+1} \big(\frac{k+1}{t+1}\big) \Big[U_{k+1}^{t+1} - \big(\frac{t-k}{k+1}\big) \mu_{k+1}^t \lambda_1 \Big]
        \,.
    \end{equation}
    Concavity of the utility function and its zero value at~$p=0$ imply 
    \begin{equation*}
        U_{k+1}^{t+1} \le (u^{t+1})'(0) \cdot \mu_{k+1}^{t+1}
        \,.
    \end{equation*}
    Plugging this into \cref{eq:_proof_upper_bound_W_t_step_4_2}, we have
    \begin{equation*}
        \Delta W^t \le C \Big[(u^{t+1})'(0) - \big(\frac{t-k}{k+1}\big) \lambda_1 \Big]
        \,,
    \end{equation*}
    where all the factors outside of the brackets are summarized in~$C$. Using the upper bound on~$k$ from \cref{lem:upper_bound_k} completes this part of the proof.

    \paragraph{Extending to all future times.} As the final remark of the proof, note that the upper bounds provided in all cases are shrinking with~$t$. Therefore, if $\Delta W^t \le 0$, then postponing allocation to any future time cannot be justified. 
\end{proofEnd}

This theorem shows that for a wide-range of settings, the optimal allocation can happen early: 
Our assumptions about the concavity of the utility function and its zero value at $p=0$ imply that $(u^{t+1})'(0)$ is a decreasing function of~$t$. Conversely, regardless of whether the utility is $(\lambda_1, \lambda_2)$-decaying or just $\lambda_1$-decaying, the right-hand side of the bounds increases with $t$. Determining the timing of the optimal allocation is intertwined with the level of inequality and the budget. In particular, consistent with the previous section, a higher inequality and larger budget favor earlier allocations.


\section{Over-time allocation}
\label{sec:over-time-allocation}

The one-time allocation problem presented in the previous section helps as a building block for a more general setting, which we now consider. We study the problem of allocating a fixed budget over time to maximize total utility. 
First, we discuss the complexity of the problem under a naive optimization approach. 
Then, we characterize the optimal solution and present a method to find it regardless of the number of individuals. Through semi-synthetic experiments, we demonstrate that the optimal allocation follows a similar intuition to one-time allocation, where higher inequality or a larger budget shifts the optimal allocation toward earlier times.

\subsection{Characterizing the optimal over-time allocation}

We begin by describing a Markov decision process (MDP) that governs the allocation problem. 
Let $\gA_k^t$~denote the set of active individuals at time~$t$ who have $k$~positive observations, i.e., $y^t = k$. We define the state at~$t$ by $N_k^t \coloneqq |\gA_k^t|$, for~$k \le t$. An allocation policy specifies the individuals to treat from~$\gA_k^t$ at each state. The policy, together with the current state and the prior distribution over~$p$, is sufficient to determine the distribution of the next state.

A naive approach to finding the optimal policy for the described MDP is as follows. Since individuals with a higher~$y^t$ yield a higher expected utility,\footnote{Refer to \cref{lem:higher_k_higher_util_of_ber} for a formal proof.} the optimal allocation at every time~$t$ should treat those with the highest~$y^t$. Given a budget of~$B$, we can specify a rollout of such a policy by the budget spent at each time step, resulting in $\binom{B + 1}{T - 1}$ possibilities. In the case of many agents, and thus a large~$B$, the MDP dynamic becomes almost deterministic, and the optimal policy converges to a single fixed rollout. However, iterating over all $\binom{B + 1}{T - 1}$ rollouts to find the optimal policy is computationally infeasible. Next, we find a characterization of the optimal solution that significantly cuts down our search space. 

\begin{theoremEnd}{thm}[Optimal over-time allocation]
\label{thm:opt_over_time_allocation}
The optimal over-time allocation in the limit of many individuals follows a specific pattern: For a non-decreasing sequence $q: [T] \rightarrow \{0, 1, \dots, T\}$, there exists a time step~$\hatt \in [T]$ such that, 
\begin{itemize}
    \item At $t \neq \hatt$, everyone with $y^t \ge q(t)$ will be treated. At the next step, $q(t+1) \in \{q(t), q(t) + 1\}$.
    \item At $t = \hatt$, everyone with $y^t > q(t)$ and some with $y^t = q(t)$ will be treated. At the next step, $q(t+1) \in \{q(t) + 1, q(t) + 2\}$.
\end{itemize}
\end{theoremEnd}
\begin{proofEnd}
\setcounter{step}{0}

    Consider $N$ individuals initially at time~$t=1$. Denote the subset of~$\gA^t$ with~$y^t=k$ by~$\gA_k^t$. Define $n_k^t \coloneqq |\gA_k^t|/N$. Denote the set of individuals treated at~$t$ by~$\gI^t$. Excluding $\gI^t$ from $\gA_k^t$, denote the remaining by $\bargA_k^t \coloneqq \gA^t \setminus \gI^t$, and define $\barn_k^t \coloneqq |\bargA_k^t|/N$. In the limit of $N \rightarrow \infty$, we can treat $n_k^t$ and $\barn_k^t$ as continuous variables taking any value in $[0, 1]$.

    \begin{step}
    The following property of the problem dynamics allows us to infer whether $\gA_k^t$ is empty or not based on the previous step.
    \begin{theoremEnd}{lemma}
    \label{lem:update_n_iff}
    Defining $\barn_{-1}^t = 0$, at any time~$t$ and for any~$k$, we have 
    \begin{equation*}
        n_k^{t+1} > 0 \iff \barn_k^t > 0 \; \text{ or } \; \barn_{k-1}^t > 0
        \,.
    \end{equation*}
    \end{theoremEnd}
    \begin{proofEnd}
        For $k \ge 1$, $\gA_k^{t+1}$ will consist of those in $\bargA_k^t$ who survive and have $o^{t+1} = 0$, or those in $\bargA_{k-1}^t$ who survive and have $o^{t+1} = 1$:
        \begin{equation*}
            n_k^{t+1} = \barn_k^t \, \E_k^t\big[(1-p) (1 - \tilp)\big]
            + \barn_{k-1}^t \, \E_{k-1}^t \big[(1-p) \, \tilp \big]
            \,.
        \end{equation*}
        Here, $\E_k^t$ denotes expectation with respect to $p \sim \gP_k^t = \gP^t(\cdot \mid y^t=k)$. We implicitly used the fact that the targeting cannot distinguish people with the same~$y^t$.
        The same update rule works for~$k=0$ if we set $\barn_{-1}^t = 0$. One can also verify that since the prior over~$p$ has no point mass, the expectations above are non-zero. This completes the proof.
    \end{proofEnd}
    \end{step}

    \begin{step}
    \label{step:first_treat_higher_k}
    Consider two active individuals~$i$ and~$j$ at time~$t$. If $y_i^t > y_j^t$, for any utility function non-decreasing in~$p$, \cref{lem:higher_k_higher_util_of_ber} implies treating~$i$ yields more utility than~$j$ in expectation. Therefore, the optimal allocation at any point should not target individuals with lower~$y^t$ while there are active individuals with a higher~$y^t$. 
    \end{step}

    \begin{step}
    We show that, except for the first time step, at each time~$t$ on the optimal path, the treated individuals at~$t$ should have similar values of $y^t$.
    \begin{theoremEnd}{lemma}
    \label{lem:only_treat_one_k}
    For $\gA^t \neq \emptyset$ and $t \ge 2$ on the optimal path, for any $i, j \in \gI^t$, we should have~$y_i^t = y_j^t = \max \, \{i' \mid i' \in \gA^t\}$.
    \end{theoremEnd}
    \begin{proofEnd}
        The proof is by contradiction with optimality and has multiple steps:
        \begin{itemize}
            \item Let~$k = \max \, \{y_i^t \mid i \in \gA^t\}$. Since $n_{k+1}^t = 0$, \cref{lem:update_n_iff} requires $\barn_{k+1}^{t-1} = \barn_k^{t-1} = 0$. On the other hand, when $n_k^t > 0$, \cref{lem:update_n_iff} requires either $\barn_k^{t-1}$ or $\barn_{k-1}^{t-1}$ to be non-zero. Since we just argued $\barn_k^{t-1} = 0$, it is required to have $\barn_{k-1}^{t-1} > 0$.
    
            \item Since $\barn_{k-1}^{t-1} > 0$, \cref{lem:update_n_iff} implies $n_{k-1}^t > 0$. Then \cref{step:first_treat_higher_k} requires that if $\gI^t$ contains individuals with different~$y^t$, there should be two individuals~$i, j \in \gI^t$ such that $y_i^t = k$ and $y_j^t = k-1$.
    
            \item Consider the following tie-breaking when treating individuals with a similar~$y^t$: Assign a random priority value~$z_i^1 \in [0, 1)$ to each individual~$i$ in the initial pool. At time~$t$, update the priority value by $z_i^t = z_i^{t-1} + o_i^t \, 2^{t-1}$. If any two individuals are at a tie to receive the treatment, choose the one with the lowest priority value. This tie-breaking does not change the optimality of an allocation rule.
    
            \item So far we showed $\barn_{k-1}^{t-1} > 0$, i.e., there exist some individuals in~$\gA_{k-1}^{t-1}$ left untreated at~$t-1$, but there exists~$j \in \gA_{k-1}^t$ who is treated at~$t$. We argue this is suboptimal as the budget to treat~$j$ could have been spent earlier to treat those in~$\gA_{k-1}^{t-1}$, yielding higher utility. To see this, consider a counterfactual allocation rule that treats one more individual from~$\gA_{k-1}^{t-1}$ at~$t-1$, to be referred to as individual~$j'$. This individual has either failed or made it to~$t$. If failed, \cref{lem:failing_implies_higher_utility} implies that the counterfactual treatment could be more effective. If $j'$ is active at~$t$, she will either have $y_{j'}^t = k$ or $y_{j'}^t = k-1$. If $y_{j'}^t = k$, then she is treated with others in~$\gA_k^t$. If $y_{j'}^t = k-1$, still $j'$~is treated. This is because $j'$~maintains the lowest priority value among~$\gA_{k-1}^t$ by the construction of the priority values. Since $j'$~is treated in any case if she makes it to~$t$, she could be treated earlier at~$t-1$. This could yield a higher or similar utility as $u^t$~is non-increasing in~$t$. This contradiction shows $y_i^t = y_j^t = k$. 
        \end{itemize}                   
    \end{proofEnd}
    \end{step}

    \begin{step}
    When an individual~$i$ with $y_i^t=k \ge 1$ is treated at~$t$, not only is $\gA_k^t$ non-empty, but $\gA_{k-1}^t$ is also non-empty.
    \begin{theoremEnd}{lemma}
    \label{lem:adjacent_exists}
    For $k \ge 1$, if there exists~$i \in \gI^t$ on the optimal path such that $y_i^t = k$, then $n_{k-1}^t > 0$. 
    \end{theoremEnd}
    \begin{proofEnd}
        The proof is obvious for~$t=1$ since $k$ can only be~$1$ and unless the budget is excessively large to treat everyone, we have $n_0^1 > 0$. For $t \ge 2$, \cref{lem:only_treat_one_k} requires $n_k^t > 0$ and $n_{k+1}^t=0$. Then \cref{lem:update_n_iff} implies $\barn_k^{t-1} = 0$ and $\barn_{k-1}^{t-1} > 0$, so $n_{k-1}^t > 0$.
    \end{proofEnd}
    \end{step}

    \begin{step}
    Using the structure imposed on the optimal solution in the previous steps, we next restrict $\gI^{t+1}$ based on~$\gI^t$. 
    \begin{theoremEnd}{lemma}
    \label{lem:update_treateds}
    Suppose $\gA^t \neq \emptyset$ and let~$k = \max \{y_i^t \mid i \in \gA^t\}$. On the optimal path,
    \begin{itemize}
        \item If $\gI^t = \gA_k^t$, either $\gI^{t+1} \subseteq \gA_k^{t+1}$ or $\gI^{t+1} = \gA_{k+1}^{t+1} = \emptyset$.
        \item If $\gI^t \subset \gA_k^t$, either $\gI^{t+1} \subseteq \gA_{k+1}^{t+1}$ or $\gI^{t+1} = \gA_{k+2}^{t+1} = \emptyset$.
    \end{itemize}
    \end{theoremEnd}
    \begin{proofEnd}
        We first prove the first part the lemma. If $\gI^t = \gA_k^t$, there are two possibilities for~$k$. If $k = 0$, then $\bargA_k^t = \emptyset$ and \cref{step:first_treat_higher_k} implies no one is left untreated at~$t$. So, $\gI^{t+1} = \gA_{k+1}^{t+1} = \gA^{t+1} = \emptyset$. If $k \ge 1$, then \cref{lem:adjacent_exists} implies $n_{k-1}^t > 0$. Then $\gI_k^t = \gA_k^t$ implies $\barn_{k-1}^t > 0$ and $\barn_k^t = 0$. Applying \cref{lem:update_n_iff} gives $n_k^{t+1} > 0$ and $n_{k+1}^{t+1} = 0$. Therefore, using \cref{lem:only_treat_one_k}, treated individuals at~$t+1$ should be among~$\gA_k^{t+1}$ or no one will be treated. 

        We next prove the second part of the lemma. If $\gI^t \subset \gA_k^t$, we have $\barn_k^t > 0$ and $n_{k+1}^t = \barn_{k+1}^t = 0$. Then \cref{lem:update_n_iff} implies $n_{k+1}^{t+1} > 0$ and $n_{k+2}^{t+1} = 0$. Therefore, using \cref{lem:only_treat_one_k}, treated individuals at~$t+1$ should be among~$\gA_{k+1}^{t+1}$ or no one will be treated. 
    \end{proofEnd}
    \end{step}

    \begin{step}
    Except for one time step, at every~$t$ on the optimal path, either $\gI^t = \emptyset$ or $\gI^t$ treats everyone with $y^t \ge k$ for some~$k$. If there were two time steps~$t$ and~$t'$ violating this, because of the linearity of the expected utility in~$|\gI^t|$ and $|\gI^{t'}|$, optimally, one would become zero or treat everyone above a cutoff. This and \cref{lem:update_treateds} complete the proof.
    \end{step}

\end{proofEnd}

This theorem narrows down the search space of possible policies to three parameters: $\hatt$, $q(\cdot)$, and what portion of~$\gA_{q(\hatt)}^\hatt$ to treat. In particular, our search space no longer depends on the budget~$B$ or the number of individuals.

\subsection{An algorithm to find the optimal solution}

In this section, we present an algorithm that uses \cref{thm:opt_over_time_allocation} to find the optimal policy, ensuring that its runtime does not scale with the number of individuals or the budget size. We build the algorithm to do so in pieces. First, we show that we can iterate over the joint space of unspecified parameters in $O(T^3 \cdot 2^{T-1})$~steps: 

\begin{theoremEnd}{lemma}
\label{lem:iterate_policies}
By specifying the time step~$\hatt \in [T]$, the initial value of the sequence~$q(\cdot)$ from the set of $\{0, 1, 2\}$, and a binary sequence of length~$T-1$, we can iterate over the joint space of unknown parameters in \cref{thm:opt_over_time_allocation} in $O(T^3 \cdot 2^{T-1})$ steps.
\end{theoremEnd}
\begin{proofEnd}
    We first iterate over two of the three unspecified parameters,~$\hatt$ and~$q(\cdot)$ in \cref{thm:opt_over_time_allocation}. There are $T$~possibilities for~$\hatt$. A valid sequence~$q(\cdot)$ can then be determined by specifying $q(1)$ and a binary sequence of length~$T-1$. The $t^\text{th}$ binary value in this sequence determines whether $q(t + 1) - q(t)$ will be: $0$ or~$1$ if $t \neq \hatt$, or $1$ or~$2$ if~$t = \hatt$. Since the maximum~$y^t$ at each time~$t$ is~$t$, it is straightforward to verify that for any sequence that starts with~$q(1) > 2$, there exists another sequence with~$q(1) \le 2$ that treats similar individuals. Therefore, there are effectively only three choices for~$q(1)$. Together, iterating over these parameters requires $3 \, T \cdot 2^{T-1}$ steps.

    Given~$\hatt$ and a valid sequence~$q(\cdot)$, there remains to determine the allocation at~$\hatt$ based on the available budget. Let~$\rho$ denote the proportion of~$\gA_{q(\hatt)}^{\hatt}$ who are not treated. One can verify that for a fixed~$\hatt$ and~$q(\cdot)$, both the spending and the expected total utility are linear in~$\rho$. Therefore, to find~$\rho$, we only need to simulate an allocation for two distinct values of~$\rho$. We then use these two points to identify the linear relationships and determine the optimal~$\rho$ under the budget constraint. In \cref{alg:opt_over_time_allocation}, we perform this by simulating the trajectories for~$\rho=0$ and~$\rho=1$, which takes $O(T^2)$ operations. So, overall, we require $O(T^3 \cdot 2^{T-1})$ operations to iterate over all rollouts.
\end{proofEnd}

This lemma utilizes the structure of sequence~$q(\cdot)$ in \cref{thm:opt_over_time_allocation} and reduces it to a binary sequence given~$\hatt$. It also utilizes a linear structure of total utility to determine how many people to treat from~$\gA_{q(\hatt)}^{\hatt}$. These are the first steps in \cref{alg:opt_over_time_allocation}. 

At the heart of \cref{lem:iterate_policies} is \cref{alg:simulate_trajectory} that simulates a trajectory. This algorithm uses a backup formula that updates~$N_k^t$ based on~$N_k^{t-1}$ and~$N_{k-1}^{t-1}$. \cref{alg:simulate_trajectory} also requires calculating expectation with respect to $p \sim\gP^t(\cdot \mid y^t=k)$. In our simulation, this posterior has a closed form. However, in general, since $p$~is a bounded scalar, the posterior calculation can be well-approximated by a constant number of operations.

Given an efficient way to iterate over the search space, we next show that \cref{alg:opt_over_time_allocation} can find the optimal policy without increasing the run time:

\begin{theoremEnd}{thm}
\label{thm:alg_1_is_optimal}
\cref{alg:opt_over_time_allocation} finds the optimal over-time allocation in $O(T^3 \cdot 2^T)$~steps. 
\end{theoremEnd}
\begin{proofEnd}
    While iterating the search space through \cref{lem:iterate_policies}, we can store the number of individuals from~$\gA_k^t$ treated at each~$t$ and~$k$. Then there remains to calculate $U_k^t \coloneqq \E_{p \sim \gP^t(\cdot \mid y^t=k)}[u^t(p)]$. Assuming the expectation under posterior distribution can be calculated efficiently in constant steps, we can calculate the total utility of an instance in $O(T^2)$. Since this is happening in the same loop of iteration as \cref{lem:iterate_policies}, it does not add to the asymptotic complexity of the algorithm.
\end{proofEnd}

In real-world settings, time steps are typically on the scale of a month or a year. Therefore, $T$~is usually a small constant and the complexity of \cref{alg:opt_over_time_allocation} as stated in \cref{thm:alg_1_is_optimal} is manageable. Compared to the naive iteration over $\binom{B + 1}{T - 1}$ possible trajectories, \cref{alg:opt_over_time_allocation}'s complexity is significantly reduced by dropping the dependency on the number of individuals and~$B$. Powered by this algorithm, we next visualize the optimal over-time allocation in semi-synthetic settings.

\begin{algorithm}
\caption{Optimal over-time allocation}
\label{alg:opt_over_time_allocation}
\begin{algorithmic}[1]
    \State $U_\opt \gets 0$
    \For{$\hatt = 1$ to $T$, and $q(\cdot) \in $ valid sequences (as constructed in \cref{lem:iterate_policies})}
        \Statex \hspace{\algorithmicindent*1} \textit{Simulate as $\gA_{q(\hatt)}^{\hatt}$ are all treated:}
        
        \State \hspace{\algorithmicindent} $\{N_{q(t)}^t\}_{t=1}^T \gets \Call{SimulateTraj}{1, \{N_0^1, N_1^1\}, q(\cdot)}$
        
        \State \hspace{\algorithmicindent} $E_{\max} \gets \One\{q(1) = 0\} \cdot N_1^1
        + \sum_{t=1}^T N_{q(t)}^t$ \Comment{maximum expenditure}

        \Statex \hspace{\algorithmicindent*1} \textit{Simulate the difference as if no one in $\gA_{q(\hatt)}^{\hatt}$ was treated:}

        \State \hspace{\algorithmicindent} $\{\Delta N_{q(t)}^t\}_{t=\hatt}^T \gets \Call{SimulateTraj}{\hatt, \{0, \dots, 0, N_{q(\hatt)}^{\hatt}, 0, \dots, 0\}, q(\cdot) + \One\{(\cdot) = \hatt\}}$
        
        \State \hspace{\algorithmicindent}  $\Delta E \gets N_{q(\hatt)}^{\hatt} - \sum_{t=\hatt + 1}^T \Delta N_{q(t)}^t$ \Comment{decrease from the max. expenditure}

        \Statex \hspace{\algorithmicindent*1} \textit{Find what proportion of $\gA_{q(\hatt)}^{\hatt}$ to treat:}
        \State \hspace{\algorithmicindent} $\rho \gets \frac{E_{\max} - B}{\Delta E}$ \Comment{proportion of $\gA_{q(\hatt)}^{\hatt}$ to be left untreated}

        \IIf{$\rho > 1$ or $\rho < 0$} continue to the next possible~$q(\cdot)$ \EndIIf
        \Statex

        \Statex \hspace{\algorithmicindent*1} \textit{Calculate the total utility:}

        \IFor{$t = 1$ to $T$ and $k = q(t)$}{\hspace{\algorithmicindent}} $U_k^t \gets \E_{p \sim \gP^t(\cdot \mid y^t=k)}[u^t(p)] $ \EndIFor
        
        \State \hspace{\algorithmicindent}  $U \gets (1 - \rho) \, 
        N_{q(\hatt)}^{\hatt} \, U_{q(\hatt)}^{\hatt}
        + \sum_{t \neq \hatt} (N_{q(t)}^t + \rho \, \Delta N_{q(t)}^t) \, U_{q(t)}^t
        + \One\{q(1) = 0\} \cdot N_1^1 \, U_1^1$

        \Statex
        \IIf{$U > U_\opt$} $U_\opt \gets U$, $q_\opt \gets q$, $\hatt_\opt \gets \hatt$ \EndIIf
        \Comment{check for optimality}
    \EndFor
    \State \Return{$U_\opt, q_\opt, \hatt_\opt$}
\end{algorithmic}
\end{algorithm}


\subsection{Visualizing the optimal solution}
\cref{alg:opt_over_time_allocation} allows us to efficiently find the optimal over-time allocation. Using this algorithm, we next examine the effect of inequality, as encoded in the prior distribution of~$p$, and the budget size on the optimal over-time allocation.

Our goal is to demonstrate that even in the simplest non-contrived settings, the tradeoff between gaining more observations and losing vulnerable individuals strongly exists. Therefore, we make minimal assumptions about the model: Suppose the treatments are fully effective, the observation model is $\tilde{p} = p$, and the prior over failure probabilities follows $\Beta(\alpha, \beta)$. The beta distribution is a common and expressive choice for modeling priors over $[0, 1]$-bounded random variables. It is also the conjugate prior for the binomial distribution, which allows us to write the posterior distribution over failure probability in closed form: $\gP^t(\cdot \mid y^t = k) = \Beta(\alpha + k, \beta + 2t - k)$.

\paragraph{Data and parameters estimation.}
To make our simulations more realistic, we choose the initial distribution to reflect real-world data. In particular, we use National Education Longitudinal Study (NELS) of 1988, a longitudinal study with follow-ups at four points throughout the students' education.\footnote{\url{https://nces.ed.gov/surveys/nels88/}} In this data, failure corresponds to student dropout and is recorded.

We estimate the beta distribution parameters in the following manner. Let~$m_0$ and~$m_1$ be the proportion of the initial pool of individuals who failed right before the first and second steps, respectively. Assuming there has been no intervention at the first step, it follows from the central limit theorem that $m_0 \rightarrow \frac{\alpha}{\alpha + \beta}$ and $m_1 \rightarrow \frac{\alpha (\alpha + 1)}{(\alpha + \beta)(\alpha + \beta + 1)}$ at a fast rate of $O(1/\sqrt{N})$. By solving for $\alpha$ and $\beta$, we can accurately estimate the initial distribution as a Beta distribution. Our estimation gives $\alpha = 0.028$ and $\beta = 0.35$ for the case of NELS data. The mean of the estimated distribution also aligns with the NELS-provided estimation of dropout probability, with both methods predicting a dropout rate of around~$7\%$.
   
\paragraph{Results.}
We first study the effect of budget size. \cref{thm:one_tim_alloc_special} suggests that in case of one-time allocation, the optimal allocation time shifts with $\ln(\frac{N}{B})$. \cref{fig:opt_over_time_alloc_alpha0.028} suggests that a similar trend holds true in the case of over-time allocation: a larger budget favors earlier allocations.\footnote{Code is available at \\ \url{https://github.com/alishiraliGit/hidden-costs-of-waiting-for-accurate-predictions}.}
\begin{figure}[h]
    \centering
    \subfigure[$\frac{B}{N} = 5\%$]{
        \includegraphics[width=0.3\textwidth]{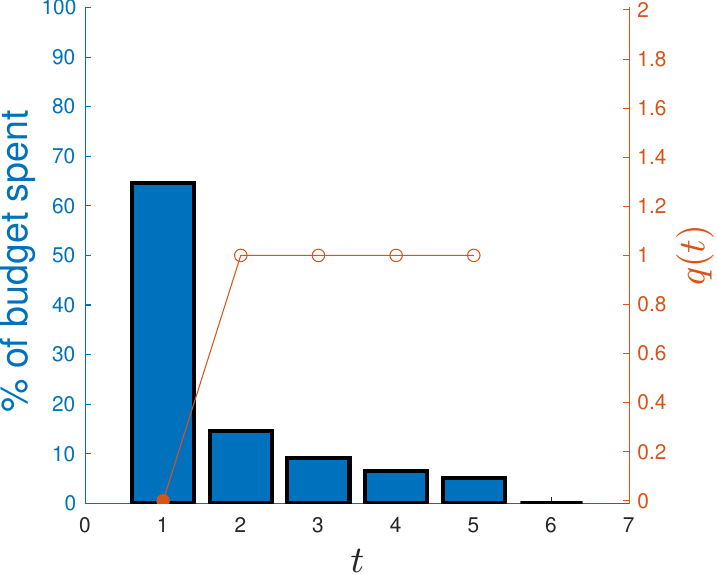}
        \label{fig:opt_over_time_alloc_b0.01_alpha0.028_beta0.35}
    }
    \subfigure[$\frac{B}{N} = 10\%$]{
        \includegraphics[width=0.3\textwidth]{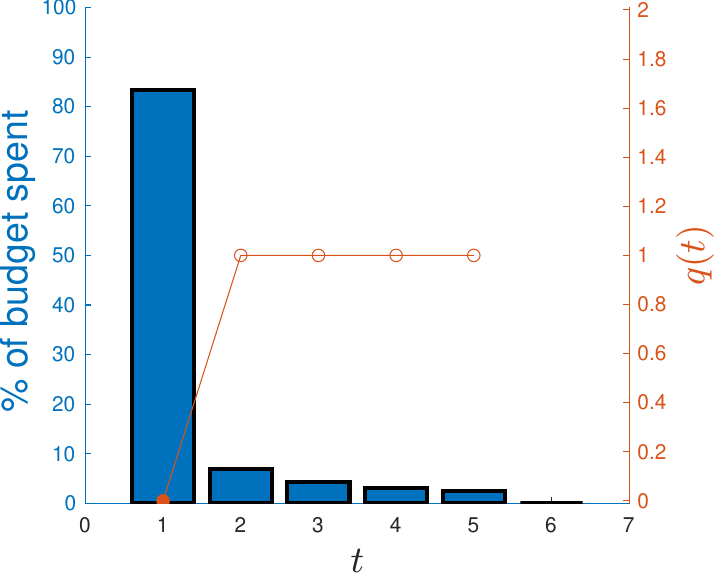}
        \label{fig:opt_over_time_alloc_b0.02_alpha0.028_beta0.35.pdf}
    }
    \subfigure[$\frac{B}{N} = 20\%$]{
        \includegraphics[width=0.3\textwidth]{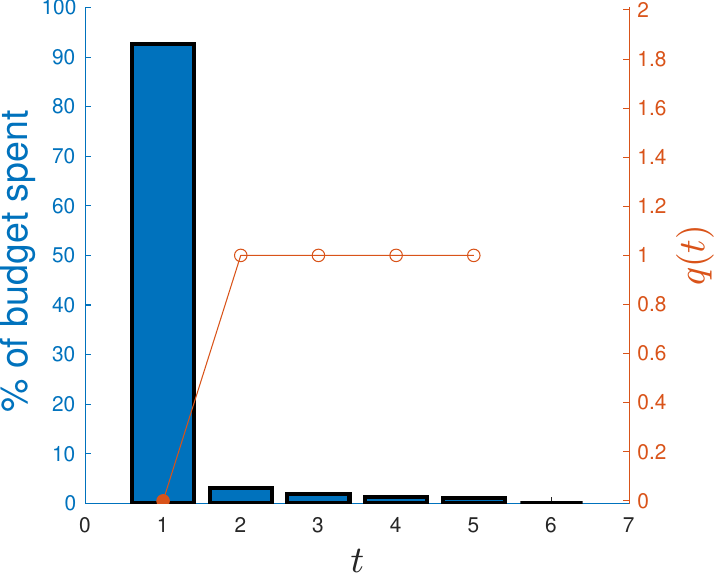}
    }
    \caption{Optimal over-time allocation for three sizes of the budget and a fixed prior estimated from NELS data. The orange curve depicts the optimal~$q(\cdot)$ and the filled circle corresponds to~$t=\hatt$.}
    \label{fig:opt_over_time_alloc_alpha0.028}
\end{figure}

We next extend our analysis beyond the NELS data distribution to study the effect of initial distribution, particularly inequality, on the optimal allocation. \cref{thm:one_tim_alloc_special} suggests that greater inequality can further favor earlier allocation. To simulate this effect, we fix~$\frac{B}{N}$ at~$10\%$ and consider three priors with differing tail decay. \cref{fig:opt_over_time_alloc_b0.1} indicates that as the prior approaches a uniform distribution, corresponding to maximum inequality in terms of our definition of $G$-decaying distributions, optimal allocation significantly favors earlier times.

\begin{figure}[h]
    \centering
    \subfigure[$\Beta(\alpha=0.4, \beta=1)$]{
        \includegraphics[width=0.3\textwidth]{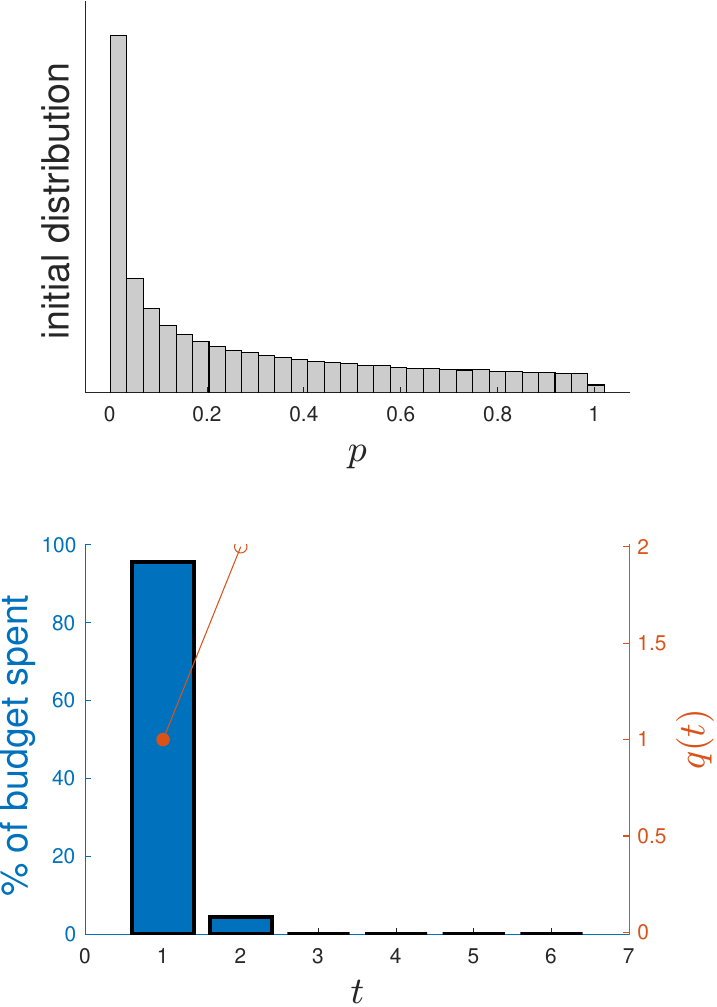}
        \label{fig:opt_over_time_alloc_b0.1_alpha0.4}
    }
    \subfigure[$\Beta(\alpha=0.2, \beta=1)$]{
        \includegraphics[width=0.3\textwidth]{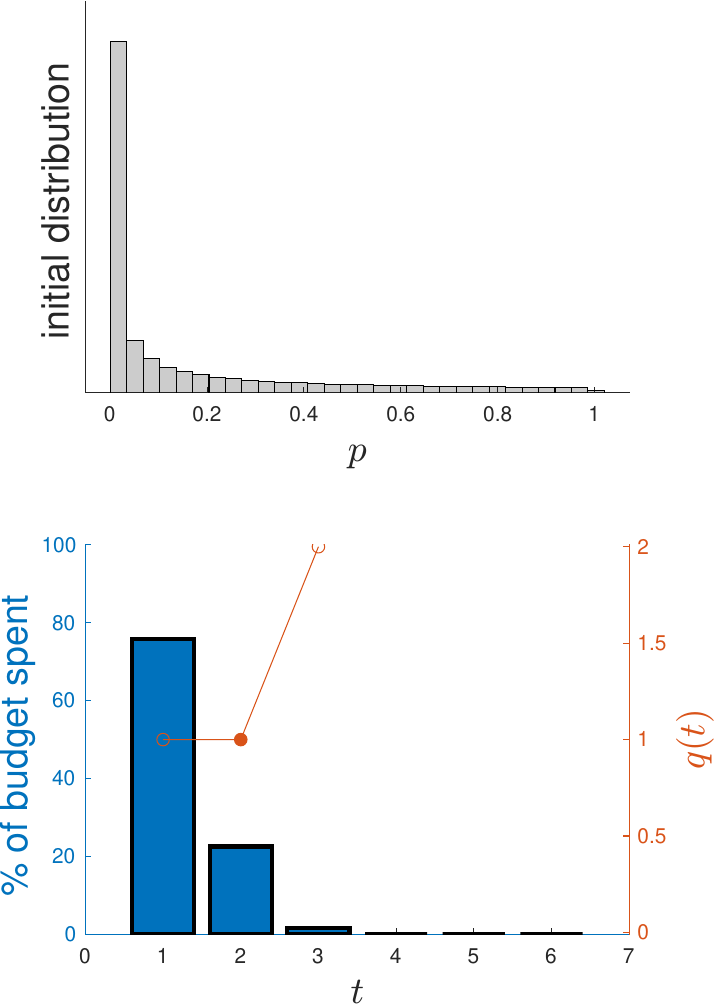}
        \label{fig:opt_over_time_alloc_b0.1_alpha0.2}
    }
    \subfigure[$\Beta(\alpha=0.1, \beta=1)$]{
        \includegraphics[width=0.3\textwidth]{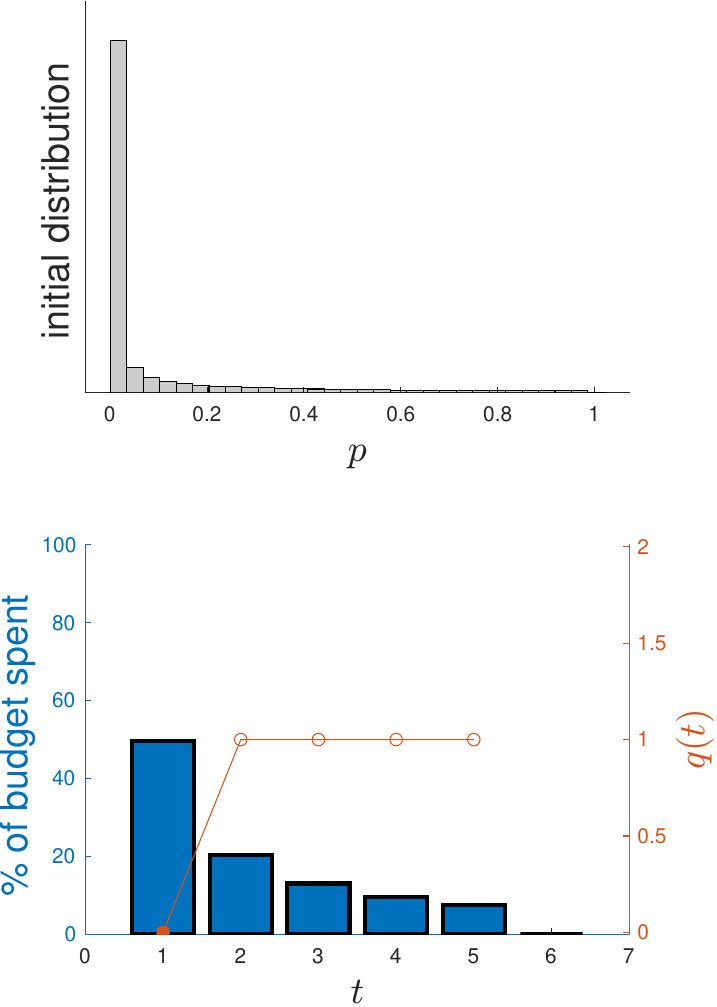}
        \label{fig:opt_over_time_alloc_b0.1_alpha0.1}
    }
    \caption{Optimal over-time allocation for three different priors and a fixed $\frac{B}{N} = 10\%$. The orange curve depicts the optimal~$q(\cdot)$ and the filled circle corresponds to~$t=\hatt$.}
    \label{fig:opt_over_time_alloc_b0.1}
\end{figure}


These visualizations confirm that a similar intuition as the one-time allocation setting also appears in the over-time allocation problem. 
\section{Discussion}
\label{sec:discussion}

Our work contributes a timing dimension to an emerging body of research on evaluating prediction-driven allocation. Predictive systems are introduced with the promise of minimizing waste and increasing efficiency. This existing predisposition, amplified further by the traditional focus on maximizing predictive accuracy, encourages practices that favor waiting to collect more information over acting early with noisier signals. Our study presents a simple model that challenges this practice.

Our work opens numerous lines of inquiry. For instance, we assume the planner has a fixed budget~$B$, corresponding to a fixed unit-cost intervention, that they can allocate all at once or over time. There are various natural variations worth exploring. For instance, we could consider heterogeneity in cost across time or different $p_i$~values. In the same spirit as \citet{perdomo2023relative}, we can also consider trading off this~$B$ with other interventions or parameters in the problem. 
We also think the tradeoffs between acting early and waiting to reduce uncertainty extend beyond our welfare-maximizing framework. For instance, future work could adapt our dynamic model to fairness-focused frameworks for studying uncertainty, such as those developed by \citet{singh2021fairness}, and explore whether information gains consistently lead to improved outcomes.

We make generic assumptions about the failure probabilities and collection of observations. In settings motivating our study, the failure probabilities change over time, favoring increasing inequality in the absence of interventions. Likewise collecting observations for vulnerable individuals may be more costly, contain less signal, or may otherwise be undesirable~\citep{monteiro2022epistemic}. Another general assumption we made is that, while individuals have heterogeneous values of~$p$, we do not account for variations in their initial conditions or ``starting points.'' Enriching the model we study to include such insights would only further justify early interventions in the presence of high inequality, though it would be interesting to examine the extent to which it does so. We should also point out that we consider the allocation problem in an unconstrained setting to highlight the generality of the tradeoffs. However, practical constraints can further limit the optimal allocation, in one way or another.

Our work introduces a potential lens through which to examine tradeoffs incurred by waiting to improve prediction accuracy. Our results, on their own, do not endorse early or late allocations for any specific setting. Each policy problem should be examined empirically, and policymakers must consider various community, policy, and practical considerations. Indeed, targeting as an effective means of improving welfare---which has fueled the use of predictive systems---is, itself, an actively debated policy concept~\citep{shirali2024allocation}. Nonetheless, we believe that the machine learning community can contribute to discussions around how to best evaluate predictive systems in such policy settings.

\subsubsection*{Acknowledgments}
We thank Jackie Baek, Joshua Blumenstock, Kate Donahue,  Avi Feller, Moritz Hardt, Michael I. Jordan, and Jiduan Wu for their generous feedback and discussions. Abebe was partially supported by the Andrew Carnegie Fellowship. Procaccia was partially supported by the National Science Foundation under grants IIS-2147187 and IIS-2229881; by the Office of Naval Research under grants N00014-24-1-2704 and N00014-25-1-2153; and by a grant from the Cooperative AI Foundation. 

\bibliography{refs}

\begin{thebibliography}{55}
\providecommand{\natexlab}[1]{#1}
\providecommand{\url}[1]{\texttt{#1}}
\expandafter\ifx\csname urlstyle\endcsname\relax
  \providecommand{\doi}[1]{doi: #1}\else
  \providecommand{\doi}{doi: \begingroup \urlstyle{rm}\Url}\fi

\bibitem[Abebe et~al.(2020)Abebe, Kleinberg, and Weinberg]{abebe2020subsidy}
Rediet Abebe, Jon Kleinberg, and S~Matthew Weinberg.
\newblock Subsidy allocations in the presence of income shocks.
\newblock In \emph{Proceedings of the 34th AAAI Conference on Artificial Intelligence}, pp.\  7032--7039, 2020.

\bibitem[Acharya et~al.(2023)Acharya, Arunachaleswaran, Kannan, Roth, and Ziani]{acharya2023wealth}
Krishna Acharya, Eshwar~Ram Arunachaleswaran, Sampath Kannan, Aaron Roth, and Juba Ziani.
\newblock Wealth dynamics over generations: Analysis and interventions.
\newblock In \emph{2023 IEEE Conference on Secure and Trustworthy Machine Learning (SaTML)}, pp.\  42--57. IEEE, 2023.

\bibitem[Agrawal \& Devanur(2016)Agrawal and Devanur]{agrawal2016linear}
Shipra Agrawal and Nikhil Devanur.
\newblock Linear contextual bandits with knapsacks.
\newblock \emph{Advances in Neural Information Processing Systems}, 29, 2016.

\bibitem[Aiken et~al.(2022)Aiken, Bellue, Karlan, Udry, and Blumenstock]{aiken2022machine}
Emily Aiken, Suzanne Bellue, Dean Karlan, Chris Udry, and Joshua~E Blumenstock.
\newblock Machine learning and phone data can improve targeting of humanitarian aid.
\newblock \emph{Nature}, 603\penalty0 (7903):\penalty0 864--870, 2022.

\bibitem[Athey(2017)]{athey2017beyond}
Susan Athey.
\newblock Beyond prediction: Using big data for policy problems.
\newblock \emph{Science}, 355\penalty0 (6324):\penalty0 483--485, 2017.

\bibitem[Athey \& Wager(2021)Athey and Wager]{athey2021policy}
Susan Athey and Stefan Wager.
\newblock Policy learning with observational data.
\newblock \emph{Econometrica}, 89\penalty0 (1):\penalty0 133--161, 2021.

\bibitem[Azizi et~al.(2018)Azizi, Vayanos, Wilder, Rice, and Tambe]{azizi2018designing}
Mohammad~Javad Azizi, Phebe Vayanos, Bryan Wilder, Eric Rice, and Milind Tambe.
\newblock Designing fair, efficient, and interpretable policies for prioritizing homeless youth for housing resources.
\newblock In \emph{Integration of Constraint Programming, Artificial Intelligence, and Operations Research: 15th International Conference, CPAIOR 2018, Delft, The Netherlands, June 26--29, 2018, Proceedings 15}, pp.\  35--51. Springer, 2018.

\bibitem[Barabas et~al.(2018)Barabas, Virza, Dinakar, Ito, and Zittrain]{barabas2018interventions}
Chelsea Barabas, Madars Virza, Karthik Dinakar, Joichi Ito, and Jonathan Zittrain.
\newblock Interventions over predictions: Reframing the ethical debate for actuarial risk assessment.
\newblock In \emph{Proceedings of the 1st ACM Conference on Fairness, Accountability, and Transparency}, pp.\  62--76, 2018.

\bibitem[Bhattacharya \& Dupas(2012)Bhattacharya and Dupas]{bhattacharya2012inferring}
Debopam Bhattacharya and Pascaline Dupas.
\newblock Inferring welfare maximizing treatment assignment under budget constraints.
\newblock \emph{Journal of Econometrics}, 167\penalty0 (1):\penalty0 168--196, 2012.

\bibitem[Bierman(2002)]{bierman2002implementation}
Karen Bierman.
\newblock The implementation of the fast track program: An example of a large-scale prevention science efficacy trial.
\newblock \emph{Journal of Abnormal Child Psychology}, 30:\penalty0 1--17, 2002.

\bibitem[Boehmer et~al.(2024)Boehmer, Nair, Shah, Janson, Taneja, and Tambe]{boehmer2024evaluating}
Niclas Boehmer, Yash Nair, Sanket Shah, Lucas Janson, Aparna Taneja, and Milind Tambe.
\newblock Evaluating the effectiveness of index-based treatment allocation.
\newblock \emph{arXiv preprint arXiv:2402.11771}, 2024.

\bibitem[Chan et~al.(2012)Chan, Farias, Bambos, and Escobar]{chan2012optimizing}
Carri~W Chan, Vivek~F Farias, Nicholas Bambos, and Gabriel~J Escobar.
\newblock Optimizing intensive care unit discharge decisions with patient readmissions.
\newblock \emph{Operations research}, 60\penalty0 (6):\penalty0 1323--1341, 2012.

\bibitem[Chetty et~al.(2016)Chetty, Hendren, and Katz]{chetty2016effects}
Raj Chetty, Nathaniel Hendren, and Lawrence~F Katz.
\newblock The effects of exposure to better neighborhoods on children: New evidence from the moving to opportunity experiment.
\newblock \emph{American Economic Review}, 106\penalty0 (4):\penalty0 855--902, 2016.

\bibitem[de~Souza~Briggs et~al.(2010)de~Souza~Briggs, Popkin, and Goering]{de2010moving}
Xavier de~Souza~Briggs, Susan~J Popkin, and John Goering.
\newblock \emph{Moving to opportunity: The story of an American experiment to fight ghetto poverty}.
\newblock Oxford University Press, 2010.

\bibitem[Derenoncourt(2022)]{derenoncourt2022can}
Ellora Derenoncourt.
\newblock Can you move to opportunity? {E}vidence from the great migration.
\newblock \emph{American Economic Review}, 112\penalty0 (2):\penalty0 369--408, 2022.

\bibitem[Elmachtoub \& Grigas(2022)Elmachtoub and Grigas]{elmachtoub2022smart}
Adam~N Elmachtoub and Paul Grigas.
\newblock Smart “predict, then optimize”.
\newblock \emph{Management Science}, 68\penalty0 (1):\penalty0 9--26, 2022.

\bibitem[Elmachtoub et~al.(2020)Elmachtoub, Liang, and McNellis]{elmachtoub2020decision}
Adam~N Elmachtoub, Jason Cheuk~Nam Liang, and Ryan McNellis.
\newblock Decision trees for decision-making under the predict-then-optimize framework.
\newblock In \emph{International Conference on Machine Learning}, pp.\  2858--2867. PMLR, 2020.

\bibitem[Eubanks(2018)]{eubanks2018automating}
Virginia Eubanks.
\newblock \emph{Automating inequality: How high-tech tools profile, police, and punish the poor}.
\newblock St. Martin's Press, 2018.

\bibitem[Faria et~al.(2017)Faria, Sorensen, Heppen, Bowdon, Taylor, Eisner, and Foster]{faria2017getting}
Ann-Marie Faria, Nicholas Sorensen, Jessica Heppen, Jill Bowdon, Suzanne Taylor, Ryan Eisner, and Shandu Foster.
\newblock Getting students on track for graduation: Impacts of the early warning intervention and monitoring system after one year.
\newblock \emph{Regional Educational Laboratory Midwest}, 2017.

\bibitem[Gennetian et~al.(2012)Gennetian, Sanbonmatsu, Katz, Kling, Sciandra, Ludwig, Duncan, and Kessler]{gennetian2012long}
Lisa~A Gennetian, Lisa Sanbonmatsu, Lawrence~F Katz, Jeffrey~R Kling, Matthew Sciandra, Jens Ludwig, Greg~J Duncan, and Ronald~C Kessler.
\newblock The long-term effects of moving to opportunity on youth outcomes.
\newblock \emph{Cityscape}, pp.\  137--167, 2012.

\bibitem[Gubits et~al.(2016)Gubits, Shinn, Wood, Bell, Dastrup, Solari, Brown, McInnis, McCall, and Kattel]{gubits2016family}
Daniel Gubits, Marybeth Shinn, Michelle Wood, Stephen Bell, Samuel Dastrup, Claudia Solari, Scott Brown, Debi McInnis, Tom McCall, and Utsav Kattel.
\newblock Family options study: 3-year impacts of housing and services interventions for homeless families.
\newblock \emph{Available at SSRN 3055295}, 2016.

\bibitem[Hardt \& Kim(2023)Hardt and Kim]{hardt2023your}
Moritz Hardt and Michael~P Kim.
\newblock Is your model predicting the past?
\newblock In \emph{Proceedings of the 3rd ACM Conference on Equity and Access in Algorithms, Mechanisms, and Optimization}, pp.\  1--8, 2023.

\bibitem[Heidari \& Kleinberg(2021)Heidari and Kleinberg]{heidari2021allocating}
Hoda Heidari and Jon Kleinberg.
\newblock Allocating opportunities in a dynamic model of intergenerational mobility.
\newblock In \emph{Proceedings of the 2021 ACM Conference on Fairness, Accountability, and Transparency}, pp.\  15--25, 2021.

\bibitem[Kitagawa \& Tetenov(2018)Kitagawa and Tetenov]{kitagawa2018should}
Toru Kitagawa and Aleksey Tetenov.
\newblock Who should be treated? {E}mpirical welfare maximization methods for treatment choice.
\newblock \emph{Econometrica}, 86\penalty0 (2):\penalty0 591--616, 2018.

\bibitem[Kleinberg et~al.(2015)Kleinberg, Ludwig, Mullainathan, and Obermeyer]{kleinberg2015prediction}
Jon Kleinberg, Jens Ludwig, Sendhil Mullainathan, and Ziad Obermeyer.
\newblock Prediction policy problems.
\newblock \emph{American Economic Review}, 105\penalty0 (5):\penalty0 491--495, 2015.

\bibitem[Kube et~al.(2023)Kube, Das, and Fowler]{kube2023fair}
Amanda~R Kube, Sanmay Das, and Patrick~J Fowler.
\newblock Fair and efficient allocation of scarce resources based on predicted outcomes: Implications for homeless service delivery.
\newblock \emph{Journal of Artificial Intelligence Research}, 76:\penalty0 1219--1245, 2023.

\bibitem[Levine et~al.(2017)Levine, Crammer, and Mannor]{levine2017rotting}
Nir Levine, Koby Crammer, and Shie Mannor.
\newblock Rotting bandits.
\newblock \emph{Advances in neural information processing systems}, 30, 2017.

\bibitem[Levy et~al.(2021)Levy, Chasalow, and Riley]{levy2021algorithms}
Karen Levy, Kyla~E Chasalow, and Sarah Riley.
\newblock Algorithms and decision-making in the public sector.
\newblock \emph{Annual Review of Law and Social Science}, 17:\penalty0 309--334, 2021.

\bibitem[Ludwig et~al.(2008)Ludwig, Liebman, Kling, Duncan, Katz, Kessler, and Sanbonmatsu]{ludwig2008can}
Jens Ludwig, Jeffrey~B Liebman, Jeffrey~R Kling, Greg~J Duncan, Lawrence~F Katz, Ronald~C Kessler, and Lisa Sanbonmatsu.
\newblock What can we learn about neighborhood effects from the moving to opportunity experiment?
\newblock \emph{American Journal of Sociology}, 114\penalty0 (1):\penalty0 144--188, 2008.

\bibitem[Ludwig et~al.(2013)Ludwig, Duncan, Gennetian, Katz, Kessler, Kling, and Sanbonmatsu]{ludwig2013long}
Jens Ludwig, Greg~J Duncan, Lisa~A Gennetian, Lawrence~F Katz, Ronald~C Kessler, Jeffrey~R Kling, and Lisa Sanbonmatsu.
\newblock Long-term neighborhood effects on low-income families: Evidence from moving to opportunity.
\newblock \emph{American Economic Review}, 103\penalty0 (3):\penalty0 226--231, 2013.

\bibitem[Mac~Iver et~al.(2019)Mac~Iver, Stein, Davis, Balfanz, and Fox]{mac2019efficacy}
Martha~Abele Mac~Iver, Marc~L Stein, Marcia~H Davis, Robert~W Balfanz, and Joanna~Hornig Fox.
\newblock An efficacy study of a ninth-grade early warning indicator intervention.
\newblock \emph{Journal of Research on Educational Effectiveness}, 12\penalty0 (3):\penalty0 363--390, 2019.

\bibitem[Mashiat et~al.(2024)Mashiat, DiChristofano, Fowler, and Das]{mashiat2024beyond}
Tasfia Mashiat, Alex DiChristofano, Patrick~J Fowler, and Sanmay Das.
\newblock Beyond eviction prediction: Leveraging local spatiotemporal public records to inform action.
\newblock \emph{arXiv preprint arXiv:2401.16440}, 2024.

\bibitem[Mate et~al.(2023)Mate, Wilder, Taneja, and Tambe]{mate2023improved}
Aditya Mate, Bryan Wilder, Aparna Taneja, and Milind Tambe.
\newblock Improved policy evaluation for randomized trials of algorithmic resource allocation.
\newblock In \emph{International Conference on Machine Learning}, pp.\  24198--24213. PMLR, 2023.

\bibitem[Mohri(2018)]{mohri2018foundations}
Mehryar Mohri.
\newblock Foundations of machine learning, 2018.

\bibitem[Monteiro~Paes et~al.(2022)Monteiro~Paes, Long, Ustun, and Calmon]{monteiro2022epistemic}
Lucas Monteiro~Paes, Carol Long, Berk Ustun, and Flavio Calmon.
\newblock On the epistemic limits of personalized prediction.
\newblock \emph{Advances in Neural Information Processing Systems}, 35:\penalty0 1979--1991, 2022.

\bibitem[Mukhopadhyay \& Vorobeychik(2017)Mukhopadhyay and Vorobeychik]{mukhopadhyay2017prioritized}
Ayan Mukhopadhyay and Yevgeniy Vorobeychik.
\newblock Prioritized allocation of emergency responders based on a continuous-time incident prediction model.
\newblock In \emph{International Conference on Autonomous Agents and MultiAgent Systems}, 2017.

\bibitem[Obermeyer \& Emanuel(2016)Obermeyer and Emanuel]{obermeyer2016predicting}
Ziad Obermeyer and Ezekiel~J Emanuel.
\newblock Predicting the future—big data, machine learning, and clinical medicine.
\newblock \emph{The New England journal of medicine}, 375\penalty0 (13):\penalty0 1216, 2016.

\bibitem[Perdomo(2024)]{perdomo2023relative}
Juan~Carlos Perdomo.
\newblock The relative value of prediction in algorithmic decision making.
\newblock In \emph{Forty-first International Conference on Machine Learning}, 2024.

\bibitem[Perdomo et~al.(2023)Perdomo, Britton, Hardt, and Abebe]{dews}
Juan~Carlos Perdomo, Tolani Britton, Moritz Hardt, and Rediet Abebe.
\newblock Difficult lessons on social prediction from {W}isconsin public schools.
\newblock \emph{arXiv preprint arXiv:2304.06205}, 2023.

\bibitem[Rismanchian \& Doroudi(2023)Rismanchian and Doroudi]{rismanchian2023four}
Sina Rismanchian and Shayan Doroudi.
\newblock Four interactions between ai and education: Broadening our perspective on what ai can offer education.
\newblock In \emph{International Conference on Artificial Intelligence in Education}, pp.\  1--12. Springer, 2023.

\bibitem[Salganik et~al.(2020)Salganik, Lundberg, Kindel, Ahearn, Al-Ghoneim, Almaatouq, Altschul, Brand, Carnegie, Compton, et~al.]{salganik2020measuring}
Matthew~J Salganik, Ian Lundberg, Alexander~T Kindel, Caitlin~E Ahearn, Khaled Al-Ghoneim, Abdullah Almaatouq, Drew~M Altschul, Jennie~E Brand, Nicole~Bohme Carnegie, Ryan~James Compton, et~al.
\newblock Measuring the predictability of life outcomes with a scientific mass collaboration.
\newblock \emph{Proceedings of the National Academy of Sciences}, 117\penalty0 (15):\penalty0 8398--8403, 2020.

\bibitem[Shapiro(2004)]{shapiro2004hidden}
Thomas~M Shapiro.
\newblock \emph{The hidden cost of being African American: How wealth perpetuates inequality}.
\newblock Oxford University Press, 2004.

\bibitem[Shirali(2022)]{shirali2022sequential}
Ali Shirali.
\newblock Sequential nature of recommender systems disrupts the evaluation process.
\newblock In \emph{International Workshop on Algorithmic Bias in Search and Recommendation}, pp.\  21--34. Springer, 2022.

\bibitem[Shirali et~al.(2024{\natexlab{a}})Shirali, Abebe, and Hardt]{shirali2024allocation}
Ali Shirali, Rediet Abebe, and Moritz Hardt.
\newblock Allocation requires prediction only if inequality is low.
\newblock In \emph{Forty-first International Conference on Machine Learning}, 2024{\natexlab{a}}.

\bibitem[Shirali et~al.(2024{\natexlab{b}})Shirali, Schubert, and Alaa]{shirali2024pruning}
Ali Shirali, Alexander Schubert, and Ahmed Alaa.
\newblock Pruning the way to reliable policies: A multi-objective deep q-learning approach to critical care.
\newblock \emph{IEEE Journal of Biomedical and Health Informatics}, 2024{\natexlab{b}}.

\bibitem[Singh et~al.(2021)Singh, Kempe, and Joachims]{singh2021fairness}
Ashudeep Singh, David Kempe, and Thorsten Joachims.
\newblock Fairness in ranking under uncertainty.
\newblock \emph{Advances in Neural Information Processing Systems}, 34:\penalty0 11896--11908, 2021.

\bibitem[Slivkins et~al.(2023)Slivkins, Sankararaman, and Foster]{slivkins2023contextual}
Aleksandrs Slivkins, Karthik~Abinav Sankararaman, and Dylan~J Foster.
\newblock Contextual bandits with packing and covering constraints: A modular lagrangian approach via regression.
\newblock In \emph{Proceedings of the 36th Annual Conference on Learning Theory}, pp.\  4633--4656, 2023.

\bibitem[Stokey(2008)]{stokey2008economics}
Nancy~L Stokey.
\newblock \emph{The Economics of Inaction: Stochastic Control models with fixed costs}.
\newblock Princeton University Press, 2008.

\bibitem[Subbhuraam(2021)]{subbhuraam2021predictive}
Vinithasree Subbhuraam (ed.).
\newblock \emph{Predictive Analytics in Healthcare}.
\newblock IOP Publishing, 2021.

\bibitem[Toros \& Flaming(2018)Toros and Flaming]{toros2018prioritizing}
Halil Toros and Daniel Flaming.
\newblock Prioritizing homeless assistance using predictive algorithms: an evidence-based approach.
\newblock \emph{Cityscape}, 20\penalty0 (1):\penalty0 117--146, 2018.

\bibitem[Wang et~al.(2024)Wang, Kapoor, Barocas, and Narayanan]{wang2024against}
Angelina Wang, Sayash Kapoor, Solon Barocas, and Arvind Narayanan.
\newblock Against predictive optimization: On the legitimacy of decision-making algorithms that optimize predictive accuracy.
\newblock \emph{ACM Journal on Responsible Computing}, 1\penalty0 (1):\penalty0 1--45, 2024.

\bibitem[Wilder \& Welle(2024)Wilder and Welle]{wilder2024learning}
Bryan Wilder and Pim Welle.
\newblock Learning treatment effects while treating those in need.
\newblock \emph{arXiv preprint arXiv:2407.07596}, 2024.

\bibitem[Wilder et~al.(2019)Wilder, Dilkina, and Tambe]{wilder2019melding}
Bryan Wilder, Bistra Dilkina, and Milind Tambe.
\newblock Melding the data-decisions pipeline: Decision-focused learning for combinatorial optimization.
\newblock In \emph{Proceedings of the AAAI Conference on Artificial Intelligence}, volume~33, pp.\  1658--1665, 2019.

\bibitem[Zezulka \& Genin(2024)Zezulka and Genin]{zezulka2024fair}
Sebastian Zezulka and Konstantin Genin.
\newblock From the fair distribution of predictions to the fair distribution of social goods: Evaluating the impact of fair machine learning on long-term unemployment.
\newblock \emph{arXiv preprint arXiv:2401.14438}, 2024.

\bibitem[Zhou et~al.(2024)Zhou, Huang, Azizzadenesheli, Childers, and Lipton]{zhou2024timing}
Helen Zhou, Audrey Huang, Kamyar Azizzadenesheli, David Childers, and Zachary Lipton.
\newblock Timing as an action: Learning when to observe and act.
\newblock In \emph{International Conference on Artificial Intelligence and Statistics}, pp.\  3979--3987. PMLR, 2024.

\end{thebibliography}
\bibliographystyle{iclr2025template/iclr2025_conference}

\newpage
\appendix
\clearpage
\section{Notational conventions}
The variables related to an individual~$i$ are indexed with a subscript~$i$. The variables related to time~$t$ are indexed with a superscript~$t$. The variables transformed with $\tilp(\cdot)$ are denoted with a tilde.

\begin{table}[h]
    \centering
    \begin{tabular}{ll}
        \toprule
        Symbol & Notion  \\
        \toprule
        $t$ & Time step which takes a value from $1$ to~$T$ \\
        $T$ & Horizon \\
        $p_i$ & Failure probability of individual~$i$ \\
        $o_i^t$ & Binary observation from an active individual at time~$t$ \\
        $\epsilon$ & Noise on observation model of \cref{eq:obs_model} \\
        $\tilp$ & $\tilp(p) \coloneqq (1-\epsilon) f(p) + (1-f(p)) \epsilon$ \\
        $y_i^t$ & Number of positive observations from an active individual~$i$ up to time~$t$: $y_i^t \coloneqq \sum_{t' \in [t]} o_i^{t'}$ \\
        $\gA^t$ & Set of active individuals at time~$t$ \\
        $\gA_k^t$ & Set of active individuals at~$t$ with~$y^t=k$: $\gA_k^t \coloneqq \{i \mid y_i^t = k\}$ 
        \\ $\gI^t$ & Set of individuals treated at~$t$ \\
        $\bargA_k^t$ & $\gA_k^t$ excluding those who will be treated at~$t$: $\bargA_k^t \coloneqq \gA_k^t \setminus \gI^t$ \\
        $N$ & Number of initial individuals at time~$t=1$ \\
        $N^t$ & Number of individuals who made it to time~$t$: $N^t \coloneqq |\gA^t|$ \\
        $n^t$ & Proportion of initial individuals who made it to time~$t$: $n^t \coloneqq N^t/N$ \\
        $n_k^t$ & $n_k^t \coloneqq |\gA_k^t|/N$ \\
        $\barn_k^t$ & $\barn_k^t \coloneqq |\bargA_k^t|/N$ \\
        $\gP^t(\cdot)$ & Posterior over~$p$ for an active individual at~$t$ \\
        $\gP_k^t = \gP^t(\cdot \mid y^t = k)$ & Posterior over~$p$ given an individual has made it to~$t$ and $y^t=k$ \\
        $\Pr^t(\cdot)$ & Probability measure over active individuals at time~$t$ \\
        $\Pr_{i,j}^t(\cdot)$ & Probability measure over two independent active individuals~$i$ and~$j$ at time~$t$ \\
        $\Pr^t(y^t = k)$ & Probability that an active individual at~$t$ has~$y^t = k$ \\
        $\Pr^t(y^t = k \mid p)$ & Likelihood that an active individual with failure probability~$p$ shows $y^t = k$ \\
        $\E^t[\cdot]$ & Expectation over active individuals at time~$t$ \\
        $\E_k^t[\cdot]$ & Expectation over active individuals at time~$t$ with $y^t=k$ \\
        $\E_{i,j}^t[\cdot]$ & Expectation over two independent active individuals~$i$ and~$j$ at time~$t$ \\
        $\mu^t$ & Mean of failure probability at time~$t$: $\mu^t \coloneqq \E^t[p]$ \\
        $\mu_k^t$ & Mean of failure probability at time~$t$ given~$y^t=k$: $\mu^t \coloneqq \E_k^t[p]$ \\
        $\Var^t[\cdot]$ & Variance under $\Pr^t(\cdot)$ \\
        $\tilsigma_i^2$ & $\tilsigma_i^2 \coloneqq \tilp_i \cdot (1 - \tilp_i)$ \\
        $\tilsigma_{ij}^2$ & $\tilsigma_{ij}^2 \coloneqq \tilsigma_i^2 + \tilsigma_j^2$ \\
        $G(\cdot)$ & Cumulative distribution function (CDF) of the standard normal distribution \\
        $g(\cdot)$ & Probability density function (PDF) of the standard normal distribution
    \end{tabular}
    \caption{Glossary}
    \label{tab:glossary}
\end{table}


\clearpage 
\section{Related work}
\label{sec:related}

\paragraph{Prediction for allocation.}
Algorithmic predictions are increasingly employed to identify individuals who are most in need of limited resources. In many of these applications, the timing of prediction and allocation is of the utmost importance. Examples include directing assistance to tenants at risk of eviction based on their predicted risk~\citep{mashiat2024beyond}, using early warning systems to identify students at risk of dropout~\citep{faria2017getting,mac2019efficacy,dews,rismanchian2023four}, prioritizing homelessness assistance while considering population dynamics~\citep{toros2018prioritizing,azizi2018designing,kube2023fair}, making ICU discharge decisions based on readmission or mortality probability~\citep{chan2012optimizing,shirali2024pruning}, and improved targeting of humanitarian aids~\citep{aiken2022machine}. For a discussion on the role of machine learning in clinical medicine in particular, refer to \citet{obermeyer2016predicting}.

The adoption of predictive tools in resource allocation often comes with a promise that improvements in prediction accuracy can transfer to the allocation setting. Recent critical studies, however, have challenged this~\citep{barabas2018interventions,shirali2024allocation,perdomo2023relative}. Our work gives a new timing dimension to this problem where prediction improvement is entangled with the population dynamics. Our work also emphasizes the role of inequality in this dynamic setting. In line with \citet{shirali2024allocation} we found inequality a determinant factor in deciding which form of allocation works best in the interest of social welfare. 

Welfare-maximizing treatment assignment under budget constraints is also a well-established topic in economics~\citep{bhattacharya2012inferring,kitagawa2018should}. While much of the literature focuses on estimating static heterogeneous treatment effects for a fixed population, we extend this work by examining the problem's dynamic aspects. Our model also bypasses the need to estimate treatment effects in observational settings~\citep{athey2021policy} by incorporating these complexities into a general class of utility function and emphasizing the often-overlooked role of timing in predictions and allocations. 

\paragraph{Related problem settings.}
A related setting that introduces a similar tradeoff to ours is when observations come at a cost \citep{stokey2008economics,zhou2024timing}. Our setting is distinct from this line of research as in our model, the cost of additional observations arises naturally from the loss of opportunity to intervene early, rather than being part of the modeling assumptions.

Our work is closely related to subsidy allocation in the presence of income shocks, as studied by \citet{abebe2020subsidy}. Their model captures a more general dynamic where individuals fail after experiencing potentially multiple shocks. Unlike \citet{abebe2020subsidy}, we do not assume a full information setting. In a similar vein, our proposed dynamic is also related to the dynamic models of opportunity allocation~\citep{heidari2021allocating} and the dynamics of wealth across generations~\citep{acharya2023wealth}.

Generally, the best estimates of the treatment effect are obtained when the allocation is randomized. This is often not the case when we need to learn while treating those in need~\citep{wilder2024learning}. The sample independence assumption is often violated in these settings~\citep{shirali2022sequential} and recent works have proposed various estimators to improve the power of estimators~\citep{mate2023improved,boehmer2024evaluating}. Our model largely circumvents this complexity, as a simple ranking based on available observations is always Bayes optimal. 

Our discussion is related to decision-focused learning~\citep{mukhopadhyay2017prioritized,wilder2019melding,elmachtoub2020decision,elmachtoub2022smart} in the Operations Research community, where predictions are informed by their downstream applications. In our work, we employ a simplified observation model that allows us to consistently obtain a posterior distribution over hidden variables. This approach circumvents the challenges that could arise from inaccurate or biased predictions. There is also a direct connection between our \cref{alg:opt_over_time_allocation} and decision-focused learning. If we consider prediction as the ranking of individuals at \emph{all} time points, then this algorithm effectively identifies the optimal prediction tailored for the subsequent allocation step.

Our over-time allocation is also related to multi-armed bandit problems with resource constraints in addition to reward (or utility) generation~\citep{agrawal2016linear,slivkins2023contextual}. In particular, our model is most similar to the rotting bandit~\citep{levine2017rotting} where reward decreases over time. Unlike the standard bandit problems, in our problem observations are available from all individuals and not only those treated. The exploration/exploitation tradeoff then lies in waiting for further information or treating those already estimated to be vulnerable. 

\paragraph{Prediction and policy problems.}
Historically, policy planning has relied on aggregate data; however, the promise of improved resource allocation, reduced costs, and more preventative interventions has led to the widespread adoption of algorithmic systems on an individual basis in governments~\citep{athey2017beyond,levy2021algorithms}. Our work contributes to this discussion, as our insights have direct implications for policy planning in evolving social contexts. 

While causal inference can inform policy-making, it is not always necessary~\citep{kleinberg2015prediction}. Our framework falls under the category of prediction policy problems where an accurate ranking of individuals is sufficient for effective allocation. Related to this topic, \citet{wang2024against} raise concerns about the legitimacy of decision-making based on predictive optimization.

The debate surrounding risk assessment tools has largely centered around their inherently predictive nature. However, as emphasized by \citet{barabas2018interventions} in the context of the criminal justice system, the focus of risk assessment should be on guiding interventions rather than merely making predictions. Our research aligns with this perspective by studying prediction not as an isolated task but as an integral part of the resource allocation process.

\paragraph{The long-lasting effect of interventions.}
The Moving to Opportunity (MTO) experiment, sponsored by the U.S. Department of Housing and Urban Development, exemplifies an early intervention aimed at improving life outcomes by providing low-income families with children living in disadvantaged urban public housing the opportunity to relocate to less distressed private-market housing communities~\citep{de2010moving,ludwig2008can,gennetian2012long,ludwig2013long,chetty2016effects}. The mixed findings of the MTO experiment across different age groups and the contrast between interim and long-term analyses highlight the crucial role that timing and the considered time horizon play in determining intervention's effect.

The MTO experiment also shows early interventions and environmental factors can have a long-lasting influence. In our model, we consider the extreme case of this when individuals subject to intervention are no longer vulnerable at any future time point. \citet{hardt2023your} discuss how these long-lasting effects inform future predictions.

\citet{shapiro2004hidden} argues that initial differences, rather than wage disparities, are the primary drivers of persistent inequality in the United States. Consistently, \citet{derenoncourt2022can} show that while moving to areas with better economic opportunities theoretically provided improved prospects, local responses counteracted many of the potential benefits. Such complexities are all abstracted into the probability of failure in our model. While this abstraction helps us focus on specific aspects, we acknowledge that it does not capture the full range of dynamics in the real world.


\clearpage
\section{An illustration of tradeoffs in one-time allocation}
\label{sec:one_time_alloc_illustration}

To further illustrate why the budget and inequality are key factors in determining the cost of waiting for accurate predictions in one-time allocation, we present the following example. 

Suppose initially, $\gP^1 = \Beta(1, G + 1)$. This ensures that~$\gP^1$ is $G$-decaying according to \cref{def:G_decaying}. Consider the problem of whether postponing the allocation from $t=1$ to~$t+1=2$ is beneficial, or equivalently, whether $\Delta W^t \coloneqq W^{t+1} - W^t > 0$. For simplicity of the illustration, we assume $\tilp = p$ and fully effective treatments. 

Denoting the set of active individuals at~$t$ with~$y^t=k$ by~$\gA_k^t$ and $N_k^t \coloneqq |\gA_k^t|$, a direct calculation gives 
\begin{align*}
    &N_0^1 = N \, \E[1 - \tilp] = N \frac{G + 1}{G + 2} \,, &&N_1^1 =  N \, \E[\tilp] = N \frac{1}{G + 2} \,, \\
    &N_1^2 = N \,\E[(1-p)(1 - \tilp)\tilp] = N \frac{2(G + 1)}{(G + 2)(G + 3)} \,, &&N_2^2 =  N \,\E[(1-p)\tilp^2] = N \frac{2}{(G + 2)(G + 3)} \,.
\end{align*}
Note that $N_1^1 > N_2^2$ and $N_1^1 < N_1^2 + N_2^2$. 

\cref{lem:higher_k_higher_util_of_ber} implies that the optimal allocation at~$t$, should treat those who have higher~$y^t$. Therefore, in our example, assuming $B < N_1^1$, the optimal allocation at~$t=1$ only treats those in~$\gA_1^1$. Similarly, at~$t+1=2$, it first treats those in~$\gA_2^2$ and then~$\gA_1^2$. So, defining $U_k^t\coloneqq \E_{p \sim \gP^t(\cdot \mid y^t = k)}[u^t(p)]$, for $B < N_1^1$, it is straightforward to obtain  
\begin{equation*}
    \Delta W^t = \begin{cases}
        B\,(U_2^2 - U_1^1) \,, & B \le N_2^2 \,, \\
        -B\,(U_1^1 - U_1^2) + N_2^2 \, (U_2^2 - U_1^1) \,, & B > N_2^2 \,.
    \end{cases}
\end{equation*}
\cref{lem:higher_k_higher_util_of_ber} implies $U_1^1 \ge U_1^2$. Therefore, increasing the budget after $N_2^2$ can only decrease $\Delta W^t$ and favors earlier allocations. The effect of inequality is also well-captured in $U_2^2 - U_1^1$. A direct calculation shows $\sign(U_2^2 - U_1^1) = \sign\big((G + 1)(T - 4) - 6\big)$. Hence, $\Delta W^ > 0$ necessitates 
\begin{equation*}
    G > \frac{6}{T - 4} - 1
    \,.
\end{equation*}
In other words, higher inequality, reflected in smaller values of~$G$, can render postponing allocation from~$t$ to~$t+1$ unjustifiable. 

We illustrate the effect of budget and inequality by simulating one-time allocation for $T=6$ and~$N=10000$ in \cref{fig:opt_one_time_alloc_illustration}. Consistent with the theory, a high inequality corresponding to $G \le 2$ can make $W^{t+1} \le W^t$. Even when this is not the case, a high budget can still favor earlier allocation. 

\begin{figure}[h!]
    \centering
    \includegraphics[width=0.4\linewidth]{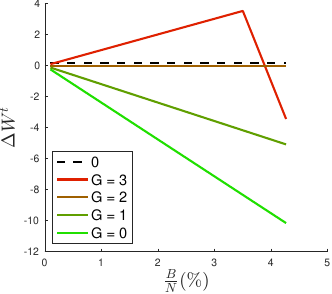}
    \caption{
    Illustration of the effect of the budget and inequality on one-time allocation.
    }
    \label{fig:opt_one_time_alloc_illustration}
\end{figure}


\clearpage
\section{Supplementary algorithms}

\begin{algorithm}
\caption{Simulate trajectory}
\label{alg:simulate_trajectory}
\begin{algorithmic}[1]
    \State \textbf{inputs:} 
    \State $\quad t_0:$ starting time of simulation
    \State $\quad \{N_k^{t_0}\}_{k=0}^{t_0}:$ number of individuals with $y^{t_0} = k$ for $k = 0, \dots, t_0$
    \State $\quad q: [T] \rightarrow \{0, 1, \dots, T\}$: a valid non-decreasing sequence according to \cref{thm:opt_over_time_allocation}
    \State \textbf{output:}
    \State $\quad \{N_{q(t)}^t\}_{t=t_0}^T$: number of individuals reaching~$y^t = q(t)$ who are not yet treated, for $t \ge t_0$
    \Function{SimulateTraj}{$t_0$, $\{N_k^{t_0}\}_{k=0}^{t_0}$, $q(\cdot)$}
        \For{$t = t_0 + 1$ to $T$}
            \For{$k = 0$ to $t$}
                \Statex \hspace{\algorithmicindent*3} \textit{Find the number of untreated individuals from the previous step:}
                \State \hspace{\algorithmicindent} $\barN_k^{t-1} \gets N_k^{t-1} \cdot \One\{k < q(t-1)\}, \quad \barN_{k-1}^{t-1} \gets N_{k-1}^{t-1} \cdot \One\{k-1 < q(t-1)\}$

                \Statex \hspace{\algorithmicindent*3} \textit{Backup formula:}
                \State \hspace{\algorithmicindent} $\begin{aligned} N_k^t &\gets \barN_k^{t-1} \cdot \E_{p \sim \gP^{t-1}(\cdot \mid y^{t-1}=k)}\big[(1-p) (1 - \tilp)\big] \\
                &+ \barN_{k-1}^{t-1} \cdot \E_{p \sim \gP^{t-1}(\cdot \mid y^{t-1}=k-1)}\big[(1-p) \, \tilp \big]
                \end{aligned}$
            \EndFor
        \EndFor
        \State \Return{$\{N_{q(t)}^t\}_{t=t_0}^T$}
    \EndFunction
\end{algorithmic}
\end{algorithm}


\clearpage
\section{Supplementary statements}

\subsection{General statements}

\begin{lemma}
\label{lem:increasing_density_ratio_higher_expectation_of_increasing_function}
Suppose $f: [0, 1] \rightarrow \R$ is a non-decreasing function of~$p$. Consider two probability density functions~$\gP$ and~$\gQ$ such that $\frac{\gP(p)}{\gQ(p)}$ is a non-decreasing continuous function of~$p$. Then, we have $\E_P[f(p)] \ge \E_Q[g(p)]$. As a direct implication of this lemma, the sign of inequality flips if either~$f$ or the density ratio is non-increasing.
\end{lemma}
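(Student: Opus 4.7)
The statement is essentially the classical fact that a monotone likelihood ratio implies first-order stochastic dominance, which in turn implies that non-decreasing functions have higher expectation under the dominant distribution. The plan is to prove it directly via a ``single crossing'' argument, since the hypothesis that $\gP/\gQ$ is non-decreasing and continuous gives us exactly that structure.

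First, I would establish the existence of a crossing point. Since $\int_0^1 \gP = \int_0^1 \gQ = 1$, the difference $\gP - \gQ$ integrates to zero, so it cannot be everywhere strictly positive or everywhere strictly negative. Combined with the fact that the ratio $\gP/\gQ$ is non-decreasing and continuous, this forces the existence of some threshold $p^* \in [0,1]$ such that $\gP(p) \le \gQ(p)$ for $p \le p^*$ and $\gP(p) \ge \gQ(p)$ for $p \ge p^*$. (Technically one handles the cases $\gQ(p) = 0$ by interpreting the ratio as $+\infty$ and absorbing them into the upper region; a short continuity argument covers this.)

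Next, I would write
\begin{equation*}
\E_P[f(p)] - \E_Q[f(p)] = \int_0^1 f(p)\,(\gP(p) - \gQ(p))\,\dif p
\end{equation*}
and split the integral at $p^*$. On $[0, p^*]$, we have $\gP - \gQ \le 0$ and $f(p) \le f(p^*)$, so $f(p)(\gP(p) - \gQ(p)) \ge f(p^*)(\gP(p) - \gQ(p))$ (the inequality flips since we multiply a non-positive quantity by a smaller number). On $[p^*, 1]$, both $\gP - \gQ \ge 0$ and $f(p) \ge f(p^*)$, giving the same inequality $f(p)(\gP(p) - \gQ(p)) \ge f(p^*)(\gP(p) - \gQ(p))$. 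Integrating and combining,
\begin{equation*}
\E_P[f(p)] - \E_Q[f(p)] \;\ge\; f(p^*) \int_0^1 (\gP(p) - \gQ(p))\,\dif p \;=\; 0.
\end{equation*}

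The main subtlety, which I would be careful about, is the potential for $\gQ$ to vanish on some set (making the ratio undefined or infinite) and the fact that $f(p^*)$ could in principle be negative (which is why the single-crossing structure is crucial—the $f(p^*)$ term cancels only because the two sub-integrals of $\gP - \gQ$ sum to zero, not because the integrand is non-negative pointwise). The ``implication'' sentence about flipping signs then follows immediately: replacing $f$ with $-f$ (non-increasing) or swapping $\gP$ and $\gQ$ (non-increasing ratio) each flips exactly one of the two sign conditions used above, so the conclusion reverses.
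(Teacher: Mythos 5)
Your proposal is correct and follows essentially the same route as the paper: both identify the single crossing point $p^*$ of the monotone density ratio, split the integral of $f\cdot(\gP-\gQ)$ there, bound the integrand by $f(p^*)(\gP-\gQ)$ on each piece, and conclude since $\gP-\gQ$ integrates to zero. Your added care about $f(p^*)$ possibly being negative and about the degenerate ratio is a sound (if implicit in the paper) refinement, but the argument is the same.
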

\begin{proof}
    Define $\sigma(p) \coloneqq \frac{\gP(p)}{\gQ(p)}$. Since $\int_0^1 \gQ(p) \dif p = \int_0^1 \gQ(p)  \, \sigma(p) \dif p = 1$, and $\sigma$ is continuous, there should exist a critical value~$p^*$ such that $\sigma(p) \ge 1$ for $p \ge p^*$, and $\sigma(p) \le 1$ for $p < p^*$. Using this critical value to decompose the expectations and the fact that $f(\cdot)$ is non-decreasing, we obtain
    \begin{align*}
        \E_\gP[f(p)] - \E_\gQ[f(p)] &= \int_0^1 f(p) \, \gQ(p) \, (\sigma(p) - 1) \dif p \\
        &= \int_{p^*}^1 f(p) \, \gQ(p) \, (\sigma(p) - 1) \dif p
        - \int_0^{p^*} f(p) \, \gQ(p) \, (1 - \sigma(p)) \dif p \\
        &\ge f(p^*) \int_{p^*}^1 \gQ(p) \, (\sigma(p) - 1) \dif p
        - f(p^*) \int_0^{p^*} \gQ(p) \, (1 - \sigma(p)) \dif p = 0
        \,.
    \end{align*}
\end{proof}

\begin{lemma}
\label{lem:max_of_product_of_nonincreasing}
Consider two non-increasing functions~$a: \R \rightarrow [0, 1]$ and~$b: \R \rightarrow [0, 1]$. If $\int_{-\infty}^\infty b(x) \dif x$ is finite and non-zero, the following inequality always holds:
\begin{equation}
\label{eq:max_of_product_of_nonincreasing}
    \Big( \int_{-\infty}^\infty b(x)^2 \dif x \Big) \cdot \Big( \int_{-\infty}^\infty a(x) \, b(x) \dif x \Big)
    \ge \Big( \int_{-\infty}^\infty a(x)^2 \, b(x)^2 \dif x \Big) \cdot \Big( \int_{-\infty}^\infty b(x) \dif x \Big)
    \,.
\end{equation}
\end{lemma}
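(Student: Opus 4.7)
The plan is to reformulate the claim through a change of measure, then combine a Chebyshev-type inequality with the pointwise bound $a^2\le a$ via a symmetrization argument. First, introduce the probability measure $\dif\mu(x):=b(x)\,\dif x/\int b$. Dividing both sides of the claim by $(\int b)^2$ turns the inequality into the covariance-type statement
\[
\E_\mu[a]\cdot\E_\mu[b]\;\ge\;\E_\mu[a^2 b],
\]
which is the form I would analyze from here on.

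Two natural ingredients are immediately available. Chebyshev's sum inequality, applied to the similarly ordered non-increasing pair $(a,b)$ under $\mu$, gives $\E_\mu[ab]\ge \E_\mu[a]\,\E_\mu[b]$; and the range constraint $a\in[0,1]$ gives the pointwise bound $a^2\le a$, hence $\E_\mu[a^2 b]\le \E_\mu[ab]$. Chained naively these two facts go in opposite directions relative to the goal, so they have to be combined nontrivially. I would pass to the symmetrized double integral
\[
\E_\mu[a]\,\E_\mu[b]\;-\;\E_\mu[a^2 b]\;=\;\tfrac12\iint\bigl\{b(y)[a(x)-a(y)^2]+b(x)[a(y)-a(x)^2]\bigr\}\,\dif\mu(x)\,\dif\mu(y),
\]
and, restricting without loss of generality to the wedge $x\le y$, check that the two brackets together are non-negative. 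On this region the first bracket is manifestly non-negative since $a(x)\ge a(y)\ge a(y)^2$ by monotonicity of $a$ combined with the range bound.

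The main obstacle is the second bracket $b(x)[a(y)-a(x)^2]$, which can be individually negative when $a$ drops sharply across $[x,y]$ (e.g.\ near $a(x)\approx 1$ and $a(y)\approx 0$). Closing the argument therefore requires pairing the two brackets and exploiting that the weight increments $b(x)-b(y)\ge 0$ and $a(x)-a(y)\ge 0$ are compensated by the constraint $a\le 1$. I expect the cleanest route is a layer-cake decomposition $a=\int_0^1\One\{a>s\}\,\dif s$ (and similarly for $b$) applied inside the integrand, which reduces the question to a combinatorial comparison of nested super-level intervals of $a$ and $b$ under the weight $\mu\otimes\mu$; the $[0,1]$ range of both functions, together with the fact that super-level sets of non-increasing functions are half-lines, should suffice to sign the residual.
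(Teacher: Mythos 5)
Your argument stops exactly where the difficulty lives, and unfortunately the missing step cannot be completed: the residual you are trying to sign is genuinely negative for admissible $(a,b)$, so the inequality itself fails under any non-vacuous reading of the hypotheses. (Read literally, a non-increasing $b:\R\to[0,1]$ with $0<\int b<\infty$ does not exist, so the statement is vacuous; the intended setting is presumably a half-line, or functions non-increasing in $|x|$.) Take $a=\One\{0\le x<1\}$ and $b=\One\{0\le x<1\}+\tfrac12\One\{1\le x<2\}$. Then $\int b^2=\tfrac54$, $\int ab=1$, $\int a^2b^2=1$, $\int b=\tfrac32$, so the left-hand side equals $\tfrac54$ while the right-hand side equals $\tfrac32$. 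In your reformulation, $\E_\mu[a]\,\E_\mu[b]=\tfrac23\cdot\tfrac56=\tfrac59<\tfrac23=\E_\mu[a^2b]$, and your symmetrized double integral evaluates to $-\tfrac19$: the negative contribution of the second bracket at the off-diagonal pair outweighs everything else. So the parts of your write-up that are carried out (the change of measure, the symmetrization identity, the honest observation that Chebyshev and $a^2\le a$ pull in opposite directions, and the sign of the first bracket on the wedge) are all correct, but the concluding conjecture that a layer-cake pairing "should suffice to sign the residual" is false, and no refinement of the pointwise pairing can succeed.

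For what it is worth, the paper's own proof breaks at the corresponding point. It computes $\pd{\Delta}{a(x')}\propto b(x')\big[\int b^2-2a(x')b(x')\int b\big]$ and asserts that, because $ab$ is non-increasing, increasing $a(x')$ decreases $\Delta$ precisely for $x'$ \emph{above} a threshold, whence the monotonicity constraint forces the minimizer of $\Delta$ over feasible $a$ to be a constant (for which $\Delta=(a_0-a_0^2)(\int b^2)(\int b)\ge0$). The direction is reversed: $a(x')b(x')>\tfrac12\int b^2/\int b$ holds for $x'$ \emph{below} a threshold, so the descent direction pushes $a$ up on an initial segment and down on the tail---perfectly compatible with $a$ being non-increasing---and one is driven to indicators of initial segments rather than constants. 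My example above is exactly such a configuration, and there $\Delta=-\tfrac14<0$. If this lemma is needed downstream (I could not find it invoked elsewhere in the paper), what is required is an added hypothesis tying the decay of $a$ to that of $b$ (the claim does hold, e.g., for constant $a$), not a repaired proof.
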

\begin{proof}
    Define the difference between the left-hand side and the right-hand side of the inequality given in \cref{eq:max_of_product_of_nonincreasing} as~$\Delta$. For simplicity, consider integrals as a discrete sum with a step size of~$\delta$. Increasing the value of~$a(x')$ would change the value of~$\Delta$ by
    \begin{equation*}
        \frac{1}{\delta} \, \od{\Delta}{a(x')} = b(x') \cdot \int_{-\infty}^\infty b(x)^2 \dif x - 2 a(x') \, b(x')^2 \cdot \int_{-\infty}^\infty b(x) \dif x
        \,.
    \end{equation*}
    This implies that for any~$x'$ such that~$b(x') > 0$, increasing~$a(x')$ will decrease~$\Delta$ if and only if 
    \begin{equation*}
        a(x') \, b(x') > \frac{1}{2} \, \frac{\int_{-\infty}^\infty b(x)^2 \dif x}{\int_{-\infty}^\infty b(x) \dif x}
        \,.
    \end{equation*}
    Since both $a$~and~$b$ are non-increasing non-negative functions, their multiplication is also a non-increasing non-negative function. Therefore, increasing~$a(x')$ will decrease $\Delta$ if and only if $x'$~is larger than a critical value~$x^*$. The non-increasing constraint on $a$ then implies that for a fixed~$b$, the minimum value of $\Delta$ corresponds to a constant function $a(x) = a_0$. In this case,
    \begin{equation*}
        \Delta = \big(a_0 - a_0^2 \big) \cdot \Big( \int_{-\infty}^\infty b(x)^2 \dif x \Big) \cdot \Big( \int_{-\infty}^\infty b(x) \dif x \Big)
        \,
    \end{equation*}
    For $a_0 \in [0, 1]$, the above equation is always greater than or equal to zero, which completes the proof.
\end{proof}


\subsection{Statements about the optimality of ranking}

\begin{lemma}[Bayes optimal ranking]
\label{lem:bayes_opt_ranking}
Consider a statistical model~$P = \{p_\theta: \theta \in \Theta = [a, b]\}$ that induces a family of continuous probability distributions over a sample space~$\gX$. Assume $P$~has a univariant sufficient statistics $T: \gX \rightarrow \R$. Consider samples drawn independently from two probability distributions with distinct parameters: $X_1 \sim p_{\theta_1}, X_2 \sim p_{\theta_2}$. For a ranking function~$\delta: \gX \times \gX \rightarrow \{-1, 1\}$, define the ranking loss as 
\begin{equation*}
    \loss((\theta_1, \theta_2); \delta(x_1, x_2)) \coloneqq \One \{ \delta(x_1, x_2) (\theta_2 - \theta_1) < 0 \}
    \,.
\end{equation*}
Consider $\Theta_1$ and $\Theta_2$ independently drawn from a prior~$\gP$ over~$\Theta$. If for any $\theta_2 \ge \theta_1$, 
\begin{equation}
\label{eq:bayes_opt_ranking_suff_condition}
    T(x_2) \ge T(x_1) \iff p_{\theta_1}(x_1) \, p_{\theta_2}(x_2) \ge p_{\theta_1}(x_2) \, p_{\theta_2}(x_1) 
    \,, \quad \text{a.e.}
    \,,
\end{equation}
then for any choice of~$\gP$ that has no point mass, the Bayes optimal ranking rule is $\delta^*(x_1, x_2) = \chi\{T(x_2) \ge T(x_1)\}$.
\end{lemma}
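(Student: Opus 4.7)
The plan is to reduce Bayes optimality under $0$--$1$ loss to comparing the sign of a single integral over the half-plane $\{\theta_1 < \theta_2\}$, and then to invoke condition \cref{eq:bayes_opt_ranking_suff_condition} to identify that sign with $\sign(T(x_2) - T(x_1))$. This is the standard recipe of combining exchangeability of an iid prior with a monotone-likelihood-ratio structure. First I would write the Bayes optimal rule explicitly: at observations $(x_1, x_2)$ it outputs $+1$ iff $\Pr(\Theta_2 > \Theta_1 \mid x_1, x_2) \ge \Pr(\Theta_2 < \Theta_1 \mid x_1, x_2)$, with ties broken arbitrarily. Because $\gP$ has no point mass, the diagonal $\{\Theta_1 = \Theta_2\}$ is $\gP \otimes \gP$-null and can be ignored.

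Next, by Bayes's rule the joint posterior of $(\Theta_1, \Theta_2)$ given $(x_1, x_2)$ is proportional to $p_{\theta_1}(x_1)\, p_{\theta_2}(x_2)\, d\gP(\theta_1)\, d\gP(\theta_2)$. The key symmetrization step is to relabel $\theta_1 \leftrightarrow \theta_2$ inside the integral over $\{\theta_2 < \theta_1\}$, which converts it into an integral over $\{\theta_1 < \theta_2\}$ with the likelihood factors re-paired. Subtracting the two posterior masses then shows that the sign of the Bayes optimal decision equals the sign of
\[
I(x_1, x_2) \;\coloneqq\; \iint_{\theta_1 < \theta_2} \bigl[\, p_{\theta_1}(x_1)\, p_{\theta_2}(x_2) \;-\; p_{\theta_1}(x_2)\, p_{\theta_2}(x_1) \,\bigr] \, d\gP(\theta_1) \, d\gP(\theta_2).
\]

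Finally, for each $\theta_1 < \theta_2$ the hypothesis \cref{eq:bayes_opt_ranking_suff_condition} forces the bracketed integrand to be a.e.\ nonnegative when $T(x_2) \ge T(x_1)$ and a.e.\ nonpositive when $T(x_2) \le T(x_1)$. By Fubini--Tonelli, outside a null set in $(x_1, x_2)$ the integrand has a constant sign matching $\sign(T(x_2) - T(x_1))$ throughout $\{\theta_1 < \theta_2\}$, so $I(x_1, x_2)$ inherits that same sign. Hence $\delta^\ast(x_1, x_2) = \chi\{T(x_2) \ge T(x_1)\}$ agrees with the Bayes rule wherever the latter is uniquely determined; on the measure-zero set where $I = 0$ either label is optimal, so $\delta^\ast$ remains Bayes-admissible.

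The main obstacle I anticipate is the measure-theoretic bookkeeping around the ``a.e.''\ qualifier in \cref{eq:bayes_opt_ranking_suff_condition}: I must push the per-$(\theta_1, \theta_2)$ null sets through Fubini to obtain a single sample-space null set, which implicitly requires joint measurability of $(x_1, x_2, \theta_1, \theta_2) \mapsto p_{\theta_1}(x_1) p_{\theta_2}(x_2)$ as a standing regularity hypothesis on the family $P$. Everything else collapses to a routine sign comparison.
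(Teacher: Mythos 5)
Your proposal is correct and follows essentially the same route as the paper's proof: reduce the Bayes rule to a comparison of posterior probabilities, symmetrize the integral over $\{\theta_2<\theta_1\}$ onto $\{\theta_1<\theta_2\}$ via Bayes's rule, and read off the sign of the resulting integrand from condition \eqref{eq:bayes_opt_ranking_suff_condition}. The only difference is that you make explicit the measure-theoretic bookkeeping (Fubini over the a.e.\ null sets and handling of the diagonal) that the paper leaves implicit.
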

\begin{proof}
    For $\Theta_1$ and $\Theta_2$ independently drawn from~$\gP$, the Bayes risk of ranking is
    \begin{equation*}
        R(\gP^{\otimes 2}; \delta) = \E_{\substack{\Theta_1 \sim \gP \\ \Theta_2 \sim \gP}} \Big[
        \E_{\substack{X_1 \sim p_{\Theta_1} \\ X_2 \sim p_{\Theta_2}}} \big[
        \loss((\Theta_1, \Theta_2); \delta(X_1, X_2))
        \big]
        \Big]
        \,.
    \end{equation*}
    The independence also allows us to decompose the posterior over~$\Theta_1$ and~$\Theta_2$ given $X_1=x_1$ and~$X_2=x_2$ as $\gP(\Theta_1 \mid x_1) \, \gP(\Theta_2 \mid x_2)$. It is well-known that the minimizer of the Bayes risk is
    \begin{equation*}
        \delta^*(x_1, x_2) = \argmin_{\delta(\cdot, \cdot)} R(\gP^{\otimes 2}; \delta) \in \argmin_{\delta \in \{-1, 1\}} 
        \E_{\substack{\Theta_1 \sim \gP(\cdot \mid x_1) \\ \Theta_2 \sim \gP(\cdot \mid x_2)}} \big[
        \loss((\Theta_1, \Theta_2); \delta) \mid X_1=x_1, X_2=x_2
        \big]
        \,.
    \end{equation*}
    Plugging the ranking loss into this, we can further simplify the conditional expectation and obtain 
    \begin{align*}
        \delta^*(x_1, x_2) &\in \argmin_{\delta \in \{-1, 1\}}  
        \E_{\substack{\Theta_1 \sim \gP(\cdot \mid x_1) \\ \Theta_2 \sim \gP(\cdot \mid x_2)}} \big[
        \loss((\Theta_1, \Theta_2); \delta) \mid x_1, x_2
        \big] \nonumber \\
        &= \argmin_{\delta \in \{-1, 1\}}  \frac{1 + \delta}{2} \Pr(\Theta_1 > \Theta_2 \mid x_1, x_2) + \frac{1 - \delta}{2} \Pr(\Theta_1 \le \Theta_2 \mid x_1, x_2) \\
        &= \argmin_{\delta \in \{-1, 1\}}  \delta \, \big[
        \Pr(\Theta_1 > \Theta_2 \mid x_1, x_2)
        - \Pr(\Theta_1 \le \Theta_2 \mid x_1, x_2)
        \big] \\
        &= \sign\big( \Pr(\Theta_1 \le \Theta_2 \mid x_1, x_2) - \Pr(\Theta_1 > \Theta_2 \mid x_1, x_2) \big)
        \,.
    \end{align*}
    Now, using a change of variable trick and the Bayes' rule, we have
    \begin{align*}
        &\Pr(\Theta_1 \le \Theta_2 \mid x_1, x_2) - \Pr(\Theta_1 > \Theta_2 \mid x_1, x_2) \nonumber \\
        &\quad= \int_a^b \int_a^{\theta_2} \gP(\theta_1 \mid x_1) \, \gP(\theta_2 \mid x_2) \dif \theta_1 \dif \theta_2
        - \int_a^b \int_a^{\theta_1} \gP(\theta_1 \mid x_1) \, \gP(\theta_2 \mid x_2) \dif \theta_2 \dif \theta_1 \\
        &\quad= \int_a^b \int_a^{\theta_2} \big[
        \gP(\theta_1 \mid x_1) \, \gP(\theta_2 \mid x_2) - \gP(\theta_2 \mid x_1) \, \gP(\theta_1 \mid x_2) 
        \big] \dif \theta_1 \dif \theta_2 \\
        &\quad= \int_a^b \int_a^{\theta_2} \frac{\gP(\theta_1) \, \gP(\theta_2)}{Z(x_1, x_2)} \big[
        p_{\theta_1}(x_1) \, p_{\theta_2}(x_2) - p_{\theta_1}(x_2) \, p_{\theta_2}(x_1) 
        \big] \dif \theta_1 \dif \theta_2
        \,,
    \end{align*}
    where $Z(x_1, x_2)$ is the partition function. The integral bound enforces $\theta_2 \ge \theta_1$. Then if the condition of \cref{eq:bayes_opt_ranking_suff_condition} holds, we can conclude 
    \begin{equation*}
        \sign\big( \Pr(\Theta_1 \le \Theta_2 \mid x_1, x_2) - \Pr(\Theta_1 > \Theta_2 \mid x_1, x_2) \big) = \sign(T(x_2) - T(x_1))
        \,.
    \end{equation*}
\end{proof}

\begin{proposition}
\label{prop:bayes_opt_ranking_of_ber}
Consider the observation model $o \sim \Ber(\tilp)$, where $\tilp(\cdot)$ is a non-decreasing function. Define $y^t = \sum_{t' \in [t]} o^t$.
For any prior~$\gP$ over~$p$ that has no point mass, ranking individuals based on their~$y^t$ is Bayes optimal. 
\end{proposition}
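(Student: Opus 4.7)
The plan is to apply \cref{lem:bayes_opt_ranking} with the statistical model given by the distribution of $x = (o^1, \dots, o^t)$ under a fixed failure probability $p$, and with the statistic $T(x) = y^t = \sum_{t' \in [t]} o^{t'}$. Because the observations are i.i.d.\ Bernoulli, the likelihood factorizes as $p_p(x) = \tilp^{y^t}(1-\tilp)^{t-y^t}$, which depends on $x$ only through $y^t$, so $T$ is indeed a (univariate) sufficient statistic.

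The core of the proof is verifying the key hypothesis \cref{eq:bayes_opt_ranking_suff_condition}: for any $p_2 \ge p_1$, the equivalence
\begin{equation*}
T(x_2) \ge T(x_1) \iff p_{p_1}(x_1)\, p_{p_2}(x_2) \ge p_{p_1}(x_2)\, p_{p_2}(x_1)
\end{equation*}
must hold. Writing out both sides and cancelling common factors, the right-hand inequality reduces to
\begin{equation*}
\Big(\frac{\tilp_2 (1-\tilp_1)}{\tilp_1 (1-\tilp_2)}\Big)^{\,y_2^t - y_1^t} \ge 1,
\end{equation*}
where $\tilp_i = \tilp(p_i)$. Since $\tilp(\cdot)$ is non-decreasing, $p_2 \ge p_1$ gives $\tilp_2 \ge \tilp_1$, which in turn makes the base $\tilp_2(1-\tilp_1)/[\tilp_1(1-\tilp_2)] \ge 1$. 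Hence the inequality is equivalent to $y_2^t - y_1^t \ge 0$, i.e.\ $T(x_2) \ge T(x_1)$, which is precisely what we need.

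With the monotone-likelihood-ratio property established, \cref{lem:bayes_opt_ranking} directly delivers that the Bayes optimal pairwise rule is $\delta^*(x_1, x_2) = \chi\{T(x_2) \ge T(x_1)\}$, i.e.\ ranking by $y^t$. The assumption that the prior $\gP$ has no point mass is exactly what \cref{lem:bayes_opt_ranking} requires in order to handle ties at $p_1 = p_2$ on a set of prior measure zero. The only subtle point is what happens on the degenerate set where $\tilp_1 = \tilp_2$ even though $p_1 < p_2$: there the likelihood ratio is identically $1$, so the bracketed integrand in the proof of \cref{lem:bayes_opt_ranking} vanishes and those $(p_1, p_2)$-pairs contribute nothing to the Bayes risk comparison, leaving the sign of the integral governed by the strictly monotone region. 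I expect this last point to be the only potential pitfall, but it is a benign one under the no-point-mass assumption on $\gP$.
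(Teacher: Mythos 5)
Your proposal is correct and follows essentially the same route as the paper's proof: instantiate \cref{lem:bayes_opt_ranking} with the product-Bernoulli model, identify $y^t$ as the univariate sufficient statistic, and verify \cref{eq:bayes_opt_ranking_suff_condition} by reducing it to $\bigl(\tfrac{\tilp_2(1-\tilp_1)}{\tilp_1(1-\tilp_2)}\bigr)^{y_2^t-y_1^t}\ge 1$ with base at least one. Your explicit treatment of the degenerate set where $\tilp_1=\tilp_2$ despite $p_1<p_2$ is in fact slightly more careful than the paper, which asserts the equivalence $\theta_2\ge\theta_1\iff\tiltheta_2\ge\tiltheta_1$ as if $\tilp$ were strictly increasing.
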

\begin{proof}
    At any time~$t$, define the statistical model $P = \big\{p_\theta = (\Ber(\tilp(\theta)))^{\otimes t}: \theta \in [0, 1]\big\}$ where we can think of the model parameter~$\theta$ as the failure probability~$p$. All the observations from individual~$i$ until~$t$ can be interpreted as a sample from a model in~$P$: $X = [o^1, \dots, o^t] \sim p_\theta$. Then, it is straightforward to see~$y^t$ is a sufficient statistic for~$P$ and $p_\theta(x) = \tiltheta^{y^t} (1 - \tiltheta)^{t-y^t}$. The increasing property of~$\tilp$ also implies that $\theta_2 \ge \theta_1 \iff \tiltheta_2 \ge \tiltheta_1$.
    
    For $\theta_2 \ge \theta_1$, plugging $p_\theta$ into the condition of \cref{eq:bayes_opt_ranking_suff_condition} gives
    \begin{equation*}
        p_{\theta_1}(x_1) \, p_{\theta_2}(x_2) \ge p_{\theta_1}(x_2) \, p_{\theta_2}(x_1) \iff \Big(\frac{\tiltheta_2}{\tiltheta_1} \frac{1 - \tiltheta_1}{1 - \tiltheta_2} \Big)^{y_2^t - y_1^t} \ge 1 \,, \quad \text{a.e.}
    \end{equation*}
    Since for $1 > \tiltheta_2 \ge \tiltheta_1 > 0$, we have $\frac{\tiltheta_2}{\tiltheta_1} \frac{1 - \tiltheta_1}{1 - \tiltheta_2} \ge 1$, we can conclude
    \begin{equation*}
         p_{\theta_1}(x_1) \, p_{\theta_2}(x_2) \ge p_{\theta_1}(x_2) \, p_{\theta_2}(x_1) \iff y_2^t \ge y_1^t \,, \quad \text{a.e.}
    \end{equation*}
    Therefore, $P$ meets the sufficient condition given in \cref{eq:bayes_opt_ranking_suff_condition} of \cref{lem:bayes_opt_ranking}, and ranking based on its sufficient statistic~$y^t$ is Bayes optimal.
\end{proof}


\subsection{Statements about the effect of one more observation or one more time step}

\begin{lemma}[Expected utility is monotone in the number of positive observations and time]
\label{lem:higher_k_higher_util_of_ber}
For any utility function~$u^t(p)$ that is non-decreasing in~$p$, we have 
\begin{equation*}
    \E_{k+1}^t[u^t(p)] \ge \E_k^t[u^t(p)]
    \,,
\end{equation*}
for every~$k < t$. 
If the utility is also non-increasing in~$t$, we have
\begin{equation*}
    \E_k^t[u^t(p)] \ge \E_k^{t+1}[u^{t+1}(p)]
    \,,
\end{equation*}
for every~$k \le t$. 
Here, $\E_k^t$ denotes expectation with respect to $p \sim \gP_k^t = \gP^t(\cdot \mid y^t = k)$.
\end{lemma}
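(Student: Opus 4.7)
The plan is to derive both inequalities from \cref{lem:increasing_density_ratio_higher_expectation_of_increasing_function}, which converts a monotonicity statement about a density ratio into a comparison of expectations of a monotone function. The key preliminary step is to write the posteriors in a convenient closed form: since $y^t \mid p \sim \Binomial(t, \tilp)$, Bayes' rule gives $\gP_k^t(p) \propto \gP^t(p)\,\tilp^k(1-\tilp)^{t-k}$, so any density ratio between two of these conditional posteriors collapses to a simple expression in $p$ and $\tilp(p)$.

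For the first inequality, I would form the ratio $\gP_{k+1}^t(p)/\gP_k^t(p) \propto \tilp(p)/(1-\tilp(p))$, which is non-decreasing in $p$ because $\tilp$ is non-decreasing in $p$. Combined with the assumption that $u^t$ is non-decreasing in $p$, \cref{lem:increasing_density_ratio_higher_expectation_of_increasing_function} immediately gives $\E_{k+1}^t[u^t(p)] \ge \E_k^t[u^t(p)]$.

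For the second inequality, I would split it into two steps. First, the utility's non-increasing property in $t$ gives $u^{t+1}(p) \le u^t(p)$ pointwise, hence $\E_k^{t+1}[u^{t+1}(p)] \le \E_k^{t+1}[u^t(p)]$. Second, combining the population update rule $\gP^{t+1}(p) \propto (1-p)\,\gP^t(p)$ from \cref{eq:update_p} with the Bayes' formula above yields $\gP_k^{t+1}(p)/\gP_k^t(p) \propto (1-p)(1-\tilp(p))$, which is non-increasing in $p$ because both factors are. Applying \cref{lem:increasing_density_ratio_higher_expectation_of_increasing_function} with the reversed inequality (the version that holds when the density ratio is non-increasing) gives $\E_k^{t+1}[u^t(p)] \le \E_k^t[u^t(p)]$, and chaining the two bounds closes the argument.

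I do not anticipate a real obstacle: the entire argument reduces to recognizing that the two relevant density ratios are monotone, once the posterior formula and the population update rule are written side by side. The only subtlety worth flagging is the sign convention in \cref{lem:increasing_density_ratio_higher_expectation_of_increasing_function} for the second step, which relies on exactly one of $\{$density ratio, integrand$\}$ being non-increasing so that the conclusion flips the direction of the inequality.
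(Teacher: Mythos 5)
Your proposal is correct and matches the paper's proof essentially step for step: both parts reduce to computing the posterior density ratios ($\tilp/(1-\tilp)$ for the shift in $k$, and $(1-p)(1-\tilp)$ for the shift in $t$), observing their monotonicity, and invoking \cref{lem:increasing_density_ratio_higher_expectation_of_increasing_function} (whose statement explicitly covers the flipped-sign case you rely on). The only cosmetic difference is that the paper writes the second ratio as $\gP_k^t/\gP_k^{t+1} \propto \tfrac{1}{(1-p)(1-\tilp)}$ (non-decreasing) rather than its reciprocal, and applies the pointwise bound $u^{t+1}\le u^t$ after the change of measure instead of before.
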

\begin{proof}
    We first prove the monotonicity in~$k$. The likelihood $\Pr^t(y^t = k \mid p)$ has a closed-form of ${t \choose k} \tilp^k (1 - \tilp)^{t-k}$. Using this, we have
    \begin{equation*}
        \frac{\gP_{k+1}^t(p)}{\gP_k^t(p)} \propto \frac{\Pr^t(y^t = k+1 \mid p)}{\Pr^t(y^t = k \mid p)} = \frac{\tilp}{1-\tilp} \big(\frac{t-k}{k+1}\big)
        \,.
    \end{equation*}
    This is a non-decreasing continuous function of~$\tilp$ and, consequently, of~$p$, for every $k < t$. Since the utility function is also non-decreasing in~$p$, \cref{lem:increasing_density_ratio_higher_expectation_of_increasing_function} proves the monotonicity in~$k$.

    We next prove the monotonicity in~$t$. Using the Bayes rule and the update rule of \cref{eq:update_p}, we have
    \begin{equation*}
        \frac{\gP_k^t(p)}{\gP_k^{t+1}(p)} \propto \frac{\gP^t(p) \, \Pr^t(y^t = k \mid p)}{\gP^{t+1}(p) \, \Pr^{t+1}(y^{t+1} = k \mid p)}
        \propto \big(\frac{1}{1-p}\big) \frac{\Pr^t(y^t = k \mid p)}{\Pr^{t+1}(y^{t+1} = k \mid p)}
        \,.
    \end{equation*}
    Plugging the closed-form expression of likelihoods into this, we obtain
    \begin{equation*}
        \frac{\gP_k^t(p)}{\gP_k^{t+1}(p)} \propto \frac{1}{(1-p)(1-\tilp)} \,.
    \end{equation*}
    Again, this is a non-decreasing continuous function of~$\tilp$ and, consequently, of~$p$, for every $k < t$. Since the utility function is also non-decreasing in~$p$, \cref{lem:increasing_density_ratio_higher_expectation_of_increasing_function} implies $\E_k^t[u^t(p)] \ge \E_k^{t+1}[u^t(p)]$. Using the fact that the utility is also non-increasing in~$t$, completes the proof.
\end{proof}

\begin{lemma}[A positive draw from $\Ber(p)$ is more informative than $\Ber(\tilp)$]
\label{lem:failing_implies_higher_utility}
Consider the observation model $o \sim \Ber(\tilp)$, where $\tilp(p)$~is a concave function with $\tilp(0)=0$ and $\tilp(1)=1$. Let $z\sim\Ber(p)$ be a random draw from an individual with failure probability~$p$. For any utility function~$u^t(p)$ non-increasing in~$t$ and non-decreasing in~$p$, we have 
\begin{equation}
\label{eq:failing_implies_higher_utility}
    \E_k^t[u^t(p) \mid z=1] \ge \E_{k+1}^{t+1}[u^{t+1}(p)]
    \,,
\end{equation}
for every $k \le t$. Here, $\E_k^t[\cdot]$ denotes expectation with respect to $p\sim\gP_k^t=\gP^t(\cdot \mid y^t=k)$.
\end{lemma}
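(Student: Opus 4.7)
The plan is to recast both sides as expectations under posteriors over~$p$ and reduce the claim to the monotonicity of a likelihood ratio, at which point \cref{lem:increasing_density_ratio_higher_expectation_of_increasing_function} will finish the job. Iterating \cref{eq:update_p} gives $\gP^t(p)\propto\gP^1(p)(1-p)^{t-1}$; combining with the binomial likelihood of the $t$ observations, Bayes' rule produces $\gP_k^t(p)\propto\gP^1(p)(1-p)^{t-1}\tilp(p)^k(1-\tilp(p))^{t-k}$, and further conditioning on a fresh draw $z\sim\Ber(p)$ equal to one multiplies by an extra factor of~$p$. An analogous computation for $\gP_{k+1}^{t+1}(p)$ will, after cancellation of common factors, yield
\[
    \frac{\gP_k^t(p \mid z=1)}{\gP_{k+1}^{t+1}(p)} \;\propto\; \frac{p}{(1-p)\,\tilp(p)}.
\]

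The technical crux will be to verify this ratio is non-decreasing in~$p$. The factor $1/(1-p)$ is clearly increasing on $[0,1)$; for $p/\tilp(p)$, I would use concavity of~$\tilp$ with $\tilp(0)=0$ to obtain the tangent-line bound $\tilp(p)\ge p\,\tilp'(p)$, whence $(p/\tilp(p))'=(\tilp(p)-p\,\tilp'(p))/\tilp(p)^2\ge 0$. The product of two non-negative non-decreasing functions is non-decreasing, so the ratio is too.

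With monotonicity established, \cref{lem:increasing_density_ratio_higher_expectation_of_increasing_function} applied to the non-decreasing utility $u^t(\cdot)$ gives $\E_k^t[u^t(p)\mid z=1]\ge\E_{k+1}^{t+1}[u^t(p)]$. The final step combines this with the pointwise bound $u^t(p)\ge u^{t+1}(p)$ (monotonicity of~$u^t$ in~$t$), which implies $\E_{k+1}^{t+1}[u^t(p)]\ge\E_{k+1}^{t+1}[u^{t+1}(p)]$, yielding \cref{eq:failing_implies_higher_utility}. The main obstacle is the density-ratio step itself: it is the only place concavity of~$\tilp$ enters essentially, and it formalizes the intuition that a direct failure draw from~$\Ber(p)$ is a strictly stronger signal about~$p$ than a noisy positive from~$\Ber(\tilp)$ whenever $\tilp$ has diminishing marginal sensitivity to~$p$.
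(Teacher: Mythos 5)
Your proof is correct, and it reaches the paper's key inequality by a cleaner route. Both arguments ultimately rest on the same fact: the two posteriors differ by the likelihood-ratio factor $\tfrac{p}{(1-p)\tilp(p)}$, and this factor is non-decreasing in~$p$. The paper establishes this implicitly, by expanding both sides as ratios of expectations under $\gP_k^t$ (with reweighting functions $p$ and $g(p)=(1-p)\tilp(p)$), proving $g$ is concave with $g(0)=0$ so that $\tfrac{p_2}{p_1}\ge\tfrac{g(p_2)}{g(p_1)}$, and then running a bespoke symmetrization over pairs $(p_1,p_2)$ to conclude. You instead identify the density ratio directly, prove its monotonicity by factoring it as $\tfrac{1}{1-p}\cdot\tfrac{p}{\tilp(p)}$ (the second factor handled by the tangent-line bound $\tilp(p)\ge p\,\tilp'(p)$ at $0$, or equivalently by concavity plus $\tilp(0)=0$ without differentiability), and then invoke \cref{lem:increasing_density_ratio_higher_expectation_of_increasing_function} rather than re-deriving the comparison. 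What your approach buys is economy and robustness: you reuse an existing lemma instead of a fresh exchange argument, and your monotonicity proof sidesteps the paper's verification that $g''\le 0$ (which, incidentally, contains a sign typo: it asserts ``$\tilp'\le 0$'' where $\tilp'\ge 0$ is both what is true and what is needed). The only caveat is that \cref{lem:increasing_density_ratio_higher_expectation_of_increasing_function} formally asks for a continuous density ratio, and $\tfrac{p}{(1-p)\tilp(p)}$ requires a one-line remark at the endpoints ($\tfrac{p}{\tilp(p)}$ extends continuously to $p=0$ since $\tilp$ is concave with $\tilp(0)=0$; the blow-up at $p=1$ is harmless since monotonicity alone guarantees the crossing point used in that lemma's proof). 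This is the same minor technicality the paper tolerates elsewhere when applying that lemma to ratios such as $\tfrac{1}{(1-p)(1-\tilp)}$, so it is not a gap.
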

\begin{proof}
    Expanding the left-hand side of \cref{eq:failing_implies_higher_utility}, we have
    \begin{equation*}
        \E_k^t[u^t(p) \mid z=1] = \frac{\E_k^t\big[\Pr(z=1 \mid p) \cdot u^t(p)\big]}{\E_k^t\big[\Pr(z=1 \mid p)\big]} 
        = \frac{\E_k^t[p \cdot u^t(p)]}{\E_k^t[p]}
        \,.
    \end{equation*}
    On the other, expanding the right-hand side of \cref{eq:failing_implies_higher_utility} using the updating rule of~$\gP^{t+1}(p) \propto \gP^t(p)(1-p)$ from \cref{eq:update_p}, we have
    \begin{equation*}
        \E_{k+1}^{t+1}[u^{t+1}(p)] = \frac{\E^{t+1}[\tilp^{k+1}(1-\tilp)^{t-k} \cdot u^{t+1}(p)]}{\E^{t+1}[\tilp^{k+1}(1-\tilp)^{t-k}]}
        = \frac{\E_k^t[(1-p)\tilp \cdot u^{t+1}(p)]}{\E_k^t[(1-p)\tilp]}
        \,.
    \end{equation*}
    Define $g(p) \coloneqq (1-p)\tilp$. We argue that $g(p)$ is concave on~$[0, 1]$. To see this, observe that $g''(p) = -2\tilp' + (1-p) \tilp''$. The concavity of~$\tilp$ implies $\tilp'' \leq 0$. Then, given that $\tilp(0) = 0$, $\tilp(1) = 1$, and the range of~$\tilp$ is within~$[0, 1]$, it follows that $\tilp' \leq 0$. Thus, we can conclude $g''(p) \leq 0$ for~$p \in [0, 1]$. A direct consequence of the concavity of~$g$ is that $g'(p)(0 - p) \ge g(0) - g(p) = -g(p)$. A straightforward integration then shows that for arbitrary~$p_2 \ge p_1 > 0$,
    \begin{equation}
    \label{eq:_proof_ratio_of_p_and_g}
        \frac{p_2}{p_1} \ge \frac{g(p_2)}{g(p_1)}
        \,.
    \end{equation}
    This inequality will allow us to further bound $\E_{k+1}^{t+1}[u^{t+1}(p)]$ as follows. Using $u^{t+1}(p) \le u^t(p)$, the difference of the expanded sides of \cref{eq:failing_implies_higher_utility} can be bounded by
    \begin{equation*}
        \frac{\E_k^t\big[ 
        (p_1 g(p_2) - p_2 g(p_1)) \cdot u^{t+1}(p_1)
        \big]}{\E_k^t[p_1 \cdot g(p_2)]}
        \,,
    \end{equation*}
    where $p_1$~and~$p_2$ are two independent draws from~$\gP_k^t$. Using symmetry and the fact that the distribution has no point mass, we can write the numerator as
    \begin{align*}
        &\E_k^t\big[ 
        (p_1 g(p_2) - p_2 g(p_1)) \cdot u^{t+1}(p_1) \cdot \big(\One\{p_2 \ge p_1\} + \One\{p_2 < p_1\}\big)
        \big] \\
        &= \E_k^t\big[ 
        (p_1 g(p_2) - p_2 g(p_1)) (u^{t+1}(p_1) - u^{t+1}(p_2)) \cdot \One\{p_2 \ge p_1\}
        \big]
        \,.
    \end{align*}
    Then the fact that utility is non-increasing in~$p$ and \cref{eq:_proof_ratio_of_p_and_g} imply the numerator is non-negative. This completes the proof.
\end{proof}

\begin{lemma}
\label{lem:general_k_t_identities}
Consider the observation model $o \sim \Ber(\tilp)$. Define $\mu_k^t \coloneqq \E_k^t[p]$ and $\tils_k^t \coloneqq \E_k^t[(1-p)\tilp]$, where $\E_k^t[\cdot]$ denotes expectation with respect to $p\sim\gP_k^t=\gP^t(\cdot \mid y^t=k)$. For any arbitrary function $f: [0, 1] \rightarrow \R$, the following identities hold:
\begin{align*}
    \E_k^{t+1}[f(p)] &= \frac{\E_k^t[(1-p)(1-\tilp) \cdot f(p)]}{1 - \mu_k^t - \tils_k^t} \,, \\
    \E_{k+1}^{t+1}[f(p)] &= \frac{1}{\tils_k^t} \E_k^t[(1-p)\tilp \cdot f(p)]
    \,.
\end{align*}
\end{lemma}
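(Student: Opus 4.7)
The plan is to derive both identities directly from Bayes' rule combined with the population update rule from equation~(7) of the main text, $\gP^{t+1}(p) \propto (1-p)\,\gP^t(p)$. First I would write each of the three relevant posteriors as a product of the corresponding marginal and a Binomial likelihood: $\gP_k^t(p) \propto \gP^t(p)\,\tilp^k(1-\tilp)^{t-k}$, and similarly $\gP_k^{t+1}(p) \propto \gP^{t+1}(p)\,\tilp^k(1-\tilp)^{t+1-k}$ and $\gP_{k+1}^{t+1}(p) \propto \gP^{t+1}(p)\,\tilp^{k+1}(1-\tilp)^{t-k}$. Substituting the population update into the two time-$(t+1)$ posteriors and cancelling factors that do not depend on $p$ yields
\begin{equation*}
\gP_k^{t+1}(p) \;\propto\; \gP_k^t(p)\,(1-p)(1-\tilp), \qquad \gP_{k+1}^{t+1}(p) \;\propto\; \gP_k^t(p)\,(1-p)\,\tilp.
\end{equation*}

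The second step is to identify the normalizing constants of these two densities. Since each must integrate to one when viewed as a reweighting of $\gP_k^t$, the constants are $\E_k^t[(1-p)(1-\tilp)]$ and $\E_k^t[(1-p)\tilp]$ respectively. The latter is exactly $\tils_k^t$ by definition. For the former, I would expand $(1-p)(1-\tilp) = 1 - p - (1-p)\tilp$ and take expectation under $\gP_k^t$ to get $1 - \mu_k^t - \tils_k^t$. Consequently,
\begin{equation*}
\gP_k^{t+1}(p) \;=\; \frac{\gP_k^t(p)\,(1-p)(1-\tilp)}{1 - \mu_k^t - \tils_k^t}, \qquad \gP_{k+1}^{t+1}(p) \;=\; \frac{\gP_k^t(p)\,(1-p)\,\tilp}{\tils_k^t}.
\end{equation*}

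Integrating $f(p)$ against each of these densities produces the two claimed identities immediately. The whole argument is a straightforward change-of-measure calculation, and I do not expect any genuine obstacle: no concentration, monotonicity, or sharp inequality is needed. The only mildly non-obvious ingredient is recognizing $1 - \mu_k^t - \tils_k^t$ as the correct normalizing constant for the first density, which reduces to the one-line algebraic identity $\E_k^t[(1-p)(1-\tilp)] = 1 - \mu_k^t - \tils_k^t$ established above.
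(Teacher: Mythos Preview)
Your proposal is correct and follows essentially the same approach as the paper: both arguments write the time-$(t+1)$ posteriors via Bayes' rule and the population update $\gP^{t+1}(p)\propto(1-p)\gP^t(p)$, cancel common factors to express them as reweightings of $\gP_k^t$, and read off the normalizing constants $1-\mu_k^t-\tils_k^t$ and $\tils_k^t$. The only cosmetic difference is that the paper carries the calculation at the level of ratios of expectations throughout, whereas you first derive the density relations and then integrate $f$; the content is identical.
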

\begin{proof}
    Using the updating rule of $\gP^{t+1}(p) \propto \gP^t(p)(1-p)$ from \cref{eq:update_p}, we have
    \begin{align*}
        \E_k^{t+1}[f(p)] &= \frac{\E^{t+1}[f(p) \cdot \tilp^k (1-\tilp)^{t+1-k}]}{\E^{t+1}[\tilp^k (1-\tilp)^{t+1-k}]}
        = \frac{\E^t[(1-p)(1-\tilp) \cdot f(p) \cdot \tilp^k (1-\tilp)^{t-k}]}{\E^t[(1-p)(1-\tilp) \cdot \tilp^k (1-\tilp)^{t-k}]} \\
        &= \frac{\E_k^t[(1-p)(1-\tilp) \cdot f(p)]}{1 - \mu_k^t - \tils_k^t}
        \,.
    \end{align*}
    Using the same techniques, we also obtain
    \begin{align*}
        \E_{k+1}^{t+1}[f(p)] &= \frac{\E^{t+1}[f(p) \cdot \tilp^{k+1} (1-\tilp)^{t-k}]}{\E^{t+1}[\tilp^{k+1} (1-\tilp)^{t-k}]} 
        = \frac{\E^t[(1-p)\tilp \cdot f(p) \cdot \tilp^k (1-\tilp)^{t-k}]}{\E^t[(1-p)\tilp \cdot \tilp^k (1-\tilp)^{t-k}]} \\
        &= \frac{1}{\tils_k^t} \E_k^t[(1-p)\tilp \cdot f(p)]
        \,.
    \end{align*}
\end{proof}

\begin{lemma}[Bounding the effect of one more positive observation on expected utility]
\label{lem:bound_effect_of_one_obs_on_util}
Consider the observation model $o \sim \Ber(\tilp)$, where $\tilp(p)$~is a concave function with $\tilp(0)=0$ and $\tilp(1)=1$. Define $U_k^t \coloneqq \E_k^t[u^t(p)]$ and $\mu_k^t \coloneqq \E_k^t[p]$ as the expected utility and the expected failure probability of an active individual at time~$t$ who has $k$~positive observations. For any concave $(\lambda_1, \lambda_2)$-decaying $L_u(t)$-Lipschitz utility function~$u^t$, we have
\begin{equation*}
    U_{k+1}^{t+1} - U_k^{t+1} \le L_u(t+1) \cdot (\lambda_2 - U_k^{t+1}) \cdot \frac{\mu_{k+1}^{t+1} - \mu_k^{t+1}}{1 - \mu_k^{t+1}}
    \,.
\end{equation*}
\end{lemma}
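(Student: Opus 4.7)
The plan is to reduce the claim to a pointwise inequality derived from the $(\lambda_1,\lambda_2)$-decaying property and then integrate it against the signed difference of the two CDFs. Since $u^{t+1}$ is concave with $u^{t+1}(0)=0$, its Lipschitz constant $L \coloneqq L_u(t+1)$ coincides with $(u^{t+1})'(0)$, and the decaying property reads $(u^{t+1})'(p) \le L(\lambda_2 - u^{t+1}(p))/(1-p)$. I would rewrite this as $d[-\ln(\lambda_2 - u^{t+1})]/dp \le L/(1-p)$, integrate between $p_1$ and $p_2 \ge p_1$ to get $\lambda_2 - u^{t+1}(p_2) \ge (\lambda_2 - u^{t+1}(p_1))\bigl((1-p_2)/(1-p_1)\bigr)^L$, and apply Bernoulli's inequality $(1-x)^L \ge 1 - Lx$ (valid for $L \ge 1$) to extract the key pointwise bound
\begin{equation*}
u^{t+1}(p_2) - u^{t+1}(p_1) \le L\,(\lambda_2 - u^{t+1}(p_1)) \cdot \frac{p_2 - p_1}{1 - p_1}, \qquad p_2 \ge p_1.
\end{equation*}

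Next I would exploit the structure of the two posteriors. Bayes' rule gives $\gP_{k+1}^{t+1}(p)/\gP_k^{t+1}(p) \propto \tilp(p)/(1-\tilp(p))$, which is non-decreasing in $p$, so $\gP_{k+1}^{t+1}$ dominates $\gP_k^{t+1}$ in the monotone-likelihood-ratio sense and $F_k^{t+1}(p) \ge F_{k+1}^{t+1}(p)$ pointwise. Integration by parts, using $u^{t+1}(0) = 0$ and the matching of the CDFs at the endpoints, yields
\begin{equation*}
U_{k+1}^{t+1} - U_k^{t+1} = \int_0^1 (u^{t+1})'(p)\,\bigl(F_k^{t+1} - F_{k+1}^{t+1}\bigr)(p)\,dp,
\end{equation*}
and, analogously, $\mu_{k+1}^{t+1} - \mu_k^{t+1} = \int_0^1 (F_k^{t+1} - F_{k+1}^{t+1})(p)\,dp$. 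Substituting the decaying bound on $(u^{t+1})'$ gives
\begin{equation*}
U_{k+1}^{t+1} - U_k^{t+1} \le L \int_0^1 h(p)\,\bigl(F_k^{t+1} - F_{k+1}^{t+1}\bigr)(p)\,dp, \quad h(p) \coloneqq \frac{\lambda_2 - u^{t+1}(p)}{1-p}.
\end{equation*}

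The main obstacle is then the covariance-style comparison
\begin{equation*}
\int_0^1 h(p)\,(F_k^{t+1} - F_{k+1}^{t+1})(p)\,dp \le \frac{\lambda_2 - U_k^{t+1}}{1 - \mu_k^{t+1}} \int_0^1 (F_k^{t+1} - F_{k+1}^{t+1})(p)\,dp,
\end{equation*}
namely that the weighted average of $h$ against the non-negative kernel $F_k^{t+1} - F_{k+1}^{t+1}$ does not exceed its average against $(1-p)\gP_k^{t+1}(p)$, which equals the target ratio $(\lambda_2 - U_k^{t+1})/(1-\mu_k^{t+1})$. I would argue this by Chebyshev/FKG-type reasoning: concavity of $u^{t+1}$ combined with $(\lambda_1,\lambda_2)$-decaying makes $h$ non-increasing in $p$ (for $L \ge 1$), while the MLR structure forces $F_k^{t+1}-F_{k+1}^{t+1}$ to put relatively more weight at larger $p$ than $(1-p)\gP_k^{t+1}$ does, so integrating a non-increasing $h$ against this comparatively-higher-$p$ weight produces a smaller average. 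Making this opposite-monotonicity comparison rigorous, for example by symmetrizing in two independent $\gP_k^{t+1}$-draws and showing the resulting double integral is non-positive, is the delicate step; once it is established, substitution closes the proof.
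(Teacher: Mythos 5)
Your setup is sound and runs parallel to the paper's: the integration-by-parts identities, the monotone-likelihood-ratio ordering of $\gP_k^{t+1}$ and $\gP_{k+1}^{t+1}$, and the observation that the target ratio $(\lambda_2-U_k^{t+1})/(1-\mu_k^{t+1})$ is the average of $h(p)=\frac{\lambda_2-u^{t+1}(p)}{1-p}$ against the weight $(1-p)\,\gP_k^{t+1}(p)$ are all correct. The gap is exactly the step you flag as delicate, and it cannot be closed the way you propose: the comparison you need rests on $h$ being non-increasing, but the $(\lambda_1,\lambda_2)$-decaying property only gives the \emph{upper} bound $(u^{t+1})'(p)(1-p)\le L(\lambda_2-u^{t+1}(p))$, whereas $h'(p)\le 0$ requires the reverse inequality $(u^{t+1})'(p)(1-p)\ge \lambda_2-u^{t+1}(p)$. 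Concretely, for the paper's own example of an $(\alpha,1)$-decaying utility $u^t(p)=\alpha\big(1-(1-p)^{T-t}\big)$ with $\alpha<1$, one gets $h(p)=\frac{1-\alpha}{1-p}+\alpha(1-p)^{T-t-1}$, which diverges as $p\to 1$; so $h$ is not monotone and your Chebyshev/FKG argument has no footing. (Separately, your first paragraph's Bernoulli step needs $L\ge 1$, which is not assumed, but that pointwise bound is never used downstream, so it is only a loose end.)

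The repair is to delay the substitution of the decaying bound. The paper's proof keeps $(u^{t+1})'$ itself --- which \emph{is} non-increasing, by concavity --- through two Chebyshev-type comparisons: first, writing $U_{k+1}^{t+1}-U_k^{t+1}=\E_k^{t+1}[u^{t+1}(\pi(p))-u^{t+1}(p)]$ for the monotone transport map $\pi$ and using concavity plus the fact that $\pi(p)-p$ is non-negative and non-decreasing (from the MLR property) to get
\begin{equation*}
U_{k+1}^{t+1}-U_k^{t+1}\;\le\;\E_k^{t+1}\big[(u^{t+1})'(p)\big]\cdot\big(\mu_{k+1}^{t+1}-\mu_k^{t+1}\big)\,;
\end{equation*}
second, applying Chebyshev's sum inequality to the two non-increasing functions $(u^{t+1})'(p)$ and $1-p$ to get $\E_k^{t+1}[(u^{t+1})'(p)]\le \E_k^{t+1}[(u^{t+1})'(p)(1-p)]/(1-\mu_k^{t+1})$. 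Only at this point does the decaying property enter, as the pointwise bound $(u^{t+1})'(p)(1-p)\le L_u(t+1)(\lambda_2-u^{t+1}(p))$, which integrates directly to $L_u(t+1)(\lambda_2-U_k^{t+1})$ with no monotonicity of $h$ needed. If you reorder your argument this way, your CDF formulation of the first comparison (integrating the non-increasing $(u^{t+1})'$ against $F_k^{t+1}-F_{k+1}^{t+1}$ versus against $\gP_k^{t+1}$) goes through and the proof closes.
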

\begin{proof}
    Denote the cumulative distribution function corresponding to~$\gP_k^t$ by $\gF_k^t$. The concavity of~$\tilp$ and its boundary values imply $\tilp(p)$ is non-decreasing in~$p$. Then, a straightforward argument shows $\gF_k^{t+1}(p) \le \gF_{k+1}^{t+1}(p)$. Let $\pi: [0, 1] \rightarrow [0, 1]$ be the optimal transport map from~$\gP_k^{t+1}$ to~$\gP_{k+1}^{t+1}$. The definition of~$\pi$ implies
    \begin{equation*}
        U_{k+1}^{t+1} - U_k^{t+1} = \E_k^{t+1}\big[u^{t+1}(\pi(p)) - u^{t+1}(p)\big]
        \,.
    \end{equation*}
    Using the concavity of the utility function, we can upper bound the above by
    \begin{equation*}
        U_{k+1}^{t+1} - U_k^{t+1} \le \E_k^{t+1}\big[(u^{t+1})'(p) \cdot (\pi(p) - p)\big]
        \,.
    \end{equation*}
    Our observation that $\gF_k^{t+1}(p) \leq \gF_{k+1}^{t+1}(p)$ implies that $\pi(p) \geq p$. In other words, the optimal map must always shift the mass to the right. Furthermore, since $\frac{\gP_{k+1}^{t+1}}{\gP_k^{t+1}} \propto \frac{\tilp}{1-\tilp}$ is an increasing function of~$\tilp$ (and therefore~$p$), it follows that $\pi(p) - p$ is non-decreasing in~$p$. The concavity of utility also implies that $(u^{t+1})'(p)$ is non-increasing. Then, Chebyshev's sum inequality allows us to bound $U_{k+1}^{t+1} - U_k^{t+1}$ as the product of two terms:
    \begin{equation}
    \label{eq:_proof_ub_diff_util_with_expected_deriv_util}
        U_{k+1}^{t+1} - U_k^{t+1} \le \E_k^{t+1}[(u^{t+1})'(p)] \cdot \E_k^{t+1}[\pi(p) - p] 
        = \E_k^{t+1}[(u^{t+1})'(p)] \cdot (\mu_{k+1}^{t+1} - \mu_k^{t+1})
        \,.
    \end{equation}
    Since the utility is concave, $(u^{t+1})'$ is non-increasing. This enables us to apply Chebyshev's sum inequality, allowing us to write
    \begin{equation*}
        \E_k^{t+1}[(u^{t+1})'(p) \cdot (1-p)] \ge \E_k^{t+1}[(u^{t+1})'(p)] \cdot (1 - \mu_k^{t+1})
        \,,
    \end{equation*}
    which gives an upper bound on~$\E_k^{t+1}[(u^{t+1})'(p)]$. Now, since the utility function is $(\lambda_1, \lambda_2)$-decaying, we have
    \begin{align*}
        \E_k^{t+1}[(u^{t+1})'(p)] &\le
        \frac{\E_k^{t+1}[(u^{t+1})'(p) \cdot (1-p)]}{1 - \mu_k^{t+1}} \\ 
        &\le 
        \frac{\E^{t+1}\big[ L_u(t+1) \cdot (\lambda_2 - u^{t+1}(p)) \cdot \tilp^k (1-\tilp)^{t+1-k} \big]}
        {\E^{t+1}\big[\tilp^k (1-\tilp)^{t+1-k}\big] \cdot (1 - \mu_k^{t+1})} \\
        &= L_u(t+1) \, \frac{\lambda_2 - U_k^{t+1}}{1 - \mu_k^{t+1}}
        \,.
    \end{align*}
    Plugging this into \cref{eq:_proof_ub_diff_util_with_expected_deriv_util} completes the proof.
\end{proof}

\begin{lemma}[Bounding the effect of one more observation and one more time step on expected failure probability]
\label{lem:bound_effect_of_one_obs_on_mu}
Consider the observation model $o \sim \Ber(\tilp)$, where $\tilp(p) = 1 - (1-p)^\gamma$ and $\gamma > 1$. Suppose the initial distribution~$\gP^1$ is~$G$-decaying. For any~$k < t$, the following inequalities hold:
\begin{align*}
    \mu_{k+1}^t &\ge \mu_{k+1}^{t+1} \,, &&\text{(effect of one more time step)} \\
    \mu_{k+1}^{t+1} - \mu_k^{t+1} & \le \mu_{k+1}^{t+1} (1 - \mu_{k+1}^{t+1}) \Big(\frac{1}{k+1} + \frac{2 + \gamma + \gamma^{-1} + G}{\gamma \, (t-k) + t + 1}\Big) \,. &&\text{(effect of one more observation)}
\end{align*}
\end{lemma}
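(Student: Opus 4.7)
For \textbf{Part 1} (time-step monotonicity), the plan is to apply \cref{lem:increasing_density_ratio_higher_expectation_of_increasing_function}. Write out both posteriors using $\gP^s(p)\propto \gP^1(p)(1-p)^{s-1}$ and the binomial likelihood, so that
\[
\frac{\gP_{k+1}^t(p)}{\gP_{k+1}^{t+1}(p)} \;\propto\; \frac{1}{(1-p)(1-\tilp(p))}.
\]
Since $\tilp$ is non-decreasing in $p$, this ratio is a non-decreasing (continuous) function of $p$. Applying the lemma with $f(p)=p$ non-decreasing immediately yields $\mu_{k+1}^t \ge \mu_{k+1}^{t+1}$.

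For \textbf{Part 2} (one-observation bound), my plan is to change variables to $v=(1-p)^\gamma$, under which the posterior density of $\gP_k^{t+1}$ in $v$ becomes
\[
\tilde\gP_k(v) \;\propto\; h(v^{1/\gamma})\, v^{A_k}(1-v)^k, \qquad A_k = \tfrac{t+1}{\gamma}+t-k,
\]
where $h$ is the density of $q=1-p$ under $\gP^1$. The $G$-decaying hypothesis translates to $h$ non-decreasing with $q\,h'(q)\le G\, h(q)$. Since $\mu_k^{t+1}-\mu_{k+1}^{t+1} = \E_{k+1}[v^{1/\gamma}]-\E_k[v^{1/\gamma}]$, I reduce the problem to controlling the change in the fractional moment $\eta_k := \E_k[v^{1/\gamma}]$ as the shape parameters slide $(A_k,k)\mapsto(A_k-1,k+1)$.

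The main tool is integration by parts on the unnormalised integrals $Z_k=\int h(v^{1/\gamma})v^{A_k}(1-v)^k\,dv$ and $\eta_kZ_k=\int h(v^{1/\gamma})v^{A_k+1/\gamma}(1-v)^k\,dv$, integrating the $(1-v)^k$ factor. This yields
\[
Z_k = \tfrac{A_k}{k+1}Z_{k+1} + \tfrac{E_k}{\gamma(k+1)},\qquad \eta_kZ_k = \tfrac{A_k+1/\gamma}{k+1}\eta_{k+1}Z_{k+1} + \tfrac{\tilde E_k}{\gamma(k+1)},
\]
where $E_k,\tilde E_k$ are error terms involving $h'$. The bound $h'(q)\le G h(q)/q$ gives $0\le E_k\le G Z_{k+1}$ and $0\le \tilde E_k\le G\,\eta_{k+1}Z_{k+1}$. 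Subtracting $\eta_{k+1}$ times the first identity from the second isolates
\[
(\eta_k-\eta_{k+1})Z_k \;=\; \frac{\eta_{k+1}Z_{k+1}}{\gamma(k+1)} + \frac{\tilde E_k - \eta_{k+1}E_k}{\gamma(k+1)},
\]
and the $G$-decay bounds control the second piece. Using $Z_k\ge A_kZ_{k+1}/(k+1)$ on the denominator produces a bound of the form $(1-\mu_{k+1}^{t+1})(1+G)/(\gamma(t-k)+t+1)$, which accounts for the $G$-contribution in the $1/(\gamma(t-k)+t+1)$ term.

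\textbf{The main obstacle} is upgrading the $(1-\mu_{k+1}^{t+1})$ prefactor in the naive bound above to the sharper $\mu_{k+1}^{t+1}(1-\mu_{k+1}^{t+1})$, and capturing the additive $\mu_{k+1}^{t+1}(1-\mu_{k+1}^{t+1})/(k+1)$ term. My plan is to couple the IBP identity with a second complementary estimate that tracks the probability that an individual with $y^{t+1}=k+1$ has $p$ close to $1$: since $\Pr_{k+1}^{t+1}(p\text{ large})\lesssim 1-\mu_{k+1}^{t+1}$ already enters once through $\eta_{k+1}$, a second factor of $\mu_{k+1}^{t+1}$ must be extracted from the ratio $Z_{k+1}/Z_k$ (which is proportional to the marginal likelihood ratio $\Pr(y^{t+1}=k+1)/\Pr(y^{t+1}=k)\times(k+1)/(t-k+1)$). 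Concretely, I would use the binomial identity $(1-v)^{k+1}=(1-v)^k-v(1-v)^k$ to rewrite $\eta_{k+1}Z_{k+1}$ as $\tilde Z-\eta_kZ_k$ (for an auxiliary $\tilde Z$), then bound $\tilde Z$ via one further IBP; the $1/(k+1)$ term arises from the combinatorial factor in this second IBP, and the extra $\mu_{k+1}^{t+1}$ factor comes from the fact that $\tilde Z$ is dominated by $\E_{k+1}[p]\cdot Z_{k+1}$ up to the $G$-controlled error, which requires re-applying $h'(q)\le Gh(q)/q$ in tandem with the derivative $\tilp'(p)=\gamma(1-p)^{\gamma-1}$ to pick up the $\gamma+\gamma^{-1}$ contribution. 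Putting the two complementary bounds together yields the two summands in the claimed inequality.
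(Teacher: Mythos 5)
Your Part 1 is correct and is essentially the paper's argument: both reduce to the monotone density ratio $\gP_{k+1}^t/\gP_{k+1}^{t+1}\propto \big((1-p)(1-\tilp)\big)^{-1}$ and invoke \cref{lem:increasing_density_ratio_higher_expectation_of_increasing_function}.

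For Part 2, your setup is sound and even elegant: the change of variables $v=(1-p)^\gamma$, the integration by parts against $(1-v)^k$, and the identities
\begin{equation*}
Z_k=\tfrac{A_k}{k+1}Z_{k+1}+\tfrac{E_k}{\gamma(k+1)},\qquad
(\eta_k-\eta_{k+1})Z_k=\tfrac{1}{\gamma(k+1)}\big(\eta_{k+1}Z_{k+1}+\tilde E_k-\eta_{k+1}E_k\big)
\end{equation*}
all check out, and they parallel what the paper does directly in the $p$ variable (same use of $\tilp'=\gamma(1-\tilp)/(1-p)$ and the $G$-decay bound on the prior's derivative). The problem is the step after: bounding $\tilde E_k-\eta_{k+1}E_k\le G\,\eta_{k+1}Z_{k+1}$ by the triangle inequality destroys an essential cancellation. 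For the extremal prior $\gP^1=\Beta(1,1+G)$ one has $\tilde E_k=G\,\eta_{k+1}Z_{k+1}$ and $E_k=G\,Z_{k+1}$ \emph{exactly}, so $\tilde E_k-\eta_{k+1}E_k=0$; your crude bound inflates this by $G\,\eta_{k+1}Z_{k+1}$ and yields $\eta_k-\eta_{k+1}\le (1+G)(1-\mu_{k+1}^{t+1})/(\gamma(t-k)+t+1)$, which for that prior exceeds the true difference $1/(G+2t+3)$ by a factor of order $G^2/(2t-k)$. In that regime your bound is strictly \emph{weaker} than the claimed one (the target behaves like $\frac{1}{G}+\frac{k+1}{2t+1-k}$ while yours grows like $\frac{G}{2t+1-k}$), so it does not imply the lemma; the two bounds are incomparable, not "naive versus sharper" as you describe.

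You correctly flag this as the main obstacle, but the proposed fix (a second IBP via $(1-v)^{k+1}=(1-v)^k-v(1-v)^k$) is too vague to assess and omits the ingredient the paper actually uses to close the gap. The paper derives two-sided bounds sandwiching $\mu_k^{t+1}$ between affine functions of $\mu_{k+1}^{t+1}$ with slopes $\frac{\gamma(t-k)+t+2}{\gamma(t-k)+t+G+1}$ and $\frac{\gamma(t-k)+t+G+2}{\gamma(t-k)+t+1}$ (keeping both the $+G$ and $+0$ versions rather than collapsing them), rearranges the lower bound into
\begin{equation*}
\frac{\mu_{k+1}^{t+1}-\mu_k^{t+1}}{\mu_{k+1}^{t+1}(1-\mu_{k+1}^{t+1})}\le\frac{1}{\gamma(t-k)+t+G+1}\Big[\frac{1}{\mu_{k+1}^{t+1}}+\frac{G}{1-\mu_{k+1}^{t+1}}\Big],
\end{equation*}
and then \emph{telescopes the upper bound over} $k'=1,\dots,k+1$ to obtain the quantitative lower bound $\mu_{k+1}^{t+1}\ge (k+1)/\big(\gamma t+t+1+(k+1)(G+1)\big)$. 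Substituting that lower bound into $1/\mu_{k+1}^{t+1}$ is precisely what produces the $1/(k+1)$ summand and the $2+\gamma+\gamma^{-1}+G$ constant. Without an analogous lower bound on $\mu_{k+1}^{t+1}$ (or, equivalently, a genuinely tighter treatment of $\tilde E_k-\eta_{k+1}E_k$ as a covariance that vanishes in the extremal case), your argument does not reach the stated inequality.
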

\begin{proof}
    The proof of the effect of one additional time step is straightforward: Starting from the definition of~$\mu_{k+1}^t$ and using the updating rule $\gP^{t+1}(p) \propto \gP^t(p) (1-p)$ from \cref{eq:update_p}, we have
    \begin{equation*}
        \mu_{k+1}^t = \frac{\E^t[p \cdot \tilp^{k+1} (1-\tilp)^{t-k-1}]}{\E^t[\tilp^{k+1} (1-\tilp)^{t-k-1}]} = \frac{\E_{k+1}^{t+1}[p \cdot (1-p)^{-1} (1-\tilp)^{-1}]}{\E_{k+1}^{t+1}[(1-p)^{-1} (1-\tilp)^{-1}]}
        \,.
    \end{equation*}
    Then, since $(1-\tilp)$~is non-increasing in~$p$, a direct application of \cref{lem:increasing_density_ratio_higher_expectation_of_increasing_function} gives $\mu_k^{t+1} \ge \mu_{k+1}^{t+1}$.

    In the second part, we prove the effect of one more observation. We start by writing~$\mu_k^{t+1}$ as
    \begin{equation}
    \label{eq:_proof_mu_in_terms_of_deriv}
        \mu_k^{t+1} = \frac{\E^{t+1}[p \cdot \tilp^k (1-\tilp)^{t+1-k}]}{\E^{t+1}[\tilp^k (1-\tilp)^{t+1-k}]}
        = \frac{\E^{t+1}[p \cdot \od{}{p}(\tilp^{k+1}) \cdot (1-\tilp)^{t-k} \cdot (1-p)]}{\E^{t+1}[\od{}{p}(\tilp^{k+1}) \cdot (1-\tilp)^{t-k} \cdot (1-p)]}
        \,.
    \end{equation}
    Here, we used the identity $\tilp' = \gamma \frac{1-\tilp}{1-p}$. Using integration by parts, we can write the numerator as 
    \begin{align*}
        \E^{t+1}[p \cdot \od{}{\tilp}(\tilp^{k+1}) \cdot (1-\tilp)^{t-k} \cdot (1-p)] 
        &= p \cdot \tilp^{k+1} (1-\tilp)^{t-k} \cdot \gP^{t+1}(p) (1-p)  \Big|_0^1 \\
        &- \int_0^1 \tilp^{k+1} (1-\tilp)^{t-k} \cdot \gP^{t+1}(p) (1-p) \dif p \\
        &+ \gamma \, (t-k) \int_0^1 p \cdot \tilp^{k+1} (1-\tilp)^{t-k} \cdot \gP^{t+1}(p) \dif p \\
        &- \int_0^1 p \cdot \tilp^{k+1} (1-\tilp)^{t-k} \cdot \od{}{p}\big(\gP^{t+1}(p) (1-p)\big) \dif p
        \,.
    \end{align*}
    Here, we again applied the identity $\tilp' = \gamma \frac{1-\tilp}{1-p}$ to arrive at the third term. The first term is zero. We next simplify and bound the last term. Using the updating rule $\gP^{t+1}(p) \propto \gP^t(p) (1-p)$ from \cref{eq:update_p} we can write
    \begin{equation*}
        \gP^{t+1}(p) (1-p) = \frac{\gP^1(p) (1-p)^{t+1}}{Z^{t+1}}
        \,,
    \end{equation*}
    where $Z^{t+1}$ is a normalizing constant. Taking the derivative with respect to~$p$ and simplifying equations, we obtain
    \begin{equation*}
        \od{}{p}\big(\gP^{t+1}(p) (1-p)\big) = -(t+1) \gP^{t+1}(p) + \frac{(\gP^1)'(p) (1-p)^t}{Z^{t+1}}
        \,.
    \end{equation*}
    Since $\gP^1$ is $G$-decaying, we have
    \begin{equation*}
        -(t+1+G) \gP^{t+1}(p)
        \le \od{}{p}\big(\gP^{t+1}(p) (1-p)\big) 
        \le -(t+1) \gP^{t+1}(p)
        \,.
    \end{equation*}
    Using these bounds in the expansion via integration by parts and simplifying the integrals as expectations, we can impose the following bounds:
    \begin{equation*}
         (\gamma \, (t-k) + t + 2) \, \mu_{k+1}^{t+1} - 1
        \le \E_{k+1}^{t+1}[p \cdot \od{}{\tilp}(\tilp^{k+1}) \cdot (1-\tilp)^{t-k} \cdot (1-p)]
        \le (\gamma \, (t-k) + t + G + 2) \, \mu_{k+1}^{t+1} - 1
        \,.
    \end{equation*}
    Using similar arguments, we can also derive the following bounds:
    \begin{equation*}
        \gamma \, (t-k) + t + 1
        \le \E_{k+1}^{t+1}[\od{}{\tilp}(\tilp^{k+1}) \cdot (1-\tilp)^{t-k} \cdot (1-p)]
        \le \gamma \, (t-k) + t + G + 1
        \,.
    \end{equation*}
    Plugging these bounds into \cref{eq:_proof_mu_in_terms_of_deriv}, we obtain
    \begin{equation}
    \label{eq:_proof_mu_lb_ub}
        \frac{(\gamma \, (t-k) + t + 2) \, \mu_{k+1}^{t+1} - 1}{\gamma \, (t-k) + t + G + 1}
        \le \mu_k^{t+1} 
        \le \frac{(\gamma \, (t-k) + t + G + 2) \, \mu_{k+1}^{t+1} - 1}{\gamma \, (t-k) + t + 1}
        \,.
    \end{equation}
    Using the lower bound from \cref{eq:_proof_mu_lb_ub}, after a straightforward calculation, we obtain
    \begin{equation}
    \label{eq:_proof_ub_diff_mu_in_terms_of_mu}
        \frac{\mu_{k+1}^{t+1} - \mu_k^{t+1}}{\mu_{k+1}^{t+1} \, (1-\mu_{k+1}^{t+1})} 
        \le \frac{1}{\gamma \, (t-k) + t + G + 1} \Big[ \frac{1}{\mu_{k+1}^{t+1}} + \frac{G}{1-\mu_{k+1}^{t+1}} \Big]
        \,.
    \end{equation}
    One can verify that the maximum of the terms inside the brackets occurs only when~$\mu_{k+1}^{t+1}$ reaches its smallest or largest values. Here, we only present the case where~$\mu_{k+1}^{t+1}$ takes its smallest value, but a similar bound will hold when it takes its largest value. Therefore, the last missing piece of the proof is a lower bound on~$\mu_{k+1}^{t+1}$. Using the upper bound from \cref{eq:_proof_mu_lb_ub}, we have
    \begin{equation*}
        \mu_{k+1}^{t+1} \ge \mu_k^{t+1} + \frac{1 - (G+1) \mu_{k+1}^{t+1}}{\gamma \, (t-k) + t + 1} 
        \,.
    \end{equation*}
    Repetitively applying the above operation yields
    \begin{equation*}
        \mu_{k+1}^{t+1} \ge \mu_0^{t+1} 
        + \sum_{k'=1}^{k+1} \frac{1}{\gamma \, (t-k') + \gamma + t + 1}
        - (G+1) \sum_{k'=1}^{k+1} \frac{\mu_{k'}^{t+1}}{\gamma \, (t-k') + \gamma + t + 1}
        \,.
    \end{equation*}
    An implication of \cref{lem:increasing_density_ratio_higher_expectation_of_increasing_function} is $\mu_{k'}^{t+1} \le \mu_{k'+1}^{t+1}$. We also know $\mu_0^{t+1} \ge 0$. Using these, we can obtain the lower bound
    \begin{equation*}
        \mu_{k+1}^{t+1} \ge \frac{S}{1 + (G+1) S}
        \,,
    \end{equation*}
    where $S = \sum_{k'=1}^{k+1} (\gamma \, (t-k') + \gamma + t + 1)^{-1}$. Using the naive bound $S \ge \frac{k+1}{\gamma t + t + 1}$, we can further lower bound~$\mu_{k+1}^{t+1}$ as
    \begin{equation*}
        \mu_{k+1}^{t+1} \ge \frac{k+1}{\gamma t + t + 1 + (k+1)(G+1)}
        \,.
    \end{equation*}
    Plugging this into \cref{eq:_proof_ub_diff_mu_in_terms_of_mu} gives
    \begin{equation*}
        \frac{\mu_{k+1}^{t+1} - \mu_k^{t+1}}{\mu_{k+1}^{t+1} \, (1-\mu_{k+1}^{t+1})} 
        \le \frac{\gamma t + t + 1 + (k+1)(G+1)}{\gamma \, (t-k) + t + G + 1} \Big[ \frac{1}{k+1} + \frac{1}{\gamma t + t + 1 + (k+1)G} \Big]
        \,.
    \end{equation*}
    Without further ado, using $k \leq t$ and simplifying the equations complete the proof:
    \begin{equation*}
        \frac{\mu_{k+1}^{t+1} - \mu_k^{t+1}}{\mu_{k+1}^{t+1} \, (1-\mu_{k+1}^{t+1})} 
        \le \frac{1}{k+1} + \frac{2 + \gamma + \gamma^{-1} + G}{\gamma \, (t-k) + t + 1}
        \,.
    \end{equation*}
\end{proof}


\subsection{Other statements}

\begin{proposition}
\label{prop:G_and_var}
Suppose the distribution~$\gP$ over~$p$ is $G$-decaying according to \cref{def:G_decaying}. This guarantees 
\begin{align*}
    \mu \coloneqq \E[p] &\ge \umu \coloneqq \frac{1}{2 + G} \,, \\
    \sigma^2 \coloneqq \Var[p] &\ge \usigma^2 \coloneqq \frac{2 \mu}{3 + G} - \mu^2 \,.
\end{align*}
Note that $\od{\umu}{G} < 0$ and $\pd{\usigma^2}{G} < 0$. The above inequalities are tight for $\gP = \Beta(1, 1 + G)$.
\end{proposition}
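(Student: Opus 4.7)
The plan is to rewrite the $G$-decaying condition as a monotonicity statement on a rescaled density, and then reduce both bounds to a single sign-change inequality. The lower-bound condition $\gP'(p) \ge -G\gP(p)/(1-p)$ is equivalent to $\frac{d}{dp}\bigl[\gP(p)(1-p)^{-G}\bigr] \ge 0$, so every $G$-decaying density factors as $\gP(p) = (1-p)^G h(p)$ for some non-decreasing $h:[0,1]\to \R_+$. The upper bound $\gP'\le 0$ imposes an extra constraint on $h$, but any inequality derived from monotonicity of $h$ alone will still apply to the narrower $G$-decaying class. Under this parametrization, $\gP = \Beta(1, 1+G)$ corresponds exactly to $h \equiv G+1$ constant, which is what should force equality.

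The reduction step is identical for both bounds: I rewrite each as $\int_0^1 w(p)\,h(p)\dif p \ge 0$ for a carefully chosen weight $w$ with zero integral and one sign change. For the mean, taking $w_1(p) \coloneqq (p - \umu)(1-p)^G$, the Beta integrals $\int_0^1 (1-p)^G\dif p = 1/(G+1)$ and $\int_0^1 p(1-p)^G \dif p = 1/[(G+1)(G+2)]$ yield $\int_0^1 w_1(p)\dif p = 0$, and $w_1$ changes sign exactly once at $p=\umu = 1/(2+G)$. For the variance, the claim $\Var[p] \ge 2\mu/(3+G) - \mu^2$ is equivalent to $\E[p^2]\ge 2\mu/(3+G)$, which becomes $\int_0^1 w_2(p) h(p) \dif p \ge 0$ with $w_2(p)\coloneqq p\bigl(p - \tfrac{2}{3+G}\bigr)(1-p)^G$. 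Using $\int_0^1 p^2(1-p)^G \dif p = 2/[(G+1)(G+2)(G+3)]$, one checks $\int_0^1 w_2(p)\dif p = 0$, with a single sign change at $p = 2/(3+G)$.

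Both reductions are then closed by the same one-line observation: if $w$ has zero integral and changes sign from negative to positive at some $p^\star$, and $h$ is non-decreasing, then
\[
\int_0^1 w(p)\,h(p)\dif p \;=\; \int_0^1 w(p)\bigl[h(p)-h(p^\star)\bigr]\dif p \;\ge\; 0,
\]
since on each side of $p^\star$ the factors $w(p)$ and $h(p)-h(p^\star)$ share the same sign. Tightness on $\gP = \Beta(1, 1+G)$ is immediate because $h$ is constant makes both inequalities equalities. The monotonicity assertions $\od{\umu}{G}<0$ and $\pd{\usigma^2}{G}<0$ follow from direct differentiation: $\umu = 1/(2+G)$ is manifestly decreasing, and $\usigma^2 = 2\mu/(3+G) - \mu^2$ is decreasing in $G$ at fixed $\mu$.

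The only step requiring care is identifying the correct $w_1$ and $w_2$; these are dictated by the requirement that the weights have zero integral so the sign-change lemma fires cleanly. Conceptually there is no obstacle — once one factors out $(1-p)^G$, the problem is a textbook monotone-rearrangement inequality, and the Beta$(1,1+G)$ extremizer is visible as the choice of $h$ that eliminates all slack.
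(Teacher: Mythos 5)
Your proof is correct, and it takes a genuinely different route from the paper's. The paper works directly with the pointwise inequality $\gP(p) \ge -\tfrac{1-p}{G}\od{\gP}{p}$: it multiplies by $p$ (resp.\ $p^2$), integrates by parts, and obtains a self-referential inequality such as $\mu \ge (1-2\mu)/G$ that rearranges to the stated bound. Your argument instead integrates the differential inequality once to factor the density as $\gP(p) = (1-p)^G h(p)$ with $h$ non-decreasing, and then reduces each bound to $\int_0^1 w(p)h(p)\dif p \ge 0$ for a zero-integral weight $w$ with a single sign change, closed by the standard subtract-$h(p^\star)$ trick. Your Beta-moment computations check out ($\int_0^1 p(1-p)^G\dif p = \tfrac{1}{(G+1)(G+2)}$, $\int_0^1 p^2(1-p)^G\dif p = \tfrac{2}{(G+1)(G+2)(G+3)}$), as does the reduction of the variance bound to $\E[p^2] \ge \tfrac{2\mu}{3+G}$. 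What your approach buys: the extremality of $\Beta(1,1+G)$ is structurally transparent (it is exactly the case $h$ constant, which annihilates all slack), it avoids dividing by $G$ and so covers $G=0$ without a limiting argument (the paper's $-\tfrac{1}{G}$ step formally degenerates there), and it makes explicit that only the lower half of the $G$-decaying condition is used --- a fact that is also true of, but less visible in, the paper's proof. What the paper's approach buys is brevity: two integrations by parts and two rearrangements, with no need to identify the weights $w_1$, $w_2$ or invoke a sign-change lemma. Both correctly conclude the monotonicity claims by direct differentiation of $\umu$ and $\usigma^2$ in $G$ at fixed $\mu$.
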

\begin{proof}
    Using $G$-decaying property of~$\gP$ and integrating by parts, we obtain
    \begin{align*}
        \mu = \int_0^1 p \, \gP(p) \dif p &\ge -\frac{1}{G} \int_0^1 (1-p) p \, \od{\gP}{p} \dif p \\
        &= -\frac{1}{G} (1-p) p \gP(p) \Big|_0^1 + \frac{1}{G} \E\big[\od{(1-p)p}{p}\big] \\
        &= \frac{1 - 2\mu}{G}
        \,.
    \end{align*}
    Rearranging the terms proves $\mu \ge \umu \coloneqq \frac{1}{2 + G}$. 

    Similarly, using the $G$-decaying property of~$\gP$ and integrating by parts again, we have
    \begin{align*}
        \sigma^2 + \mu^2 = \E[p^2] = \int_0^1 p^2 \, \gP(p) \dif p &\ge -\frac{1}{G} \int_0^1 (1-p) p^2 \od{\gP}{p} \dif p \\
        &= -\frac{1}{G} (1-p) p^2 \gP(p) \Big|_0^1 + \frac{1}{G} \E\big[\od{(1-p)p^2}{p}\big] \\ 
        &= \frac{2\mu - 3 \E[p^2]}{G}
        \,.
    \end{align*}
    Rearranging the terms, we obtain
    \begin{equation*}
        \E[p^2] \ge \frac{2 \mu}{3 + G}
        \,.
    \end{equation*}
    This implies~$\sigma^2 \ge \usigma^2 \coloneqq \frac{2 \mu}{3 + G} - \mu^2$. 
\end{proof}

\begin{proposition}
\label{prop:examples_of_decaying_util}
The following notions of utility fit into our definition of $(\lambda_1, \lambda_2)$-decaying utilities:
\begin{enumerate}
    \item If the treatment is fully effective, the utility function is $(1, 1)$-decaying.

    \item If the treatment succeeds with probability~$c$ and otherwise fails the individual, as in the case of a risky medical procedure, the utility function is $(c, c)$-decaying.

    \item If the treatment succeeds with probability~$\alpha$ and otherwise has no effect, the utility function is $(\alpha, 1)$-decaying.

    \item If the treatment reduced failure probability from~$p$ to~$p/\gamma$ for~$\gamma > 1$, the utility function is $\big((1-\gamma^{-1})^{T - t}, 1\big)$-decaying.
\end{enumerate}
\end{proposition}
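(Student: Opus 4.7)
The proposition is a four-item checklist: for each candidate utility $u^t(p)$, I compute the closed form and verify the two inequalities of \cref{def:decaying_util} by direct algebra. The common skeleton is: (i) derive $u^t(p)$ as a reduction in expected failure (``failure prob.\ without treatment'' minus ``failure prob.\ with treatment''), which automatically gives a non-decreasing, non-increasing-in-$t$ function with $u^t(0)=0$ (except Case 2, where we work with the centered version); (ii) compute $u^{t+1}(p)$ and substitute into the ``bounded decrease over time'' inequality; (iii) compute $(u^t)'(p)$ and $(u^t)'(0)$ and substitute into the ``bounded increase with $p$'' inequality.

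\textbf{Cases 1--3.} For the fully effective treatment, $u^t(p)=1-(1-p)^{T-t}$. A one-line identity shows $\frac{u^t(p)-p}{1-p}=1-(1-p)^{T-t-1}=u^{t+1}(p)$ and $(u^t)'(p)=(T-t)(1-p)^{T-t-1}=\frac{1-u^t(p)}{1-p}(u^t)'(0)$, so both inequalities hold with equality and $\lambda_1=\lambda_2=1$. For the risky treatment (Case 2), the expected reduction in failure probability is $u^t(p)=c-(1-p)^{T-t}$; the same two identities carry through with the constant $c$ replacing $1$ on the right-hand sides, giving $\lambda_1=\lambda_2=c$. For the partially effective treatment (Case 3), $u^t(p)=\alpha(1-(1-p)^{T-t})$; the first inequality is again an equality with $\lambda_1=\alpha$ by pulling out the factor $\alpha$, and the second inequality with $\lambda_2=1$ reduces, after multiplying through by $(1-p)/[\alpha(T-t)]$, to $(1-\alpha)(1-p)^{T-t}\le 1-\alpha$, which is immediate for $p\in[0,1]$ and $\alpha\le 1$. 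The smaller candidate $\lambda_2=\alpha$ fails exactly here, which explains why $\lambda_2$ must be $1$.

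\textbf{Case 4.} The utility is $u^t(p)=(1-p/\gamma)^{T-t}-(1-p)^{T-t}$. For the first condition, I rewrite
\[
\frac{u^t(p)-\lambda_1 p}{1-p}-u^{t+1}(p)
=\frac{(1-p/\gamma)^{T-t}-(1-p)(1-p/\gamma)^{T-t-1}-\lambda_1 p}{1-p}
=\frac{p\big[(1-\gamma^{-1})(1-p/\gamma)^{T-t-1}-\lambda_1\big]}{1-p},
\]
which is non-negative as long as $\lambda_1\le (1-\gamma^{-1})(1-p/\gamma)^{T-t-1}$ for all $p\in[0,1]$; the minimum over $p$ is attained at $p=1$ and equals $(1-\gamma^{-1})^{T-t}$, matching the claimed $\lambda_1$. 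For the second condition with $\lambda_2=1$, I set $a\coloneqq (1-p)^{T-t}$, $b\coloneqq(1-p/\gamma)^{T-t}$, so that $u^t=b-a$ and $(u^t)'(0)=(T-t)(1-\gamma^{-1})$, and reduce the inequality (after multiplying by $1-p$) to
\[
a + \Big[(\gamma-1)(1-p/\gamma)-(1-p)\Big](1-p/\gamma)^{T-t-1} \;\le\; \gamma-1,
\]
i.e.\ $a+(\gamma-2+p/\gamma)(1-p/\gamma)^{T-t-1}\le \gamma-1$. At $p=0$ this is equality, so it suffices to show the left-hand side is non-increasing in $p$ on $[0,1]$; I will check this by differentiating in $p$ and bundling the resulting terms into a form that is manifestly non-positive using $(1-p)\le (1-p/\gamma)$ and $\gamma>1$.

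\textbf{Main obstacle.} Cases 1--3 are essentially one-line verifications; the only real work is in Case 4's second inequality, where the mix of two different powers $(1-p)^{T-t}$ and $(1-p/\gamma)^{T-t-1}$ prevents a clean cancellation. I expect the cleanest route is the monotonicity-in-$p$ argument above; a back-up is to observe that both sides of the target inequality are polynomials in $p$ that agree at $p=0$ with matching derivatives, and then control the remainder by induction on $T-t$, reducing to the $T-t=1$ base case which is linear and transparent.
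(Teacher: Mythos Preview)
Cases 1--3 and the first inequality of Case 4 match the paper's proof essentially line for line, so no issues there. The gap is in your plan for the second inequality of Case 4.

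Your reduction to
\[
a + (\gamma - 2 + p/\gamma)(1-p/\gamma)^{T-t-1} \;\le\; \gamma - 1
\]
is correct, and it does hold with equality at $p=0$. But the claim that the left-hand side is non-increasing in $p$ on $[0,1]$ is \emph{false}. Take $T-t=2$ and $\gamma=3/2$: the left-hand side becomes $0.5 - p + \tfrac{5}{9}p^2$, whose derivative $-1+\tfrac{10}{9}p$ is positive on $(0.9,1]$. The inequality still holds (the value at $p=1$ is $1/18<0.5$), but your monotonicity route cannot prove it. Your backup about ``matching derivatives at $p=0$'' also doesn't hold: the derivative of the left-hand side at $p=0$ is $-(2(T-t)-1)(1-\gamma^{-1})\neq 0$.

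The paper's fix is a one-line regrouping that is already implicit in your display. Write $\gamma-2+p/\gamma=(\gamma-1)-(1-p/\gamma)$, so with $b=(1-p/\gamma)^{T-t}$ and $B'=(1-p/\gamma)^{T-t-1}$ your inequality becomes
\[
(a-b) \;\le\; (\gamma-1)\bigl(1-B'\bigr).
\]
The left side is $\le 0$ because $1-p\le 1-p/\gamma$, and the right side is $\ge 0$; done. Equivalently, the paper shows $(1-p)\tfrac{(u^t)'(p)}{(u^t)'(0)}+u^t(p)=\tfrac{\gamma^{-1}(a-b)}{1-\gamma^{-1}}+B'$, which is manifestly $\le 1$ for the same reason. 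Replacing your differentiation plan with this regrouping closes the gap.
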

\begin{proof}
    The proof follows by plugging $u^t(p)$ of each case into \cref{def:decaying_util}:

    \begin{enumerate}
        \item In case of fully effective treatments, $u^t(p) = 1 - (1-p)^{T-t}$. The decrease in utility over time is 
        \begin{equation*}
            u^t(p) - u^{t+1}(p) \, (1 - p) = p
            \,.
        \end{equation*}
        Therefore, $\lambda_1 \le 1$. The increase in utility with~$p$ is
        \begin{equation*}
            (1-p) \frac{(u^t)'(p)}{(u^t)'(0)} + u^t(p) = (1-p)^{T-t} + u^t(p) = 1
            \,.
        \end{equation*}
        So, $\lambda_2 \ge 1$. Putting these together, $u^t(p)$ is $(1, 1)$-decaying.

        \item When treatment succeeds with probability~$c$ and fails otherwise, $u^t(p) = c - (1-p)^{T-t}$. The decrease in utility over time is 
        \begin{equation*}
            u^t(p) - u^{t+1}(p) \, (1 - p) = c \, p
            \,.
        \end{equation*}
        Therefore, $\lambda_1 \le c$. The increase in utility with~$p$ is
        \begin{equation*}
            (1-p) \frac{(u^t)'(p)}{(u^t)'(0)} + u^t(p) = (1-p)^{T-t} + u^t(p) = c
            \,.
        \end{equation*}
        Hence, $\lambda_2 \ge c$. Putting these together, $u^t(p)$ is $(c, c)$-decaying.

        \item When treatment succeeds with probability~$\alpha$ and has no effect otherwise, $u^t(p) = \alpha + (1 - \alpha)(1-p)^{T-t} - (1-p)^{T-t} = \alpha - \alpha \, (1-p)^{T-t}$. The decrease in utility over time is 
        \begin{equation*}
            u^t(p) - u^{t+1}(p) \, (1 - p) = \alpha - \alpha \, (1-p) = \alpha \, p
            \,.
        \end{equation*}
        So, $\lambda_1 \le \alpha$. The increase in utility with~$p$ is bounded by
        \begin{equation*}
            (1-p) \frac{(u^t)'(p)}{(u^t)'(0)} + u^t(p) = (1-p)^{T-t} + u^t(p) = (1 - \alpha) (1 - p)^{T-t} + \alpha \le 1
            \,.
        \end{equation*}
        Hence, $\lambda_2 \ge 1$. Putting these together, $u^t(p)$ is $(\alpha, 1)$-decaying.

        \item In case treatment reduces~$p$ to~$p/\gamma$, we have $u^t(p) = (1 - p/\gamma)^{T-t} - (1-p)^{T-t}$. The decrease in utility over time is bounded by 
        \begin{align*}
            u^t(p) - u^{t+1}(p) \, (1 - p) 
            &= (1 - p/\gamma)^{T-t} - (1-p)(1 - p/\gamma)^{T-t-1} \\
            &= (1 - p/\gamma)^{T-t-1}(1 - 1/\gamma) \, p \ge (1 - 1/\gamma)^{T-t} \, p
            \,.
        \end{align*}
        Hence, $\lambda_1 \le (1-\gamma^{-1})^{T - t}$. The increase in utility with~$p$ is bounded by
        \begin{align*}
            &(1-p) \frac{(u^t)'(p)}{(u^t)'(0)} + u^t(p) \\
            \;&= (1-p) \frac{(1-p)^{T-t-1} - \frac{1}{\gamma}(1 - p/\gamma)^{T-t-1}}{1 - 1/\gamma} + (1 - p/\gamma)^{T-t} - (1-p)^{T-t} \\
            &= \frac{\frac{1}{\gamma} (1-p)^{T-t} - (1 - p/\gamma)^{T-t-1}\big[1/\gamma - p/\gamma - (1 - 1/\gamma)(1 - p/\gamma)\big]}{1 - 1/\gamma} \\
            &= \frac{\frac{1}{\gamma} (1-p)^{T-t} -\frac{1}{\gamma} (1 - p/\gamma)^{T-t}}{1 - 1/\gamma} + (1-p/\gamma)^{T-t-1} \le 1
            \,.
        \end{align*}
        So, $\lambda_2 \ge 1$. Putting these together, $u^t(p)$ is $\big((1-\gamma^{-1})^{T - t}, 1\big)$-decaying.
    \end{enumerate}
\end{proof}

\begin{lemma}
\label{lem:upper_bound_k}
Consider the observation model $o \sim \Ber(\tilp)$, where $\tilp(p) = 1 - (1-p)^\gamma$ and $\gamma > 1$. Suppose the initial distribution~$\gP^1$ is non-increasing. Let $l^t$ be the smallest~$y_i^t$ such that individual~$i$ would be treated at~$t$ given a budget of~$B$ to be spent at~$t$. We have
\begin{equation*}
    l^t \le (\gamma + 1) \cdot \ln\big(\frac{N}{B}\big)
    \,.
\end{equation*}
\end{lemma}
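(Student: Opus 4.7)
The idea is to show $\mathbb{E}[N^t_{\ge k+1}] \le B$ whenever $k+1 \ge (\gamma+1)\ln(N/B)$, where $N^t_{\ge k+1} \coloneqq |\{i \in \gA^t : y_i^t \ge k+1\}|$. Since $l^t \le k$ iff $N^t_{\ge k+1} < B$, and concentration in the many-individuals limit (the regime in which the paper's results are interpreted) converts the expectation bound into an almost-sure one, this suffices. By linearity of expectation, using that $(1-p)^{t-1}$ is the probability an individual with failure probability $p$ survives to time $t$ and that $y^t \mid \text{active} \sim \mathrm{Bin}(t, \tilp(p))$,
\[
\mathbb{E}[N^t_{\ge k+1}] \;=\; N \cdot \E_{p \sim \gP^1}\bigl[(1-p)^{t-1}\, \Pr(\mathrm{Bin}(t,\tilp) \ge k+1)\bigr].
\]
(The non-increasing assumption on $\gP^1$ is used to justify the concentration step and, more subtly, to exclude pathological priors that place mass on isolated extreme values of $p$.)

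The main step is to apply Chernoff's inequality to the binomial tail with the calibrated parameter $\lambda = \ln\bigl((\gamma+1)/\gamma\bigr)$:
\[
\Pr(\mathrm{Bin}(t,\tilp) \ge k+1) \;\le\; \Bigl(\tfrac{\gamma}{\gamma+1}\Bigr)^{k+1} \Bigl(1 + \tfrac{\tilp}{\gamma}\Bigr)^t,
\]
and then to control the combined factor $(1-p)^{t-1}(1+\tilp/\gamma)^t$ uniformly. Substituting $\tilp = 1-(1-p)^\gamma$ and writing $q = 1-p$, this factor becomes $q^{t-1}\bigl((\gamma+1-q^\gamma)/\gamma\bigr)^t$. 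Its logarithmic derivative yields the interior critical point $q^{*\gamma} = (t-1)(\gamma+1)/((t-1)+t\gamma)$; plugging back and applying entropy-style estimates (analogous to those used elsewhere in the paper's appendix) shows the maximum value is bounded uniformly in $t \ge 1$ by a constant $C$ depending only on $\gamma$, with $C \to 1$ as $t \to \infty$. This uniform bound is the main technical obstacle, since the naive pointwise bound $q^{t-1}$ times $(1+\tilp/\gamma)^t$ can a priori grow with $t$; it works out only because $\lambda$ was tuned to the specific form of $\tilp$.

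Combining the two bounds gives $\mathbb{E}[N^t_{\ge k+1}] \le C \cdot N \cdot (\gamma/(\gamma+1))^{k+1}$. Finally, invoking the standard inequality $\ln(1+1/\gamma) \ge 1/(\gamma+1)$, which follows from $\ln(1+x) \ge x/(1+x)$ applied at $x = 1/\gamma$, we obtain $\bigl((\gamma+1)/\gamma\bigr)^{k+1} \ge e^{(k+1)/(\gamma+1)}$. Setting $k+1 = (\gamma+1)\ln(N/B)$ then yields $(\gamma/(\gamma+1))^{k+1} \le B/N$, so the expected count is at most $CB$; the constant $C$ can be absorbed into the slack of the $(\gamma+1)$ coefficient (which is itself not tight), giving $l^t \le (\gamma+1)\ln(N/B)$ as claimed. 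Alternative routes based on Markov's inequality applied to factorial moments of $y^t$ (combined with $\tilp \le \gamma p$ and the identity $(1-\tilp) = (1-p)^\gamma$) give comparable bounds but with messier bookkeeping via Beta functions; the Chernoff route above is the cleanest.
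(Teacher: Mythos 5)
Your route is genuinely different from the paper's: you bound the time-$t$ tail count directly via a Chernoff bound on $\mathrm{Bin}(t,\tilp)$ with the tuned parameter $e^{\lambda}=1+1/\gamma$, whereas the paper proves a one-step recursion $\sum_{k'\ge k+1}N_{k'}^{t}\le\frac{\gamma}{\gamma+1}\sum_{k'\ge k}N_{k'}^{t-1}$ (via an identity for $\od{}{\tilp}b(k,t)$ and integration by parts against $\tilgP^t(\tilp)(1-\tilp)$) and iterates it down to $t-k$. Your steps up to the uniform bound are correct: the critical-point computation does give $\max_{q}q^{t-1}\bigl((\gamma+1-q^{\gamma})/\gamma\bigr)^{t}\le e^{1/(t(\gamma+1)-1)}\le e^{1/\gamma}=:C$, so $\E[N^t_{\ge k+1}]\le C\,N\,(\gamma/(\gamma+1))^{k+1}$ for \emph{any} prior.

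The gap is the final absorption step. From $B\le N^t_{\ge k}\le C\,N\,(\gamma/(\gamma+1))^{k}$ with $k=l^t$ you get $k\le\bigl(\ln(N/B)+\ln C\bigr)/\ln(1+1/\gamma)$, and the claim is that the strict inequality $\ln(1+1/\gamma)>\tfrac{1}{\gamma+1}$ eats the $\ln C=1/\gamma$ term. It does not, uniformly: the slack is $\epsilon_\gamma\coloneqq(\gamma+1)\ln(1+1/\gamma)-1=\Theta(1/\gamma)$ per multiplicative unit of $\ln(N/B)$, so absorption requires $\epsilon_\gamma\ln(N/B)\ge(\gamma+1)/\gamma\cdot\ln C$, i.e.\ roughly $\ln(N/B)\gtrsim 2$. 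For a moderate budget (say $B/N=1/4$) your argument only yields $l^t\le(\gamma+1)\ln(N/B)+O(1)$, not the stated bound. The deeper reason the constant cannot be removed on your route is that your bound is pointwise in $p$: already at $t=1$, $k=0$ the quantity $(1-p)^{0}\Pr(\mathrm{Bin}(1,\tilp)\ge1)=\tilp$ exceeds $\gamma/(\gamma+1)$ for $p$ near $1$, so no pointwise estimate can give $C=1$. The paper gets $C=1$ precisely by using the non-increasing prior inside the expectation (the sign of $(\gP^t)'$ enters the integration by parts); this is where that hypothesis does its work, not in the concentration step as you suggest. So either strengthen the conclusion you are willing to claim to $(\gamma+1)\ln(N/B)+O(1)$, or replace the pointwise Chernoff estimate with an averaged one that exploits monotonicity of $\gP^1$.
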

\begin{proof}
    For notational brevity, let $k=l^t$. Define the complementary cumulative distribution function (CCDF) of a binomial random variable by
    \begin{equation*}
        b(k, t) \coloneqq \sum_{k'=k+1}^t {t \choose k'} \tilp^{k'} (1-\tilp)^{t-k'}
        \,.
    \end{equation*}
    Note that $b(\cdot, \cdot)$ implicitly depends on~$p$, which we have omitted from the notation for brevity. The following identity will be proved to be useful:
    \begin{align*}
        \od{b(k, t)}{\tilp} &= \sum_{k'=k+1}^t {t \choose k'} k' \cdot \tilp^{k'-1} (1-\tilp)^{t-k'}
        - \sum_{k'=k+1}^{t-1} {t \choose k'} (t-k') \cdot \tilp^{k'} (1-\tilp)^{t-k'-1} \\
        &= t \cdot \tilp^{t-1} + \sum_{k'=k+1}^{t-1} {t \choose k'} (k' - t \tilp) \cdot \tilp^{k'-1} (1-\tilp)^{t-k'-1} \\
        &= t \cdot \tilp^{t-1} + t \cdot \sum_{k'=k+1}^{t-1} {t-1 \choose k'-1} \tilp^{k'-1} (1-\tilp)^{t-k'-1}
        - t \cdot \sum_{k'=k+1}^{t-1} {t \choose k'} \tilp^{k'} (1-\tilp)^{t-k'-1} \\
        &= t \cdot \tilp^{t-1} + \frac{t}{1-\tilp} \cdot \big( b(k-1, t-1) - \tilp^{t-1} \big)
        - \frac{t}{1-\tilp} \big( b(k, t) - \tilp^t \big) \\
        &= \frac{t}{1-\tilp} \cdot \big( b(k-1, t-1) - b(k, t) \big)
        \,.
    \end{align*}
    Intuitively, the above identity relates the derivative of the CCDF with respect to~$\tilp$ to its finite difference across time. The CCDF is also related to the number of individuals with~$y^t > k$. Denoting the number of individuals with~$y^t=k'$ by~$N_{k'}^t$, we can write:
    \begin{equation*}
        \sum_{k'=k+1}^{t} N_{k'}^t = N^t \cdot \E^t[b(k, t)]
        \,.
    \end{equation*}
    Using the two identities presented above,  we have
    \begin{align*}
        \sum_{k'=k}^{t-1} N_{k'}^{t-1} &- \sum_{k'=k+1}^t N_{k'}^t = N^{t-1} \cdot \E^{t-1}[b(k-1, t-1)] - N^t \cdot \E^t[b(k, t)] \\
        &\;= N^{t-1} \cdot \E^{t-1}[b(k, t)] - N^t \cdot \E^t[b(k, t)] + \frac{1}{t} N^{t-1} \cdot \E^{t-1}[(1-\tilp) \od{b(k, t)}{\tilp}] 
        \,.
    \end{align*}
    Now, applying the updating rule $\gP^{t+1}(p) \propto \gP^t(p) (1-p)$ from \cref{eq:update_p}, we obtain
    \begin{align}
    \label{eq:_proof_diff_ccdf}
        \sum_{k'=k}^{t-1} N_{k'}^{t-1} - \sum_{k'=k+1}^t N_{k'}^t
        &= N^t \cdot \E^t[\frac{p}{1-p} b(k, t)] + \frac{1}{t} N^t \cdot \E^t[\big(\frac{1-\tilp}{1-p}\big) \od{b(k, t)}{\tilp}] \nonumber \\
        &\ge \frac{1}{t} N^t \cdot \E^t[(1-\tilp) \od{b(k, t)}{\tilp}] 
        \,.
    \end{align}
    Using integration by parts, we can expand the expectation:
    \begin{equation*}
        \E^t[(1-\tilp) \od{b(k, t)}{\tilp}] = \tilgP^t(\tilp) (1-\tilp) \cdot b(k, t) \Big|_0^1 - \int_0^1 b(k, t) \cdot \od{}{\tilp}\big(\tilgP^t(\tilp) (1-\tilp)\big) \dif \tilp 
        \,.
    \end{equation*}
    Here, $\tilgP$~denotes the distribution over~$\tilp$. For $k < t$, the first term above is zero. To bound the second term, note that $\tilgP^t(\tilp) = \frac{\gP^t(p)}{\tilp'}$. Then, using the identity $\tilp' = \gamma \big(\frac{1 - \tilp}{1 - p}\big)$, we have
    \begin{equation*}
        \od{}{\tilp}\big(\tilgP^t(\tilp) (1-\tilp)\big) = \frac{1}{\gamma \, \tilp'} \od{}{p}\big(\gP^t(p) (1-p)\big) \le - \frac{t}{\gamma} \, \tilgP^t(\tilp) 
        \,.
    \end{equation*}
    We used $(\gP^t)'(p) \le 0$ to arrive at the above inequality. Plugging this into \cref{eq:_proof_diff_ccdf} and doing a straightforward calculation, we obtain
    \begin{equation*}
        \big(\frac{\gamma}{\gamma + 1}\big) \sum_{k'=k}^{t-1} N_{k'}^{t-1} \ge \sum_{k'=k+1}^t N_{k'}^t
        \,.
    \end{equation*}
    By repetitively applying such inequalities, we have
    \begin{equation*}
        \big(\frac{\gamma}{\gamma + 1}\big)^{k} \, \sum_{k'=0}^{t-k} N_{k'}^t \ge \sum_{k'=k}^t N_{k'}^t
        \,.
    \end{equation*}
    Since the last individual who is treated at~$t$ has $y^t=k$, the right-hand side is lower bounded by~$B$. The sum in the left-hand side is also bounded by the total number of initial individuals. These yield the following bound on~$k$:
    \begin{equation*}
        k \le \frac{\ln(\frac{N}{B})}{\ln(1+\gamma^{-1})}
        \,.
    \end{equation*}
    Using $\ln(1+x) \ge \frac{x}{1+x}$ completes the proof. 
\end{proof}



\section{Missing proofs}
\printProofs

\end{document}